\theoremstyle{plain}
\newtheorem{theorem}{Theorem}
\newtheorem{corollary}{Corollary}
\newtheorem{lemma}{Lemma}
\newtheorem{assumption}{Assumption}
\newtheorem{remark}{Remark}
\newcommand{\smartparagraph}[1]{{\bf #1}\ }
\providecommand{\dodraft}{true}
        \newcommand{\mcnote}[1]{{\color{purple}{{\bf MC: #1} }}}
        \newcommand{\mcnote}[1]{}
\newcommand{\BibTeX}{B\kern-.05em{\sc i\kern-.025em b}\kern-.08em\TeX}
\begin{document}


\begin{frontmatter}


\paperid{1591} 


\title{FilFL: Client Filtering for Optimized Client Participation in Federated Learning}


\author[A]{\fnms{Fares}~\snm{Fourati}\footnote{\textbf{Equal contribution.} Corresponding authors. \\Emails: salma.kharrat@kaust.edu.sa and fares.fourati@kaust.edu.sa.}}
\author[A]{\fnms{Salma}~\snm{Kharrat}\footnotemark}
\author[A,B]{\fnms{\\Vaneet}~\snm{Aggarwal}} 
\author[A]{\fnms{Mohamed-Slim}~\snm{Alouini}} 
\author[A]{\fnms{Marco}~\snm{Canini}} 


\address[A]{KAUST}
\address[B]{Purdue University}


\begin{abstract}
Federated learning, an emerging machine learning paradigm, enables clients to collaboratively train a model without exchanging local data. Clients participating in the training process significantly impact the convergence rate, learning efficiency, and model generalization. We propose a novel approach, client filtering, to improve model generalization and optimize client participation and training. The proposed method periodically filters available clients to identify a subset that maximizes a combinatorial objective function with an efficient greedy filtering algorithm. Thus, the clients are assessed as a combination rather than individually. We theoretically analyze the convergence of federated learning with \textit{client filtering} in heterogeneous settings and evaluate its performance across diverse vision and language tasks, including realistic scenarios with time-varying client availability. Our empirical results demonstrate several benefits of our approach, including improved learning efficiency, faster convergence, and up to 10\% higher test accuracy than training without client filtering.
\end{abstract}
\end{frontmatter}


\section{Introduction}
Federated learning (FL) is an emerging machine learning paradigm that enables collaborative training across multiple clients while preserving their local data privacy \cite{konevcny2015federated, shokri2015privacy, konevcny2016federated, konevcny2017stochastic, li2020federated}. The most commonly used approach in this setting, federated averaging (FedAvg) \cite{mcmahan2017communication}, alternates between local training and server aggregation and broadcasts the latest version of the global model. However, FL faces various challenges,\footnote{Although privacy is not the primary concern of this work, it remains a significant challenge in FL. However, conventional techniques like differential privacy and secure multiparty computation could be used in conjunction with our proposed method.} such as training with many clients and data heterogeneity, where the clients' data are non-IID, i.e., different clients have different data distributions \cite{Bonawitz19, HosseinalipourMag2020, Papaya, ganguly2023multi, wang2023towards}. 

Recent works have analyzed the effect of data heterogeneity on the convergence of local-update stochastic gradient descent (SGD) \cite{reddi2020adaptive, haddadpour2019convergence, khaled2020tighter, stich2019error, woodworth2020local, koloskova2020unified, huo2020faster, zhang2020fedpd, pathak2020fedsplit, malinovskiy2020local, li2020federated, Abdelmoniem.EuroMLSys22}. Such heterogeneity leads to unstable and slow convergence \cite{li2020federated}, resulting in suboptimal or even detrimental model performance \cite{subopt}. This occurs because the data distributions on the clients may differ significantly from the global distribution, causing clients to converge towards their local optima rather than the global optimum, refer to Appendix B \cite{fourati2023filfl} for more details. Furthermore, given communication constraints, training with all clients may not be possible; previous works have considered client selection schemes that select a subset $\mathcal{A}_t$ of $K$ clients from a total of $N$ clients to participate at each training round $t$. Although client selection methods address communication constraints and make the training more practical, they also increase the challenge of managing heterogeneity. Refer to Appendix B \cite{fourati2023filfl} for an extended related work. 

\begin{figure}[t]
\centering
\includegraphics[scale=0.5]{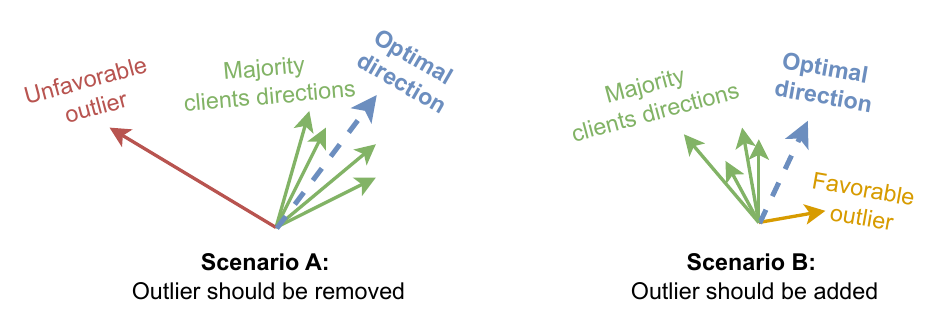}
  \caption{Visualization of two scenarios with different suggested descent directions from different clients. Arrows are color-coded to indicate the quality of direction: blue for optimal, orange for favorable outlier, green for majority consensus, and red for unfavorable outlier.}
\label{fig:vectors}
\vspace{18pt}
\end{figure}

To address the aforementioned FL challenges, various client selection schemes have been proposed in earlier studies. Some aim to provide unbiased estimations of the gradients that would result from full participation, such as sampling based on the number of local data points \cite{li2019convergence} or sampling uniformly at random with weighted updates (RS) \cite{li2020federated}. While these approaches approximate full participation, they are not explicitly designed to accelerate the training process. Other schemes select subsets of clients that carry representative gradient information for full participation by encouraging diverse gradient selections (DivFL) \cite{balakrishnan2021diverse}. However, promoting diversity may also include unfavorable outlier gradients. Additional strategies explicitly aimed at accelerating training include selecting clients with higher update norms more frequently \cite{chen2020optimal} or employing a power-of-choice (PoC) method that biases selection towards clients with higher local losses \cite{cho2020client}. However, these approaches consider clients separately rather than as part of a collaborative unit, i.e., they make decisions based on individual performances without considering their collaborative performance at the current stage of the training process. 


%
%

Assessing clients based on their collaborative performance is essential to optimize client participation beyond mere element-wise selection. Considering gradients from collaboratively-unfavorable clients or excluding collaboratively-favorable ones can lead to degraded collaborative performance. To illustrate this, consider the simplified example depicted in Fig. \ref{fig:vectors}, where we illustrate two possible client combination scenarios. In these scenarios, the blue arrow represents an oracle for the optimal descent direction. In scenario A (Fig. \ref{fig:vectors}), the red arrow, having a significantly different direction and larger norm than most other directions, might be selected by methods that prioritize directions with larger norms or that encourage diversity. However, excluding the red arrow and keeping the green arrows can lead to a better approximation of the optimal descent direction. In scenario B (Fig.\ref{fig:vectors}), while the orange direction differs from the majority of directions (green) and may be overlooked by methods that rely on similarity metrics between gradients or due to its small norm, its inclusion—based on its contribution to the subset of green directions—leads to a better approximation of the optimal direction. These examples highlight the importance of assessing collaboration when choosing clients.

In this work, we include combinatorial optimization in the standard FL training to optimize client participation further. We introduce FilFL, which includes a \textit{client filtering} procedure that looks for the best combination of clients within the available ones, which can be conducted as a periodic prepossessing step to any off-the-shelf client selection scheme. To achieve this, we formulate a combinatorial optimization problem to periodically identify the clients most compatible for collaboration. Namely, our objective is to identify the optimal subset of available clients whose averaged performance yields the lowest loss. Solving this combinatorial optimization problem would necessitate an exponential number of tests, rendering it computationally infeasible. As a result, we employ an efficient greedy approach to approximate its solution. To this end, we present two greedy filtering algorithms: a deterministic one and a randomized variant, both relying on marginal gains from adding and removing clients from subsets of clients. Using different vision and language tasks and realistic federated scenarios with time-varying client availability, we evaluate the performance of combining our \textit{client filtering} methods with different FL algorithms, such as FedAvg and FedProx~\cite{li2020federated}, and with various client selection schemes, such as RS, PoC, and DivFL.

\vspace{3pt}
\smartparagraph{Contributions.} 
We propose FilFL, a novel approach that includes combinatorial optimization through \textit{client filtering} in FL to optimize client participation, accelerate the training process, and improve the overall global model performance.
 To the best of our knowledge, we are the first to define a non-monotone combinatorial optimization problem in the context of FL, aiming to identify the subset of clients from the available clients whose averaged performance yields the lowest loss. We propose a greedy filtering algorithm ($\chi$GF) with deterministic (DGF) and randomized (RGF) versions to approximate its solution.
 We provide a theoretical analysis showing that FilFL achieves a convergence rate of $\mathcal{O}(\frac{1}{t}) + \mathcal{O}(\varphi)$ for $t$ time steps, where $\varphi$ represents a time constant, under certain assumptions.
Empirical evaluations on various vision and language tasks under realistic scenarios of time-varying available clients show that FilFL outperforms FL methods, achieving faster training and up to a 10 percentage point increase in test accuracy. Furthermore, ablation studies and filtering performance analysis have been conducted.

\vspace{3pt}
A companion report of this paper with complete technical details is available at \cite{fourati2023filfl}. The code can be accessed at \url{https://github.com/salmakh1/FilFL}.


\section{Problem Formulation}
\label{Problemformulation}

Unlike standard FL training algorithms, where all the available clients are considered for selection and participation, we formulate a bi-level optimization problem that combines the standard continuous training objective with a discrete filtering objective.

\subsection{Training Objective}

We consider the canonical objective of fitting a global model to the non-IID data $\mathcal{D}$ held across clients \cite{mcmahan2017communication}. Thus, we consider the following distributed optimization problem:
\begin{equation}
\label{FD}
  \min _{\mathbf{w}}\left\{F(\mathbf{w}) \triangleq \sum_{k=1}^N p_k F_k(\mathbf{w})\right\},
\end{equation}
where $N$ is the number of clients, and $p_k$ is the weight of the $k$-th client such that $p_k \geq 0$ and $\sum_{k=1}^N p_k=1$. Suppose the $k$-th client holds the $m_k$ training data: $x_{k, 1}, x_{k, 2}, \cdots, x_{k, m_k}$. The local objective $F_k(\cdot)$ is defined as:
$F_k(\mathbf{w}) \triangleq \frac{1}{m_k} \sum_{j=1}^{m_k} \ell\left(\mathbf{w} ; x_{k, j}\right)$
where $\ell(\cdot ; \cdot)$ is some training loss function. While the training objective seeks the best client weights, the filtering objective finds the best combination of clients to optimize these weights. Although the former is continuous and the latter is discrete, both are interconnected and combined, which have led to remarkable improvements.

\subsection{Filtering Objective}

Our filtering objective is to find a subset of clients $\mathcal{S}^f$ that approximates a solution to the following combinatorial optimization problem:
\begin{equation}
\label{reward_}
\max_{\mathcal{S} \in \mathcal{S}_t} \quad  -F\left(\mathbf{\frac{1}{|\mathcal{S}|} \sum_{k \in \mathcal{S}} \mathbf{w}_t^k}\right),
\end{equation}
such that $\mathbf{w}_t^k$ is the weight of the $k^{\text{th}}$ client in round $t$. Thus, the combinatorial problem aims at finding a subset $\mathcal{S}^f \in \mathcal{S}_t$ where the average of the weights of the clients in the subset $\mathcal{S}^f$ minimizes the weighted average of the local losses, i.e., maximizes the function $-F$. Following the literature on combinatorial optimization, we define the problem as a maximization problem.

Unfortunately, solving the problem defined in Eq. \eqref{reward_} is both communication and computationally expensive. Even the evaluation of one possible set of clients $\mathcal{S}$ requires all clients to evaluate the combination of that set, i.e., each client $k$ needs to compute,  $F_k(\mathbf{\frac{1}{|\mathcal{S}|} \sum_{k \in \mathcal{S}} \mathbf{w}_t^k})$ on their local datasets. Finding or even approximating a solution requires several evaluations, which introduces additional communication and computational overhead on the participating clients. 

To make this approach more practical, we propose reformulating the problem into a centrally solvable form, thereby minimizing communication overhead. Therefore, we suggest using a central filtering dataset, denoted by $\mathcal{V}$, without requiring the clients to share any datasets. This can be done in several ways, by leveraging a subset of the server's validation data for filtering, using samples from a public dataset,\footnote{Previous works in FL have used public datasets for various purposes \cite{huang2022learn, zhang2021parameterized, lin2020ensemble, cheng2021fedgems, li2020federated}.} or randomly choosing a client to perform filtering on a subset of their validation dataset, in each filtering round. We later show that these approaches, solving on a server dataset or a variable filtering dataset, depending on the chosen client (see \cref{variable} for details about the stochastic dataset), are possible and show that the filtering dataset can be stochastic, and does not need to adhere to any prohibitive requirements, for example, can be as small as 8 samples, as discussed in detail in \cref{sensitivityanalysis}. 

Unless mentioned otherwise, in the following, we consider a server-held filtering dataset $\mathcal{V}$ with $m$ samples: $x_{1}, x_{2}, \cdots, x_{m}$. Thus, our filtering objective can be defined as follows:

\begin{equation}
\label{reward}
\max_{\mathcal{S} \in \mathcal{S}_t} \left\{\mathcal{R}(\mathcal{S}) \triangleq   -F_{\mathcal{V}}\left(\mathbf{\frac{1}{|\mathcal{S}|} \sum_{k \in \mathcal{S}} \mathbf{w}_t^k}\right)\right\},
\end{equation}
where 
$F_{\mathcal{V}}(\mathbf{w}) \triangleq \frac{1}{m} \sum_{j=1}^{m} \ell\left(\mathbf{w} ; x_{j}\right)
$ as the loss on dataset $\mathcal{V}$.  

While the reformulation proposed in Eq. \eqref{reward} of the objective in Eq. \eqref{reward_} offers improved tractability, saving communication and computation when evaluated centrally, achieving an exact solution remains non-trivial.
 Finding an exact solution to the problem in Eq. (\ref{reward}) would typically still necessitate an exponential number of queries, rendering it computationally infeasible. Furthermore, notice that the function in Eq. (\ref{reward}) is not necessarily monotone\footnote{A function $f$ is monotone, if any set $A$ is a subset of $B$ ($A \subseteq B$), then $f(A) \leq f(B)$ \cite{fourati2024combinatorial}.}.
Suppose we have a set of clients $A$ and a new client $c$. If the new client $c$ has a high loss, adding $c$ to the set of clients $A$ may increase the overall loss, thereby decreasing the objective value $\mathcal{R}(A \cup \{c\})$ compared to $\mathcal{R}(A)$, thus violating monotonicity of the function. Thus, we seek to devise a non-monotone approximation algorithm to solve this problem efficiently. 

\begin{figure*}[t]
\centering 
\includegraphics[scale=0.7]{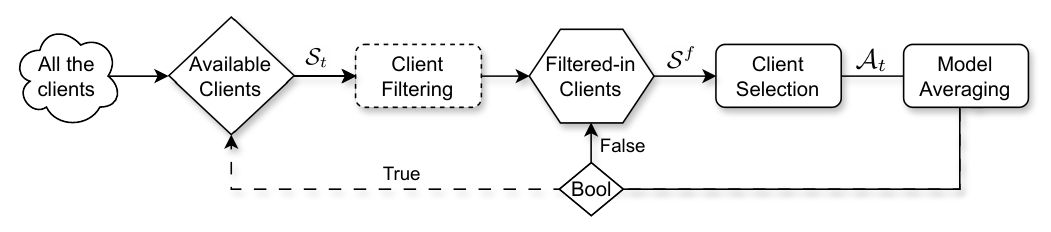}
\caption{FilFL incorporates \textit{client filtering} in FL, which is activated when the boolean condition 'Bool' becomes true, either when new clients become available or when $h$ rounds have elapsed since the last filtering call. Otherwise, the condition remains false. 
In both scenarios, clients are selected from the filtered-in subset of clients, denoted as $\mathcal{S}^f$.}
\label{fig:diagram}
\end{figure*}

\section{Client Filtering}
\label{ClientFilteringSection}

We introduce our approach, FilFL, which incorporates \textit{client filtering} into standard FL algorithms such as FedAvg and FedProx, alongside with different client selection algorithms, such as RS, PoC, and DivFL. FilFL filters the available clients, considering only the filtered-in clients $\mathcal{S}^f$ as potential participants in the training process. This ensures that the chosen client selection method is only applied to the chosen subset $\mathcal{S}^f$, rather than the entire pool of available clients $\mathcal{S}_t$. To implement \textit{client filtering}, we define a combinatorial objective function on the discrete and large space of client combinations in Eq. \eqref{reward} and introduce a periodic greedy algorithm denoted as $\chi$GF, which approximates a solution for this objective, optimizing client combinations for better client participation in FL.

\subsection{Client Filtering in FL (FilFL)}


\begin{algorithm}[t]
\small
\caption{FilFL}\label{alg:FilFL}
\begin{algorithmic}[1]
\REQUIRE{ $T$, $E$, $\eta$, $\mathbf{w}_{1}$, $K$, $\mathcal{S}_{0}$, $h$, $n$, $\chi$}
\STATE \textbf{Initialize} $\mathcal{S}^f \leftarrow \mathcal{S}_{0}$
\FOR{t = $1$, $\cdots$, $T$}
    \IF{$(t\mod h == 0)$ OR $(\mathcal{S}_t \neq \mathcal{S}_{t-1})$}
        \STATE Server broadcasts $\mathbf{w}_{t}$ to all clients in $\mathcal{S}_t$
        \FOR{client $k \in \mathcal{S}_t$ \textbf{in parallel}}
            \STATE Update $\mathbf{w}^k$ for $E$ local SGD steps
            \STATE Send $\mathbf{w}^k$ back to the server
        \ENDFOR
        \STATE $\mathcal{S}^f$, $\mathcal{A}_t$ = \textit{client filtering}$(\text{Shuffle}(\mathcal{S}_t), n, \chi)$
    \ELSE
    \STATE Server selects $\mathcal{A}_t$ including at most $K$ clients from $\mathcal{S}^f$
    \STATE Server broadcasts $\mathbf{w}_{t}$ to all clients in $\mathcal{A}_t$
    \FOR{client $k \in \mathcal{A}_t$ \textbf{in parallel}}
        \STATE Update $\mathbf{w}^k$ for $E$ local SGD steps
        \STATE Send $\mathbf{w}^k$ back to the server
\ENDFOR
    \ENDIF

\STATE Server aggregates:
\STATE \quad $\mathbf{w}_{t+1} \leftarrow \frac{1}{|\mathcal{A}_t|} \sum_{k \in \mathcal{A}_t} \mathbf{w}^k$
\ENDFOR
\end{algorithmic}
\end{algorithm}

FilFL is a FL approach that incorporates \emph{client filtering}. Algorithm \ref{alg:FilFL} presents its pseudocode. FilFL applies \emph{client filtering} (line 4) whenever the current set of available clients differs from the previous round. Furthermore, to improve computational efficiency, FilFL applies \emph{client filtering} periodically every $h$ rounds. We empirically observe similar results when running $\chi$GF every round or running it every few rounds; a sensitivity analysis to $h$ is given in Section \ref{sensistivity_h_main_section}. The \textit{client filtering} procedure (cf. Algorithm \ref{alg:GF}) determines $\mathcal{S}^f$ by approximating a solution for the problem defined in Eq. (\ref{reward}). To determine the set of active clients $\mathcal{A}_t$, FilFL uses any client selection method to select $K$ clients from $\mathcal{S}^f$ (line 6).
In case $\mathcal{S}^f$ only contains $K$ or fewer clients, FilFL uses $\mathcal{S}^f$ as $\mathcal{A}_t$ (line 6). FilFL then runs local steps of SGD for each active client in $\mathcal{A}_t$ (lines 8-11). Finally, the server aggregates the weights returned from the active clients and moves to the next round.

\begin{remark}
FilFL generalizes standard FL.
FilFL adds an extra layer in FL, which is \textit{client filtering}. Using an identity filtering algorithm that accepts all the available clients, i.e., $\mathcal{S}^f = \mathcal{S}_t$, FilFL reduces to standard FL training schemes. Thus, FilFL can be considered as a generalization of those. In this paper, we propose $\chi$GF for filtering. However, future work might consider other filtering methods.
\end{remark}

\begin{remark}
  Client filtering and client selection are distinct yet complementary methods with key differences. First, client filtering does not produce a subset with a fixed cardinality, $K$; therefore, client selection is subsequently applied to the filtered-in group. Second, client filtering can be implemented periodically, whereas client selection occurs in every communication round.  Finally, we opted to separate the two for the sake of generality, allowing the flexibility to combine any filtering algorithm with any off-the-shelf selection method.  
\end{remark}

\begin{remark}
FilFL reduces the complexity of client selection schemes. Firstly, FilFL skips client selection whenever $|\mathcal{S}^f|\leq K$ (line 6). Furthermore, \textit{client filtering} often leads to the rejection of multiple clients. As a result, when FilFL applies client selection on the filtered-in set $\mathcal{S}^f$ instead of the full set of available clients $\mathcal{S}_t$, the search space for client selection becomes smaller. For instance, the DivFL selection method complexity is $\mathcal{O}(N\mathcal{G}(N)K)$, where $N$ represents the number of all the clients, $K$ is the cardinality constraint, and $\mathcal{G}(N)$ represents the cost of calling their oracle function, which is a linearly increasing function of $N$. Consequently, the complexity of DivFL is $\mathcal{O}(N^2K)$. However, by incorporating $\chi$GF
with DivFL, the selection complexity is reduced to $\mathcal{O}(|\mathcal{S}^f|^2K)$, with $|\mathcal{S}^f|$ the number of filtered-in clients typically being smaller than $n$, smaller than $N$. 
\end{remark}


\subsection{Greedy Filtering ($\chi$GF)}

\begin{algorithm}[t]
\small
\caption{$\chi$GF ($\chi \in \{D, R\}$)}
\label{alg:GF}
\begin{algorithmic}[1]
\REQUIRE $\mathcal{S}_t$, $n$, $\chi$
    \STATE \textbf{Initialize} $X_{0} \leftarrow \emptyset,$ $Y_{0} \leftarrow \mathcal{S}_t$
    \FOR{index $i \in \{1, ... , n\}$}
        \STATE $u_i \leftarrow$ client of index $i$ in $\mathcal{S}_t$
        \STATE $a_i \leftarrow \mathcal{R}(X_{i-1} \cup \left\{u_i\right\}) - \mathcal{R}(X_{i-1})$ 
        \STATE $b_i \leftarrow \mathcal{R}(Y_{i-1} \setminus \left\{u_i\right\}) - \mathcal{R}(Y_{i-1})$
        \STATE $a_i^{\prime} \leftarrow \max(a_i, 0)$ and  $b_i^{\prime} \leftarrow \max(b_i, 0)$
        \IF {$\chi = D$} 
        \STATE $\quad$ $p_i = \mathbf{1}\{ a_i > b_i \}$
        \ELSIF{$\chi = R$}
        \STATE $\quad$ $p_i = \frac{a_i^{\prime}}{a_i^{\prime} + b_i^{\prime}}$  ($p_i = 1 \textbf{ if } a_i^{\prime} =  b_i^{\prime} =  0$ )
        \ENDIF
        \STATE \textbf{with probability} $p_i$ \textbf{do} 
        \STATE $\quad$ $X_i \leftarrow X_{i-1} \cup \left\{u_i\right\}$ and $Y_i \leftarrow Y_{i-1}$
        \STATE \textbf{else}
        \STATE $\quad$ $Y_i \leftarrow Y_{i-1} \setminus \left\{u_i\right\}$ and $X_i \leftarrow X_{i-1}$
    \ENDFOR
    \STATE Select $Z$ including at most $K$ clients from $X_n$
    \STATE \textbf{Return} $X_n$, $Z$
\end{algorithmic}
\end{algorithm}

Motivated by the successful application of greedy algorithms in combinatorial optimization \cite{feige2011maximizing, buchbinder2015tight, fourati2023randomized, fourati2024combinatorial}, we introduce a greedy \textit{client filtering} algorithm, called $\chi$GF. While monotone approximation algorithms, greedily adds elements based on their adding marginal gains \cite{fourati2024combinatorial}, non-monotone algorithms considers both the marginal gain of adding and the marginal gain of removing the same entity \cite{feige2011maximizing, buchbinder2015tight, fourati2023randomized}. Adapting the non-monotone algorithm in \cite{fourati2023randomized}, which has been demonstrated to be robust to small errors in function evaluations, as shown in Corollary 2 in \cite{fourati2023randomized}, we propose two versions for filtering: randomized (RGF) and deterministic (DGF). Algorithm~\ref{alg:GF} lists their pseudocode. The algorithm iterates over each available client and decides whether to add it to the set of clients $X$ (initially empty) or remove it from the set of clients $Y$ (initially containing all available clients). The server determines $X$ and $Y$ in a greedy fashion using measures of marginal gains of adding and removing until a decision is made for all individual clients. The algorithm returns the chosen (filtered-in) set of clients. Specifically, let $X_i$ and $Y_i$ be two sets of clients. Initially, $X_0 = \emptyset$ and $Y_0 = \mathcal{S}_t$. The algorithm has at most $n$ steps, where $n$ is the maximum number of considerable clients. In step $i$, $\chi$GF computes two variables: $a_i$ and $b_i$, defined as follows:
\begin{equation}
\begin{aligned}
    &a_i \triangleq \mathcal{R}(X_{i-1} \cup \left\{u_i\right\}) - \mathcal{R}(X_{i-1}), \\
    &b_i \triangleq  \mathcal{R}(Y_{i-1} \setminus \left\{u_i\right\}) - \mathcal{R}(Y_{i-1}).
\end{aligned}\label{eq:abdefn}
\end{equation}
These two variables are important for the decision-making process. $a_i$ measures the marginal gain of adding client $u_i$ to $X_{i-1}$, while $b_i$ measures the marginal gain of removing client $u_i$ from $Y_{i-1}$, which can be positive due to non-monotonicity. While DGF decides by comparing both marginal gains via $p_i = \mathbf{1}\{ a_i > b_i \}$, RGF decides based on $p_i = \frac{a_i^{\prime}}{a_i^{\prime} + b_i^{\prime}}$, where $a_i^{\prime} = \max(a_i, 0)$ and $b_i^{\prime} = \max(b_i, 0)$. In the special case when $a_i^{\prime} =  b_i^{\prime} =  0$, we set $p = 1$ for RGF. With probability $p$, the client $u_i$ is added to the set $X_{i-1}$ and kept in $Y_{i-1}$; otherwise, the client is removed from $Y_{i-1}$ and $X_{i-1}$ is kept the same. Therefore, $X_i \subseteq Y_i$ for all $i=1,\dots,n$. After checking all $n$ clients, it can be easily seen that by the algorithm's construction, both sets $X_n$ and $Y_n$ contain the same clients, i.e., $X_n \equiv Y_n$. 
Hereafter, at round $t$, we refer to the final set $X_n$ as the filtered-in set $\mathcal{S}^f$.

\begin{remark}
\label{rejection_rate_remark}
In cases where both $a_i$ and $b_i$ are non-positive, i.e., $a_i^{\prime} = b_i^{\prime} = 0$, the RGF algorithm accepts the client with a probability of $1$. On the other hand, even when both $a_i$ and $b_i$ are non-positive, the DGF algorithm may reject this client with a probability of $1$ if $a_i < b_i$. Hence, by design, DGF can reject more clients than RGF. This observation is empirically validated in Fig.~\ref{fig:filteredsize}. Generally, the clients that are accepted by RGF and rejected by DGF have minimal impact on FilFL performance, as they are the ones with both negative marginal gains of adding them to $X_{i-1}$ or removing them from $Y_{i-1}$.
\end{remark}

\begin{remark}
The computational complexity of using $\chi$GF is $\mathcal{O}(n \mathcal{I}(m))$, where $n$ is the number of considerable available clients, fixed by the user, and $\mathcal{I}(m)$ is the cost of inference over the server dataset of size $m$ data points. Therefore, the computational cost of using the $\chi$GF algorithm does not scale with the scaling number of clients and increases only linearly with the number of considered available clients $n$ (for reference, DivFL's computational cost scales quadratically with the total number of clients $N$). Therefore, our method remains practical even as the number of clients increase. Furthermore, the cost of forward passes can be reduced by distributing the computation across multiple graphical processing units, leading to faster and more efficient computations.
\end{remark}

\section{FilFL Convergence Analysis}
\label{ConvAnalysisSection}

We now provide a theoretical analysis of the convergence properties of our proposed FilFL algorithm (see Algorithm \ref{alg:FilFL}). Specifically, we analyze the convergence of the average model weights $\Bar{\mathbf{w}}_t$ at round $t$ to the optimal solution $\mathbf{w}^*$, under practical assumptions of non-IID data, partial client participation, and local updates. Our analysis focuses on the impact of incorporating \textit{client filtering} into the FedAvg setting, assuming random sampling as the client selection method. While our results mainly apply to FedAvg with random sampling, they can be easily extended to other methods. In the following, we provide the necessary definitions and assumptions for our analysis and present the theorem statement for convergence. The proofs of the main lemmas are provided in Appendix D \cite{fourati2023filfl}.

\subsection{Assumptions and Definitions}
The following assumptions are standard assumptions for the convergence analysis in the literature of FL, such as \cite{ balakrishnan2021diverse, li2019convergence}.

\begin{assumption}
\label{ass1}
$F_1, \cdots, F_N$ are all $L$-smooth\footnote{For all $k$,$\mathbf{v}$ and $\mathbf{w}, F_k(\mathbf{v}) \leq F_k(\mathbf{w})+(\mathbf{v}-$ $\mathbf{w})^T \nabla F_k(\mathbf{w})+\frac{L}{2}\|\mathbf{v}-\mathbf{w}\|_2^2$.}. 
\end{assumption}
\begin{assumption}
\label{ass2}
$F_1, \cdots, F_N$ are all $\mu$-strongly convex\footnote{For all $k$, $\mathbf{v}$ and $\mathbf{w}, F_k(\mathbf{v}) \geq F_k(\mathbf{w})+(\mathbf{v}-$ $\mathbf{w})^T \nabla F_k(\mathbf{w})+\frac{\mu}{2}\|\mathbf{v}-\mathbf{w}\|_2^2$.}. 
\end{assumption}

\begin{assumption}
\label{ass3}
Let $\psi_t^k$ be sampled from the $k$-th client's local data uniformly at random. The variance of stochastic gradients in each client is bounded by $\sigma_k^2$, i.e., $\mathbb{E}\left[\left\|\nabla F_k\left(\mathbf{w}_t^k, \psi_t^k\right)-\nabla F_k\left(\mathbf{w}_t^k\right)\right\|^2 \right]\leq \sigma_k^2$ for all $k=1, \cdots, N$. 
\end{assumption} 

\begin{assumption}
\label{ass4}
The norms of the stochastic gradients are uniformly bounded by $G$, i.e., $\left\|\nabla F_k\left(\mathbf{w}_t^k, \psi_t^k\right)\right\|^2 \leq G^2$ for all $k=1, \cdots, N$ and $t=1, \cdots, T-1$.
\end{assumption}

 \begin{assumption}
\label{ass5}
Statistical heterogeneity: $F^*-\sum_{k \in[N]} p_k F_k^*$ is bounded, where $F^*:=\min _\mathbf{w} F(\mathbf{w})$ and $F_k^*:=\min _\mathbf{v} F_k(\mathbf{v})$.
\end{assumption}

\begin{assumption} 
\label{samplingassumptio}
Assume $\mathcal{A}_t$ contains a subset of $K$ indices randomly selected with replacement according to the sampling probabilities $p_{i}^{'}=1/|\mathcal{S}^f|$, with simple averaging for aggregation \footnote{A theoretical analysis of this sampling scheme was provided in \cite{li2019convergence}.}.
\end{assumption}

Limited to realistic scenarios (for communication efficiency and low straggler effect), FilFL samples a subset $\mathcal{A}_t$ from the filtred-in set $\mathcal{S}^f$ and then only performs updates on them. This makes the analysis intricate since $\mathcal{A}_t$ varies each $E$ steps. However, we can use an approach similar to the one used in \cite{li2019convergence} to circumvent this difficulty. We assume that FilFL activates all clients at the beginning of each round and then uses the parameters maintained in only a few sampled clients to produce the next-round parameter. It is clear that this updating scheme is equivalent to the original. 

Let $\mathbf{w}_t^k$ be the model parameter maintained in the $k$-th client at the $t$-th step. Let $\mathcal{I}_E$ be the set of global synchronization steps, i.e., $\mathcal{I}_E=\{iE \mid i=1,2, \cdots\}$. If $t+1 \in \mathcal{I}_E$, i.e., the time step to communication, FilFL activates all clients. Then, the update of our algorithm can be described as: for all $k \in[N]$,
$$
\begin{aligned}
 & \mathbf{v}_{t+1}^k=\mathbf{w}_t^k-\eta_t \nabla F_k\left(\mathbf{w}_t^k, \psi_t^k\right), \\
& \mathbf{w}_{t+1}^k= \begin{cases}\mathbf{v}_{t+1}^k & \text { if } t+1 \notin \mathcal{I}_E, \\ 
\\
\text { sample } \mathcal{A}_{t+1} \text { from } \mathcal{S}^f_{t+1} \\ 
\text { and average }\left\{\mathbf{v}_{t+1}^k\right\}_{k \in \mathcal{A}_{t+1}} & \text { if } t+1 \in \mathcal{I}_E .\end{cases} 
\end{aligned}
$$
Let $\mathbf{w}^* \in \arg \min _\mathbf{w} F(\mathbf{w})$ and $\mathbf{v}_k^* \in \arg \min _\mathbf{v} F_k(\mathbf{v})$ for $k \in[N]$. 
Let $\bar{\mathbf{v}}_t \triangleq \sum_{k \in[N]} p_k \mathbf{v}_t^k$, and $\bar{\mathbf{w}}_t \triangleq \sum_{k \in[N]} p_k \mathbf{w}_t^k$,
where $p_k \geq 0$ is the given weight of the $k^{\text {th }}$ client and w.l.o.g., we assume $\sum_k p_k=1$. 

Filtering the clients before selection, using biased greedy filtering algorithms, made the theoretical analysis more challenging. Compared to previous theoretical federated convergence analysis, such as \cite{li2019convergence} and \cite{balakrishnan2021diverse}, that introduce $\bar{\mathbf{v}}_t$ and $\bar{\mathbf{w}}_t$, to proceed with our analysis we introduce an extra variable $\bar{\textbf{z}}_t$, defined as follows
$\bar{\mathbf{z}}_t \triangleq \frac{1}{|\mathcal{S}_t^{f}|}\sum_{k \in \mathcal{S}_t^{f}} \mathbf{v}_t^k$. Furthermore, we define a filtering gap as follows: 
\begin{equation}
    \label{epsilont}
    \delta_t \triangleq F(\bar{\mathbf{v}}_t) - F(\bar{\mathbf{z}}_t).
\end{equation}

An optimal filtering method leads to the highest $\delta_t$ possible at every round $t$. In FilFL, using $\chi$GF as a filtering method, we expect the $\delta_t$ to be optimized over the rounds. In Lemma 1, in Appendix D,
we show that $\mathbb{E}\left[\delta_t\right]$ is lower bounded by a constant $\delta$.
 
\subsection{FilFL Theoretical Convergence Results}

We present our convergence result as follows.

\begin{theorem}
\label{theorem-statement}
Let assumptions \ref{ass1}, \ref{ass2}, \ref{ass3},  \ref{ass4}, \ref{ass5}, and \ref{samplingassumptio} hold, then we have
\begin{equation}
\begin{aligned}
\mathbb{E} [\left\|\overline{\mathbf{w}}_{t+1}-\mathbf{w}^*\right\|^2] 
& \stackrel{}{\leq} \mathcal{O}(\frac{1}{t}) + \mathcal{O}(\varphi)
\end{aligned}
\end{equation}
for some time constant $\varphi$ that depends on the filtering.
\end{theorem}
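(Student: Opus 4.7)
My proof will follow the standard template used for the convergence of FedAvg under non-IID data (as in Li et al., 2019, and the DivFL analysis of Balakrishnan et al., 2021), but modified to handle the extra randomness and bias introduced by the filtering step. The global idea is to build a one-step contraction recursion on $\mathbb{E}\lVert \bar{\mathbf{w}}_{t+1}-\mathbf{w}^*\rVert^{2}$, then unroll it with a diminishing learning rate $\eta_t = \Theta(1/t)$ to recover the $\mathcal{O}(1/t)$ rate, leaving an additive residual that captures the quality of the filter and becomes the $\mathcal{O}(\varphi)$ term.

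\textbf{Step 1: one-step virtual SGD bound.} First I would analyze the ``virtual'' full-averaged iterate $\bar{\mathbf{v}}_{t+1} = \bar{\mathbf{w}}_t - \eta_t \sum_k p_k \nabla F_k(\mathbf{w}_t^k,\psi_t^k)$ as if all clients participated. Decomposing $\lVert \bar{\mathbf{v}}_{t+1}-\mathbf{w}^*\rVert^{2}$ via the standard ``$A_1+A_2+A_3$'' split, and invoking Assumption \ref{ass1} ($L$-smoothness), Assumption \ref{ass2} ($\mu$-strong convexity), Assumption \ref{ass3} (bounded gradient variance), Assumption \ref{ass4} (bounded gradients), and Assumption \ref{ass5} (bounded heterogeneity), yields the usual recursion
\begin{equation*}
\mathbb{E}\lVert \bar{\mathbf{v}}_{t+1}-\mathbf{w}^*\rVert^{2} \leq (1-\eta_t \mu)\,\mathbb{E}\lVert \bar{\mathbf{w}}_{t}-\mathbf{w}^*\rVert^{2} + \eta_t^{2} B,
\end{equation*}
where $B$ lumps the variance, heterogeneity, and local-drift constants; the local-drift term $\mathbb{E}\sum_k p_k\lVert \bar{\mathbf{w}}_t-\mathbf{w}_t^k\rVert^{2}$ is controlled by the usual $\mathcal{O}(\eta_t^{2}(E-1)^{2}G^{2})$ bound that follows from Assumption \ref{ass4}.

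\textbf{Step 2: inserting the filter via the auxiliary $\bar{\mathbf{z}}_t$.} The key departure from the FedAvg analysis is that the communicated iterate is obtained by sampling $\mathcal{A}_{t+1}$ from the filtered set $\mathcal{S}_{t+1}^{f}$, not from the entire pool. To handle this I would insert $\bar{\mathbf{z}}_{t+1}$ between $\bar{\mathbf{v}}_{t+1}$ and $\bar{\mathbf{w}}_{t+1}$ and use the expansion
\begin{equation*}
\lVert \bar{\mathbf{w}}_{t+1}-\mathbf{w}^*\rVert^{2} = \lVert \bar{\mathbf{w}}_{t+1}-\bar{\mathbf{z}}_{t+1}\rVert^{2} + \lVert \bar{\mathbf{z}}_{t+1}-\mathbf{w}^*\rVert^{2} + 2\langle \bar{\mathbf{w}}_{t+1}-\bar{\mathbf{z}}_{t+1},\, \bar{\mathbf{z}}_{t+1}-\mathbf{w}^*\rangle.
\end{equation*}
Under Assumption \ref{samplingassumptio}, $\bar{\mathbf{w}}_{t+1}$ is an unbiased estimate of $\bar{\mathbf{z}}_{t+1}$ conditioned on $\mathcal{S}_{t+1}^{f}$, so the inner product vanishes in expectation and the first term is controlled by a standard sampling-variance bound of order $\eta_t^{2} K^{-1} G^{2}$ using Assumption \ref{ass4}. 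The remaining piece $\mathbb{E}\lVert \bar{\mathbf{z}}_{t+1}-\mathbf{w}^*\rVert^{2}$ is then compared to $\mathbb{E}\lVert \bar{\mathbf{v}}_{t+1}-\mathbf{w}^*\rVert^{2}$: by strong convexity and the definition \eqref{epsilont} of $\delta_{t+1}$, passing from $\bar{\mathbf{v}}_{t+1}$ to $\bar{\mathbf{z}}_{t+1}$ costs (or, more precisely, gains) a term proportional to $-\delta_{t+1}$, which by Lemma \ref{proof-lemma0-appendix} is bounded in expectation by $-\delta$.

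\textbf{Step 3: recursion and unrolling.} Combining the two previous steps produces a recursion of the form
\begin{equation*}
\mathbb{E}\lVert \bar{\mathbf{w}}_{t+1}-\mathbf{w}^*\rVert^{2} \leq (1-\eta_t\mu)\,\mathbb{E}\lVert \bar{\mathbf{w}}_{t}-\mathbf{w}^*\rVert^{2} + \eta_t^{2} B' + C\varphi,
\end{equation*}
where $\varphi$ absorbs the residual bias of the biased greedy filter (essentially the part of $\delta_{t+1}$ that cannot be converted into progress, together with the fixed-point distortion of the filtered target relative to $\mathbf{w}^*$). Choosing $\eta_t = \beta/(t+\gamma)$ as in the FedAvg analysis and applying the usual induction on $\mathbb{E}\lVert \bar{\mathbf{w}}_t-\mathbf{w}^*\rVert^{2}$ delivers the claimed $\mathcal{O}(1/t) + \mathcal{O}(\varphi)$ bound.

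\textbf{Main obstacle.} The delicate part is Step 2: $\mathcal{S}_{t}^{f}$ is produced by a biased, data-dependent greedy procedure ($\chi$GF), so it is correlated with $\{\mathbf{v}_t^k\}$ and with $\bar{\mathbf{w}}_t-\mathbf{w}^*$, which breaks the usual unbiasedness arguments that make cross terms vanish. The standard trick of conditioning on the history up to the last synchronization step keeps the local SGD variance under control, but the cross term $\langle \bar{\mathbf{z}}_{t+1}-\bar{\mathbf{v}}_{t+1},\bar{\mathbf{v}}_{t+1}-\mathbf{w}^*\rangle$ has to be bounded rather than eliminated. I would handle this by a Young-type inequality $2\langle a,b\rangle \leq \eta_t\mu\lVert b\rVert^{2} + (\eta_t\mu)^{-1}\lVert a\rVert^{2}$, folding the first piece into the contraction coefficient and controlling the second by the gap $\delta_t$ (via Lemma \ref{proof-lemma0-appendix}) plus a residual bias term that becomes exactly $\varphi$.
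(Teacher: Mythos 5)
Your plan is essentially the paper's proof: the same virtual sequences $\bar{\mathbf{v}}_t$, the same auxiliary filtered average $\bar{\mathbf{z}}_t$ inserted to separate the unbiased sampling noise from the filtering bias, the same use of the gap $\delta_t$ and its lower bound $\delta$ (Lemma~\ref{proof-lemma0-appendix}) to control $\mathbb{E}\|\bar{\mathbf{z}}_t-\bar{\mathbf{v}}_t\|^2$, the same FedAvg one-step recursion for $\mathbb{E}\|\bar{\mathbf{v}}_{t+1}-\mathbf{w}^*\|^2$, and the same unrolling with $\eta_t=\Theta(1/t)$. The only cosmetic difference is the order of the decomposition: you expand around $\bar{\mathbf{z}}_{t+1}$ first (so the sampling cross term vanishes and the filtering cross term survives), while the paper expands around $\bar{\mathbf{v}}_{t+1}$ and keeps a single non-vanishing cross term $\mathcal{T}_3=\langle\bar{\mathbf{w}}_{t+1}-\bar{\mathbf{v}}_{t+1},\bar{\mathbf{v}}_{t+1}-\mathbf{w}^*\rangle$; either way exactly one cross term of the same nature must be bounded rather than eliminated.

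One concrete caution on your proposed treatment of that cross term. With the weight you suggest, $2\langle a,b\rangle\leq \eta_t\mu\|b\|^2+(\eta_t\mu)^{-1}\|a\|^2$, the second piece is $(\eta_t\mu)^{-1}\mathbb{E}\|\bar{\mathbf{z}}_{t+1}-\bar{\mathbf{v}}_{t+1}\|^2$, and Lemma~\ref{proof-lemma1-appendix} only bounds $\mathbb{E}\|\bar{\mathbf{z}}_{t}-\bar{\mathbf{v}}_{t}\|^2$ by a \emph{constant} ($G^2/\mu^2-2\delta/\mu$), not by anything decaying in $\eta_t$; so this term grows like $t$ and the folded contraction factor degrades to $1-\mathcal{O}(\eta_t^2\mu^2)$, which does not yield $\mathcal{O}(1/t)$. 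The paper sidesteps this entirely: it uniformly bounds $\|\bar{\mathbf{v}}_t-\mathbf{w}^*\|\leq\rho$ (Lemma~\ref{proof-lemma2-appendix}, using strong convexity, bounded gradients, and bounded heterogeneity), applies Cauchy--Schwarz and Jensen to get $\mathcal{T}_3\leq\rho\sqrt{\xi}$, and absorbs the whole cross term as a constant into $\varphi=\xi+2\rho\sqrt{\xi}$. You need that uniform bound on $\|\bar{\mathbf{v}}_t-\mathbf{w}^*\|$ (which your sketch never states) to close the argument; with it, the rest of your outline goes through as in the paper.
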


\begin{proof}
Note that
{
\begin{equation}
\begin{aligned}
\label{maininequality-main-paper}
\mathbb{E}\left[\left\|\overline{\mathbf{w}}_{t+1}-\mathbf{w}^*\right\|^2\right] & = \mathbb{E}\left[\left\|\overline{\mathbf{w}}_{t+1}-\overline{\mathbf{v}}_{t+1}\right\|^2\right] 
+ 
\mathbb{E}\left[\left\|\overline{\mathbf{v}}_{t+1}-\mathbf{w}^*\right\|^2\right] \\
& + 2\mathbb{E}\left[\left\langle \overline{\mathbf{w}}_{t+1} -\overline{\mathbf{v}}_{t+1} , \overline{\mathbf{v}}_{t+1}-\mathbf{w}^*\right\rangle\right].
\end{aligned}
\end{equation}}

We bound the three terms in Eq. (\ref{maininequality-main-paper}).
Using Lemma 4
result, shown in Appendix D,
we have
$
\label{lemma3-main-text}
\mathcal{T}_1 \triangleq \mathbb{E} \left[\| \bar{\mathbf{w}}_{t+1} - \mathbf{\bar{v}}_{t+1}\|^2\right]
\stackrel{}{\leq} \xi,
$
for some constant $\zeta$ and $\xi = \zeta  - \frac{2\delta}{\mu}$. Moreover, using Lemma 1, 2, and 3 in \cite{li2019convergence}, define $\mathcal{T}_2 \triangleq \mathbb{E}\left[\left\|\overline{\mathbf{v}}_{t+1}-\mathbf{w}^*\right\|^2\right]$, we have 
$
\label{lilemmas-main-text}
\mathcal{T}_2 \leq\left(1-\eta_t \mu\right) \mathbb{E}\left[\left\|\bar{\mathbf{w}}_t-\mathbf{w}^*\right\|^2\right]+\eta_t^2 B,
$
for a stepsize $\eta_t$ and some constant $B$. Furthermore, using Corollary 1,
in Appendix D,
we have
$
\mathcal{T}_3 \triangleq
\mathbb{E} \left[ \left\langle \overline{\mathbf{w}}_{t} -\overline{\mathbf{v}}_{t} , \overline{\mathbf{v}}_{t}-\mathbf{w}^*\right\rangle \right] \leq \rho \sqrt{\xi},
$
for some constant $\rho$. 

Define $\Delta_t \triangleq \mathbb{E} \left[\left\|\overline{\mathbf{w}}_{t}-\mathbf{w}^*\right\|^2\right]$, and $\varphi = \xi + 2 \rho \sqrt{\xi}$,  thus
$
\Delta_{t+1} \leq \left(1-\eta_t \mu\right) \Delta_t + \eta_t^2 B + \varphi.
$
With a stepsize, $\eta_t = \frac{\beta}{t}$, for $\beta \geq \frac{1}{\mu}$, the final convergence result follows from Lemma 3 in \cite{mirzasoleiman2020coresets}.
\end{proof}
The above result provides a convergence rate guarantee of $\mathcal{O}(\frac{1}{t})$ for FilFL up to a certain neighborhood of size $\mathcal{O}(\varphi)$, which depends on the \textit{client filtering}. While our approach differs from that of DivFL, we obtain similar theoretical guarantees (albeit with different constants) and better empirical results. Furthermore, our experiments show that FilFL enhances different FL algorithms; see Experiments Section, which includes FedAvg and FedProx. It is worth noting that a good filtering algorithm implies larger values of $\delta_t$ for all $t$, as defined in Eq. (\ref{epsilont}). This, in turn, leads to a larger value of $\delta$, thus smaller $\xi$, hence a smaller value of $\varphi$. Our greedy filtering algorithms are designed to maximize $\delta_t$, thereby minimizing $\varphi$. Empirical results demonstrate that both $\chi$GF accelerate the training and lead to better test accuracy. As discussed in the Experiments section, both versions of $\chi$GF enjoy significantly large approximation ratios of the optimal solution $OPT$, specifically, $\mathcal{R}(\mathcal{S}^f) \geq 0.96 \mathcal{R}(OPT)$, indicating that greedy filtering identifies near optimal combinations of clients over the rounds.

\section{Experiments}
\label{Experiments}
As we are the first to propose \textit{client filtering} in FL, we evaluate the performance of combining $\chi$GF with different FL algorithms, such as FedAvg \cite{mcmahan2017communication} and FedProx \cite{li2020federated} with different client selection schemes, namely, random selection (RS) \cite{li2019convergence}, power-of-choice (PoC) \cite{cho2020client}, and diverse selection (DivFL) \cite{balakrishnan2021diverse}. Moreover,
we conduct ablation studies, analyzing the sensitivity of FilFL to different filtering periodicity values and for various filtering dataset scenarios, including different sizes and distributions, and we examine the
behavior of $\chi$GF.

\subsection{Setup}
We experiment with different vision and language tasks in a range of scenarios. We use Shakespeare dataset \cite{caldas2018leaf}, built from ``The Complete Works of William Shakespeare,'' where each speaking role in every play is considered a different client. The task is a next-character prediction with 80 classes of characters in total. We use a small filtering dataset from a different distribution, specifically consisting of parts of this paper's introduction, as shown in Table 3
in the Appendix \cite{fourati2023filfl}. We use a two-layer LSTM \cite{hochreiter1997long} classifier containing 256 hidden units with an 8-dimension embedding layer. Moreover, we use CIFAR-10 \cite{krizhevsky2009learning} in a non-IID setting with ResNet18 \cite{he2016deep}. We split CIFAR-10 train dataset into private and filtering datasets, where the filtering partition fraction is $0.01$. Similar to existing works \cite{acar2021federated, he2020fedml, yurochkin2019bayesian}, to simulate the non-IID data distribution among clients, we use the Dirichlet distribution Dir($\alpha$), with $\alpha = 0.5$. We use the existing CIFAR-10 test sets as global test sets. 
Furthermore, we use Federated Extended MNIST (FEMNIST) \cite{caldas2018leaf}, which is built by partitioning the data in Extended MNIST \cite{cohen2017emnist, lecun1998mnist} based on the writer of the digit/character. We use the test set as a global test set. Similar to \cite{caldas2018leaf}, we use a model with two convolutional layers followed by pooling and ReLU and a final dense layer with 2048 units. 

In the following experiments, we consider $N$ clients, with only $n$ considerable available ones, with $K$ selected clients, periodicity $h$, and filtering data size $m$.
FilFL samples $\mathcal{A}_t$ from the filtered-in set of clients $\mathcal{S}^f$, while other FL algorithms sample $\mathcal{A}_t$ from the full set of available clients $\mathcal{S}_t$. We experiment with three different seeds and present the averaged results together with the standard deviation. Appendix C
reports further details about the setup.

\subsection{FilFL Outperforms Standard FL Algorithms}
\label{main_filfl_convergence}

FilFL, for any given FL algorithm and any applied client selection algorithm, includes an extra layer of \textit{client filtering} using $\chi$GF. In the following sections, we demonstrate the advantages of adding this extra layer to various combinations of FL algorithms and client selection methods. For the same FL algorithm and client selection, we assess the marginal gain of adding such a filtering step.

\subsubsection{FilFL (FedAvg with $\chi$GF and PoC) vs FedAvg (PoC)}

We compare the performance of FilFL (FedAvg with $\chi$GF) against FedAvg, both using PoC for client selection on different datasets. Fig.~\ref{fig:fedavg_poc_test_acc} illustrates the results of the Shakespeare dataset, with a small filtering dataset from a different distribution; specifically consisting of parts of this paper's introduction (see the filtering dataset in Appendix C.3).
Our results demonstrate that FilFL using DGF or RGF performs significantly better than FedAvg. In particular, as depicted in the left plot, FilFL with both filtering methods accomplishes accelerated training and attains around 10 percentage points higher test accuracy than FedAvg. Furthermore, we conducted the t-test, and the resulting two-tailed p-value was 0.0001, considered extremely statistically significant. After 200 rounds, the right plot displays a lower training loss for FedAvg. In Appendix E.1, 
we present the results on CIFAR-10 and FEMNIST, which exhibit improved training and better test accuracy by 5 and 7 percentage points, respectively.

\begin{figure}[!tbp]
\begin{minipage}{0.24\textwidth}
\begin{tikzpicture}
  \node (img)  {\includegraphics[scale=0.23]{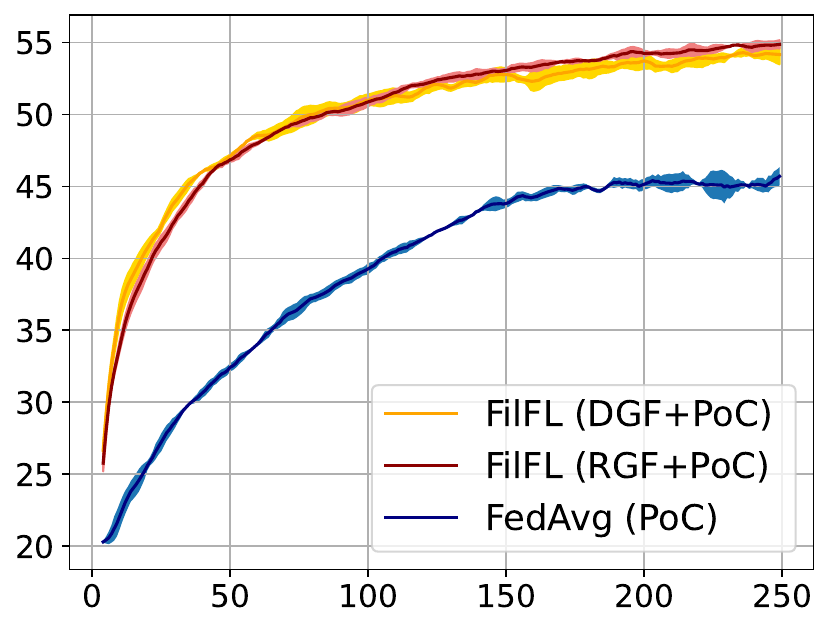}};
  \node[below=of img, node distance=0cm, rotate=0cm, anchor=center,yshift=3.7cm] {\tiny Shakespeare};
  \node[below=of img, node distance=0cm, rotate=0cm, anchor=center,yshift=1cm] {\tiny Round};
  \node[left=of img, node distance=0cm, rotate=90, anchor=center,yshift=-1.0cm] {\tiny Test Accuracy};
 \end{tikzpicture}
\end{minipage}%
\begin{minipage}{0.24\textwidth}
\begin{tikzpicture}
  \node (img)  {\includegraphics[scale=0.23]{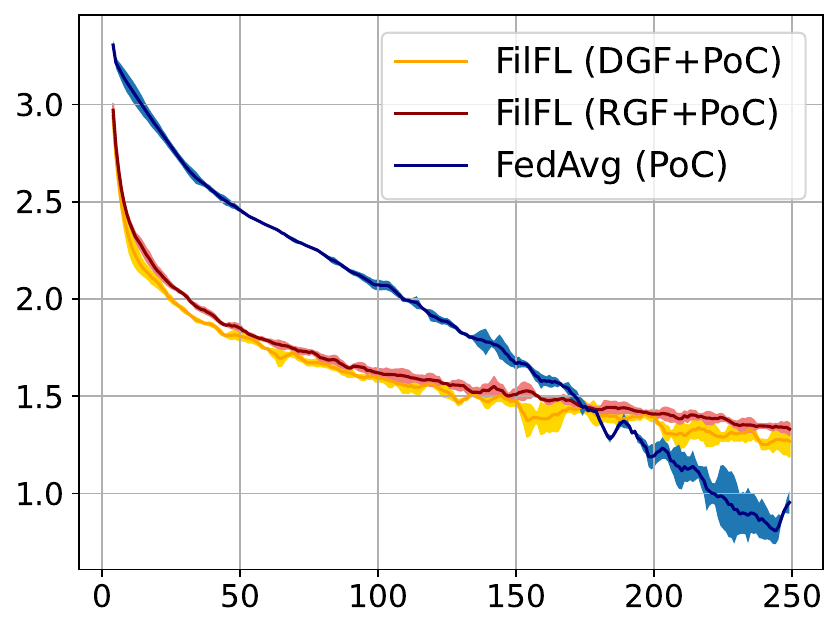}};
  \node[left=of img, node distance=0cm, rotate=90, anchor=center,yshift=-1.0cm] {\tiny Training Loss};
    \node[below=of img, node distance=0cm, rotate=0cm, anchor=center,yshift=3.7cm] {\tiny Shakespeare};
  \node[below=of img, node distance=0cm, rotate=0cm, anchor=center,yshift=1cm] {\tiny Round};
\end{tikzpicture}
\end{minipage}%
\caption{FilFL (FedAvg with $\chi$GF) vs FedAvg (w/o filtering) both with PoC on Shakespeare dataset with $N = 143$, $n=100$, $K=10$, $m=34$, and $h=5$.}
\label{fig:fedavg_poc_test_acc}
\vspace{10pt}
\end{figure}

\subsubsection{FilFL (FedProx with $\chi$GF and RS) vs FedProx (RS)} We compare the performance of FilFL (FedProx with $\chi$GF) against FedProx, both using RS for selection. Fig. \ref{fig:fedprox_femnist} demonstrates that FilFL using $\chi$GF achieves significantly superior performance compared to FedProx on the FEMNIST dataset. Specifically, the left plot illustrates that FilFL with DGF and RGF achieves approximately 7 and 4 percentage points higher test accuracy, respectively than FedProx. The right plot reveals lower training loss for FilFL than FedProx. Moreover, Fig.\ref{fig:fedprox_shakespeare_}, shows the results on the Shakespeare dataset, where FilFL with DGF and RGF attains around 3 and 6 percentage points higher test accuracy, respectively than FedProx.

\begin{figure}[!tbp]
\begin{minipage}{0.24\textwidth}
\begin{tikzpicture}
  \node (img)  {\includegraphics[scale=0.23]{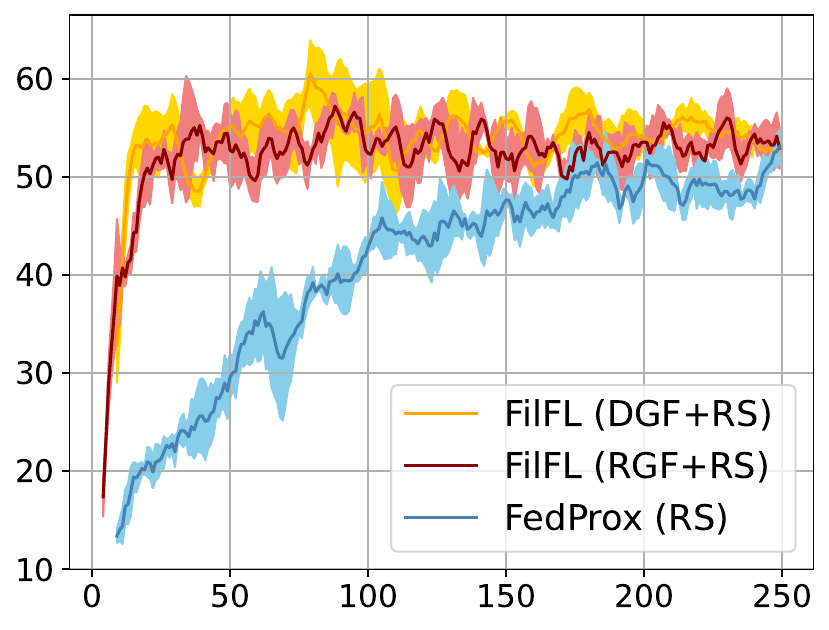}};
  \node[below=of img, node distance=0cm, rotate=0cm, anchor=center,yshift=3.7cm] {\tiny FEMNIST};
  \node[below=of img, node distance=0cm, rotate=0cm, anchor=center,yshift=1cm] {\tiny Round};
  \node[left=of img, node distance=0cm, rotate=90, anchor=center,yshift=-1.0cm] {\tiny Test Accuracy};
 \end{tikzpicture}
\end{minipage}%
\begin{minipage}{0.24\textwidth}
\begin{tikzpicture}
  \node (img)  {\includegraphics[scale=0.23]{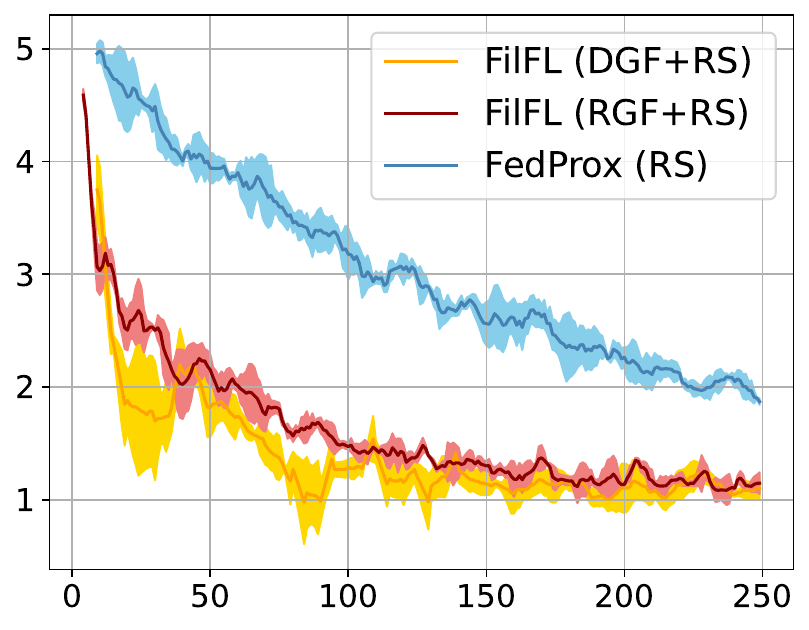}};
  \node[left=of img, node distance=0cm, rotate=90, anchor=center,yshift=-1.0cm] {\tiny Training Loss};
    \node[below=of img, node distance=0cm, rotate=0cm, anchor=center,yshift=3.7cm] {\tiny FEMNIST};
  \node[below=of img, node distance=0cm, rotate=0cm, anchor=center,yshift=1cm] {\tiny Round};
\end{tikzpicture}
\end{minipage}%
\caption{FilFL (FedProx with $\chi$GF) vs FedProx (w/o filtering) both with RS on FEMNIST dataset with $N = 190$, $n=50$, $K=5$, $m=2000$, and $h=5$.}
\label{fig:fedprox_femnist}
\vspace{18pt}
\end{figure}

\begin{figure}[!tbp]
\begin{minipage}{0.24\textwidth}
\begin{tikzpicture}
  \node (img)  {\includegraphics[scale=0.23]{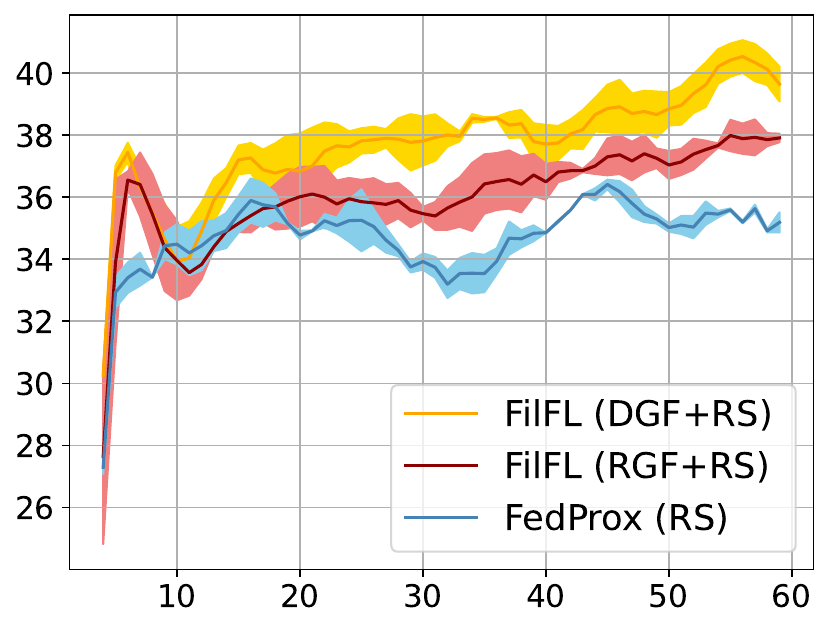}};
  \node[below=of img, node distance=0cm, rotate=0cm, anchor=center,yshift=3.7cm] {\tiny Shakespeare};
  \node[below=of img, node distance=0cm, rotate=0cm, anchor=center,yshift=1.0cm] {\tiny Round};
  \node[left=of img, node distance=0cm, rotate=90, anchor=center,yshift=-1.0cm] {\tiny Test Accuracy};
 \end{tikzpicture}
\end{minipage}%
\begin{minipage}{0.24\textwidth}
\begin{tikzpicture}
  \node (img)  {\includegraphics[scale=0.23]{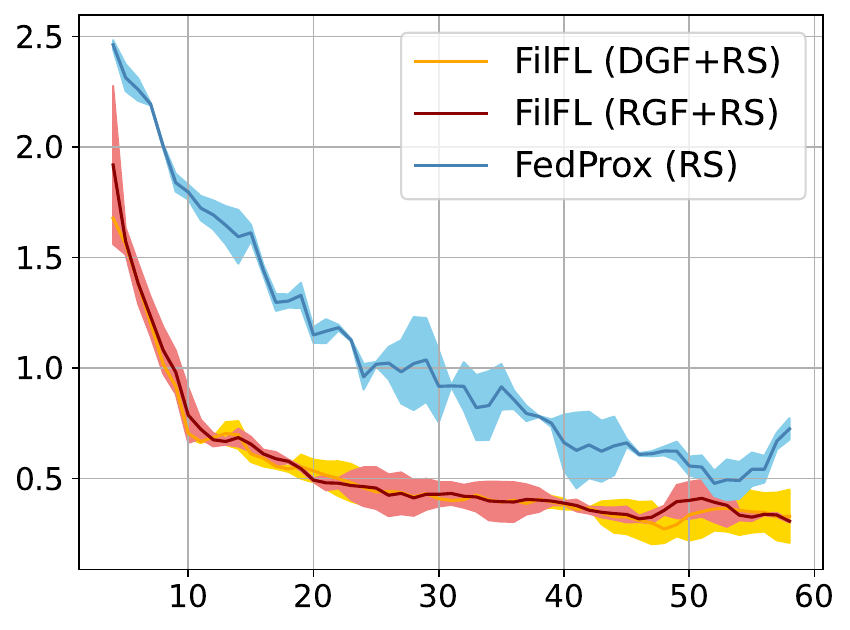}};
  \node[left=of img, node distance=0cm, rotate=90, anchor=center,yshift=-1.0cm] {\tiny Training Loss};
    \node[below=of img, node distance=0cm, rotate=0cm, anchor=center,yshift=3.7cm] {\tiny Shakespeare};
  \node[below=of img, node distance=0cm, rotate=0cm, anchor=center,yshift=1.0cm] {\tiny Round};
\end{tikzpicture}
\end{minipage}%
\caption{FilFL (FedProx + $\chi$GF + RS) vs FedProx (RS) without filtering on Shakespeare dataset.}
\label{fig:fedprox_shakespeare_}
\vspace{16pt}
\end{figure}

\subsubsection{FilFL (FedAvg with $\chi$GF and RS) vs FedAvg (DivFL).} 
As shown in \cite{balakrishnan2021diverse}, FedAvg with DivFL performs better than FedAvg with RS or PoC. However, it remains computationally more expensive than both selection methods. To investigate whether a simple client selection method like RS combined with $\chi$GF can outperform a sophisticated selection method like DivFL, we compare FilFL using RS against FedAvg (DivFL). Fig.\ref{fig:Divfl} shows that on the CIFAR-10 dataset, $\chi$GF achieves 10 percentage points higher accuracy than FedAvg (DivFL) (left plot). While FedAvg (DivFL) exhibits lower training loss than FilFL (right plot), it suffers from significantly larger test loss (see the Appendix), which can be due to the overfitting of FedAvg (DivFL) and the better generalization capabilities of FilFL. Moreover, our results indicate that although FilFL with RS and FedAvg (DivFL) have similar convergence theoretical results, our approach empirically outperforms FedAvg (DivFL). The two-tailed p-value from the t-test is 0.0018, considered as very statistically significant. In the Appendix, we show that FilFL with DivFL surpasses FedAvg (DivFL).

\begin{figure}[!tbp]
\begin{minipage}{0.24\textwidth}
\begin{tikzpicture}
  \node (img)  {\includegraphics[scale=0.23]{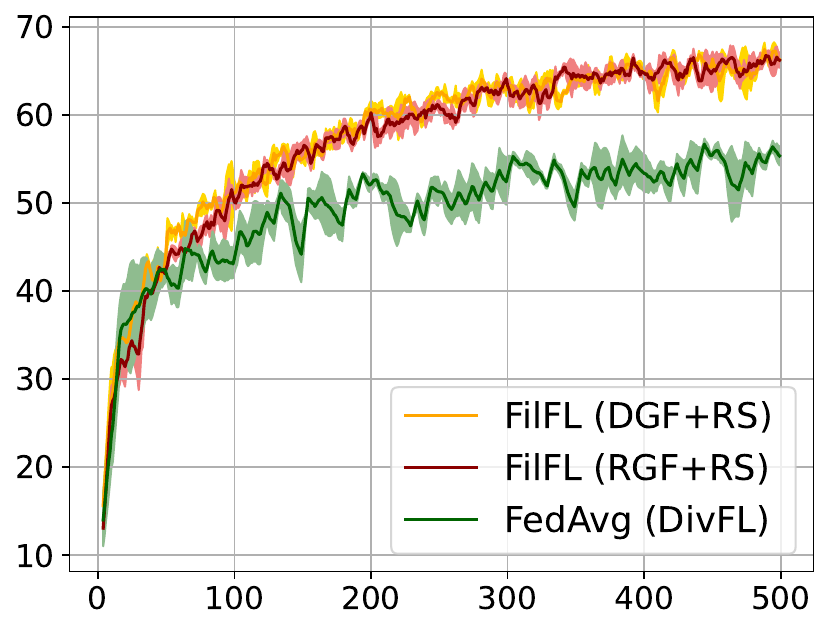}};
      \node[below=of img, node distance=0cm, rotate=0cm, anchor=center,yshift=3.7cm] {\tiny CIFAR-10};
  \node[below=of img, node distance=0cm, rotate=0cm, anchor=center,yshift=1cm] {\tiny Round};
  \node[left=of img, node distance=0cm, rotate=90, anchor=center,yshift=-1.0cm] {\tiny Test Accuracy};
 \end{tikzpicture}
\end{minipage}%
\begin{minipage}{0.24\textwidth}
\begin{tikzpicture}
  \node (img)  {\includegraphics[scale=0.23]{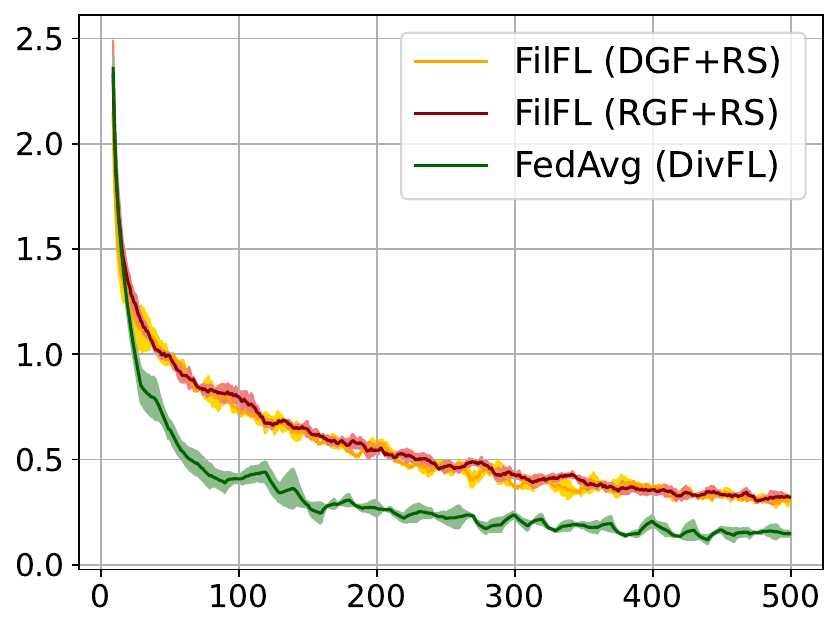}};
        \node[below=of img, node distance=0cm, rotate=0cm, anchor=center,yshift=3.7cm] {\tiny CIFAR-10};
  \node[left=of img, node distance=0cm, rotate=90, anchor=center,yshift=-1.0cm] {\tiny Training Loss};
  \node[below=of img, node distance=0cm, rotate=0cm, anchor=center,yshift=1.0cm] {\tiny Round};
\end{tikzpicture}
\end{minipage}%
\caption{ FilFL (FedAvg with $\chi$GF with RS) vs FedAvg (DivFL w/o filtering) on CIFAR-10 dataset with $N = 200$, $n=30$, $K=3$, $m=500$, and $h=5$.}
\label{fig:Divfl}
\vspace{16pt}
\end{figure}

\subsection{Ablation Studies}
\label{sensitivityanalysis}

We conduct an ablation study of the proposed approach, testing the filtering approach with various periodicity, using filtering datasets of different sizes and distributions, and using variable filtering datasets.

\subsubsection{Sensitivity to Filtering Periodicity} 
\label{sensistivity_h_main_section}

The set of available clients may remain the same over several rounds; however, their model weights change due to local training and weight aggregation. This means that \textit{client filtering} in each round may not necessarily exclude the same clients. The optimal set of clients changes significantly as the model weights change over rounds. However, \textit{client filtering} may filter in similar sets of clients for a few rounds when the weights do not change much. That is why we suggest running \textit{client filtering} periodically and applying client selection on the filtered-in set for a few rounds to exploit the set it has already found. We experiment with different periodicities $h \in \{1,3,5,10,20\}$, as shown in Fig.\ref{fig:periodsensitivityfemnist2_main}, and find that FilFL's performance is similar for these values of $h$. However, from a computational perspective, our approach is more efficient for larger periodicity $h$. 

\begin{figure}[!t]
\begin{minipage}{0.24\textwidth}
\begin{tikzpicture}
  \node (img)  {\includegraphics[scale=0.23]{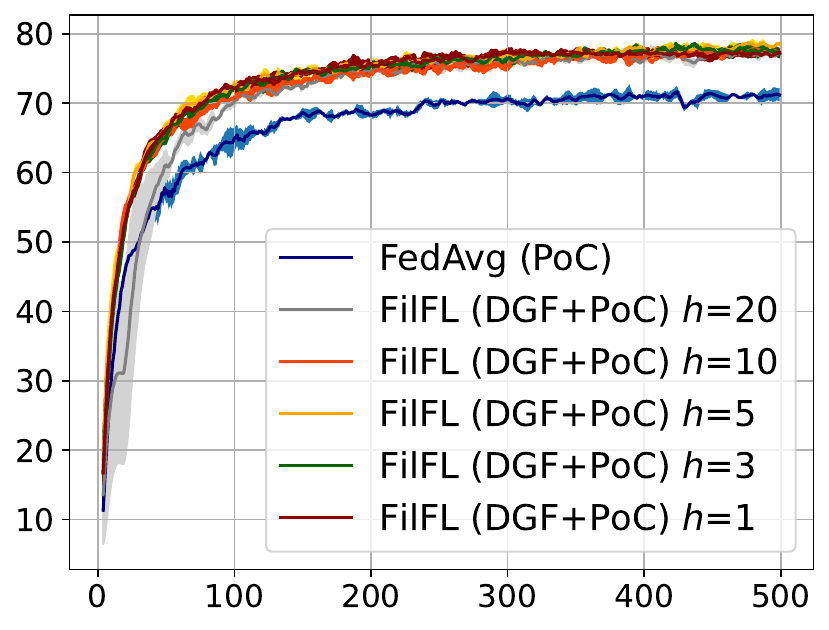}};
  \node[below=of img, node distance=0cm, rotate=0cm, anchor=center,yshift=3.7cm] {\tiny FEMNIST};
  \node[below=of img, node distance=0cm, rotate=0cm, anchor=center,yshift=1.0cm] {\tiny Round};
  \node[left=of img, node distance=0cm, rotate=90, anchor=center,yshift=-1.0cm] {\tiny Test Accuracy};
 \end{tikzpicture}
\end{minipage}%
\begin{minipage}{0.24\textwidth}
\begin{tikzpicture}
  \node (img)  {\includegraphics[scale=0.23]{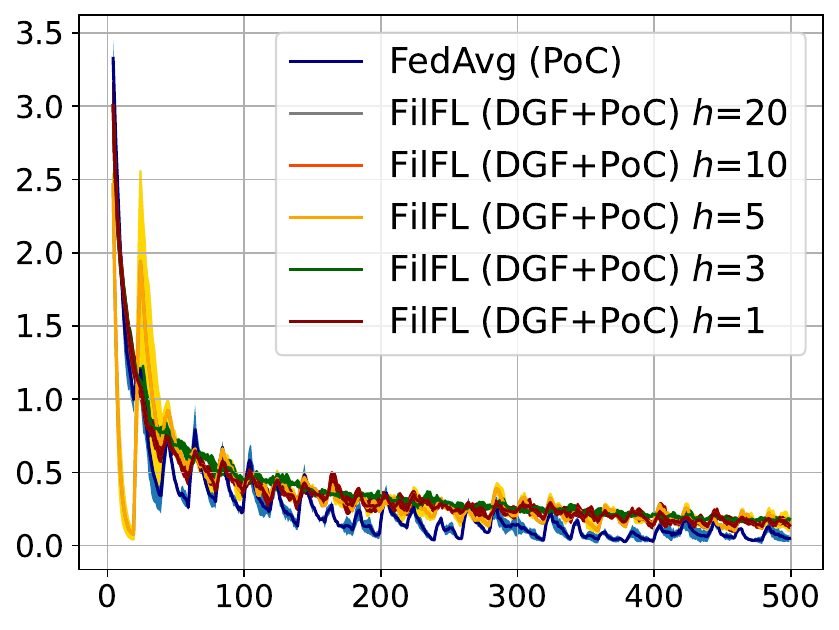}};
  \node[left=of img, node distance=0cm, rotate=90, anchor=center,yshift=-1.0cm] {\tiny Training Loss};
    \node[below=of img, node distance=0cm, rotate=0cm, anchor=center,yshift=3.7cm] {\tiny FEMNIST};
  \node[below=of img, node distance=0cm, rotate=0cm, anchor=center,yshift=1.0cm] {\tiny Round};
\end{tikzpicture}
\end{minipage}%
\caption{FilFL (FedAvg + $\chi$GF + PoC) sensitivity to periodicity $h$ on FEMNIST dataset.}
\label{fig:periodsensitivityfemnist2_main}
\vspace{12pt}
\end{figure}

\subsubsection{Sensitivity to Filtering Dataset Size \& Distribution} 


We evaluate the effectiveness of FilFL under different filtering datasets scenarios, showing its robustness across various sizes and distributions. In the Shakespeare experiment, we use small datasets consisting of parts of this paper's introduction, containing only 34, 17, and 8 samples. Fig. \ref{fig:sizesensitivityshakespeare}, shows that FilFL remains effective even with tiny filtering datasets with different distributions than the clients' datasets. The left plot shows higher test accuracy for FilFL than FedAvg, with a slight advantage for larger values of $m$.
The middle and right plots also reveal lower training loss for smaller $m$ and lower test loss for larger $m$, indicating that larger $m$ leads to better generalization. Similar results concerning the effect of dataset size on the FEMNIST dataset are presented in Appendix, with datasets of 2000, 1000, and 500 samples. Hence, FilFL shows insensitivity to the number of data points, performing well even with smaller datasets and under distribution shifts, thereby proving its versatility and robustness.

\begin{figure}[!tbp]
\begin{minipage}{0.24\textwidth}
\begin{tikzpicture}
  \node (img)  {\includegraphics[scale=0.23]{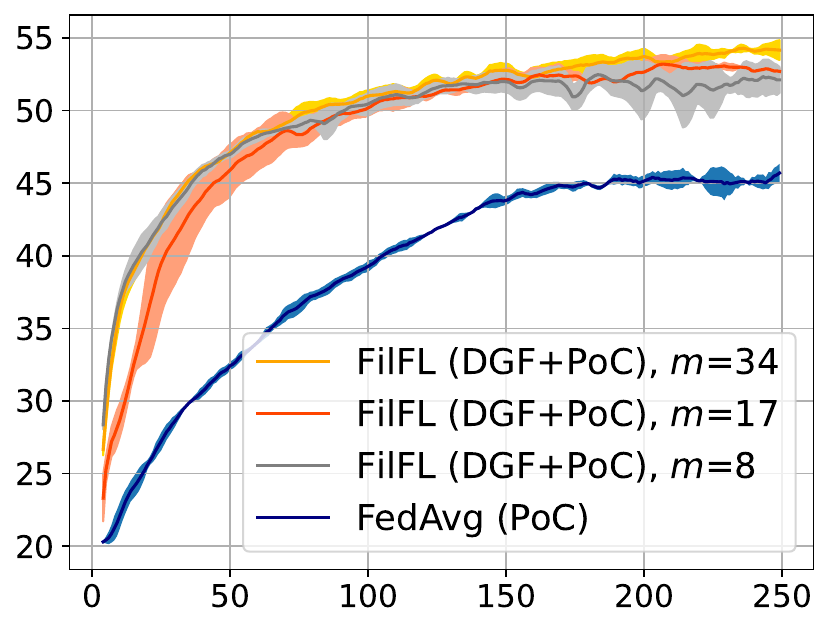}};
    \node[below=of img, node distance=0cm, rotate=0cm, anchor=center,yshift=3.7cm] {\tiny Shakespeare};
  \node[below=of img, node distance=0cm, rotate=0cm, anchor=center,yshift=1cm] {\tiny Round};
  \node[left=of img, node distance=0cm, rotate=90, anchor=center,yshift=-1.0cm] {\tiny Test Accuracy};
 \end{tikzpicture}
\end{minipage}%
\begin{minipage}{0.24\textwidth}
\begin{tikzpicture}
 \node (img)  
{\includegraphics[scale=0.23]{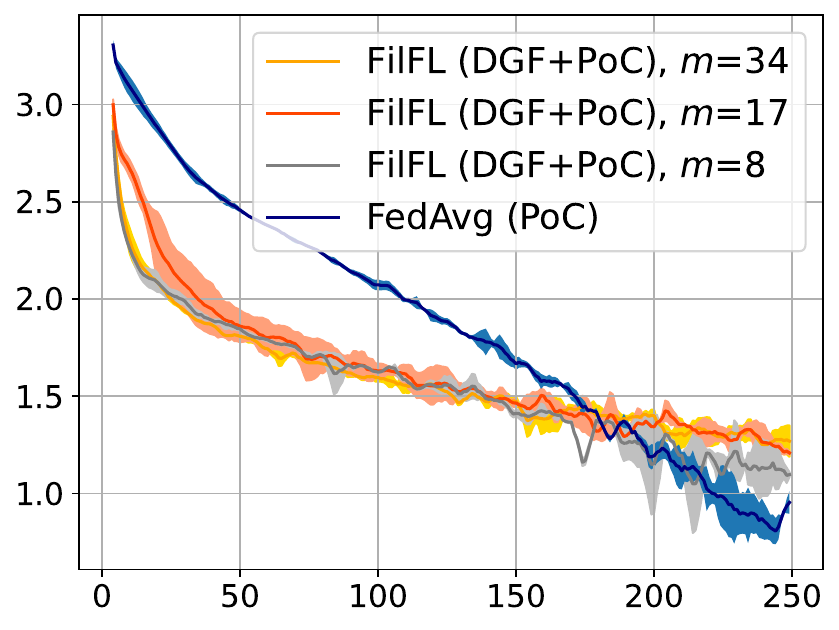}};
    \node[below=of img, node distance=0cm, rotate=0cm, anchor=center,yshift=3.7cm] {\tiny Shakespeare};
  \node[left=of img, node distance=0cm, rotate=90, anchor=center,yshift=-1.0cm] {\tiny Training Loss};
  \node[below=of img, node distance=0cm, rotate=0cm, anchor=center,yshift=1cm] {\tiny Round};
\end{tikzpicture}
\end{minipage}%
\caption{ FilFL (FedAvg with DGF) sensitivity to filtering dataset size $m$ on Shakespeare dataset with PoC for client selection, $N = 143$, $n=100$, $K=10$, and $h=5$.}
\label{fig:sizesensitivityshakespeare}
\vspace{18pt}
\end{figure}

\subsubsection{Sensitivity to Variable Filtering Datasets}
\label{variable}

We evaluate the use of a variable dataset for client filtering. Instead of solving the filtering objective on a central dataset, possibly on a subset of the server validation dataset or one single client throughout the training, we consider the case of randomly selecting a client from the available clients to perform the \textit{client filtering} task. The chosen client performs \textit{client filtering} on its own validation dataset. Therefore, the filtering dataset becomes variable depending on the chosen client in that round. Our results demonstrate that FilFL, using RGF, even in such a stochastic scenario, achieves significantly better performance than FedAvg. In particular, as depicted in Fig. \ref{fig:stoch_cifar10_main}, FilFL accomplishes accelerated training and attains approximately 10 percentage points higher test accuracy than FedAvg.

\begin{figure}[!tbp]
\centering
\begin{minipage}{0.25\textwidth}
\begin{tikzpicture}
  \node (img)  {\includegraphics[scale=0.23]{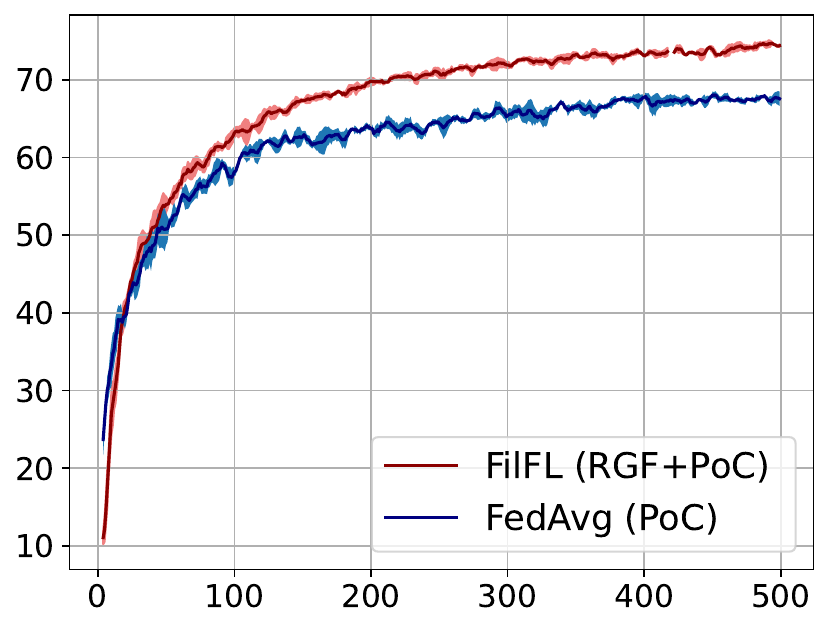}};
  \node[below=of img, node distance=0cm, rotate=0cm, anchor=center,yshift=3.7cm] {\tiny CIFAR-10};
  \node[below=of img, node distance=0cm, rotate=0cm, anchor=center,yshift=1.0cm] {\tiny Round};
  \node[left=of img, node distance=0cm, rotate=90, anchor=center,yshift=-1.0cm] {\tiny Test Accuracy};
 \end{tikzpicture}
\end{minipage}%
\begin{minipage}{0.25\textwidth}
\begin{tikzpicture}
  \node (img)  {\includegraphics[scale=0.23]{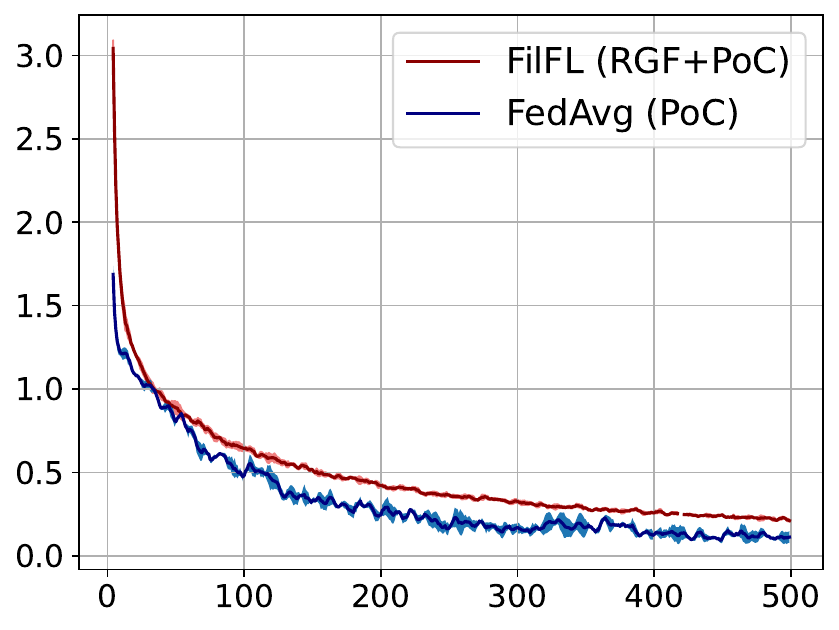}};
  \node[left=of img, node distance=0cm, rotate=90, anchor=center,yshift=-1.0cm] {\tiny Training Loss};
    \node[below=of img, node distance=0cm, rotate=0cm, anchor=center,yshift=3.7cm] {\tiny CIFAR-10};
  \node[below=of img, node distance=0cm, rotate=0cm, anchor=center,yshift=1.0cm] {\tiny Round};
\end{tikzpicture}
\end{minipage}%
\caption{FilFL (FedAvg + RGF + PoC) vs FedAvg (PoC) without filtering on CIFAR-10 dataset.}
\label{fig:stoch_cifar10_main}
\vspace{12pt}
\end{figure}

\subsection{$\chi$GF Behavior}
\label{main_section_gf}

We examine the filtering rates and approximation ratios of the RGF and DGF algorithms compared to brute force search results.

\subsubsection{Filtering Rates} 

$\chi$GF rejects multiple clients, with the average rejection rate varying depending on the task and the version (randomized or deterministic). As mentioned in Remark \ref{rejection_rate_remark}, DGF rejects more clients than RGF, roughly half the number of clients (cf. Fig. \ref{fig:filteredsize}).
Therefore, DGF is more efficient in reducing the complexity of client selection by significantly reducing the sampling space.

\begin{figure}[!tbp]
\begin{minipage}{0.24\textwidth}
\begin{tikzpicture}
  \node (img)  {\includegraphics[scale=0.23]{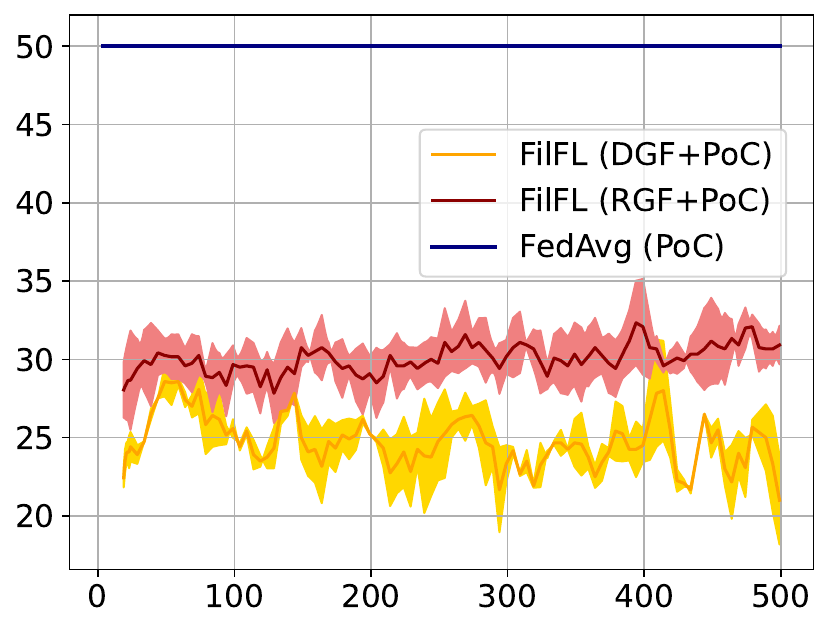}};
  \node[below=of img, node distance=0cm, rotate=0cm, anchor=center,yshift=3.7cm] {\tiny FEMNIST};
  \node[below=of img, node distance=0cm, rotate=0cm, anchor=center,yshift=1cm] {\tiny Round};
  \node[left=of img, node distance=0cm, rotate=90, anchor=center,yshift=-1.0cm] {\tiny $|\mathcal{S}^f|$};
 \end{tikzpicture}
\end{minipage}%
\begin{minipage}{0.24\textwidth}
\begin{tikzpicture}
  \node (img)  {\includegraphics[scale=0.23]{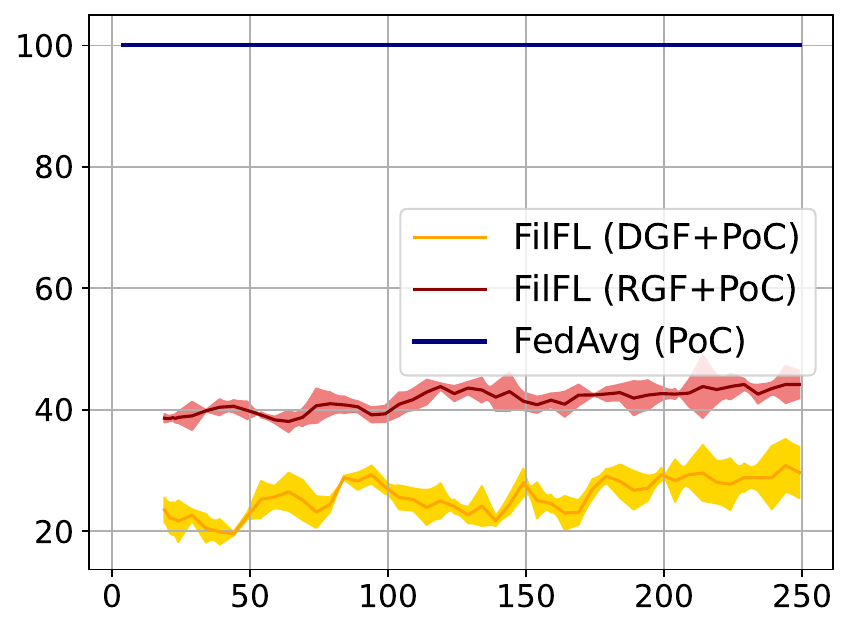}};
  \node[left=of img, node distance=0cm, rotate=90, anchor=center,yshift=-1.0cm] {\tiny $|\mathcal{S}^f|$};
    \node[below=of img, node distance=0cm, rotate=0cm, anchor=center,yshift=3.7cm] {\tiny Shakespeare};
  \node[below=of img, node distance=0cm, rotate=0cm, anchor=center,yshift=1cm] {\tiny Round};
\end{tikzpicture}
\end{minipage}%
\caption{The number of filtered-in clients, denoted as $|\mathcal{S}^f|$, for FilFL (FedAvg with $\chi$GF), over the rounds in different settings of CIFAR-10, FEMNIST, and Shakespeare datasets, with considerable available clients $n$ being 30, 50, and 100, respectively. For FedAvg without filtering, we consider $\mathcal{S}^f$ to be equal to $\mathcal{S}_t$.}
\label{fig:filteredsize}
\vspace{24pt}
\end{figure}

\subsubsection{Approximation Ratios} 

\begin{minipage}[H]{0.20\textwidth}
Fig. \ref{fig:approximations} shows the approximation ratios of both $\chi$GF versions compared to the optimal filtering (OPT) on CIFAR-10 with $N=200$ and $n=10$, which we find by evaluating $2^n - 1$ combinations. We find that both $\chi$GF versions achieve approximation ratios higher
\end{minipage}
\hspace{0.01\textwidth}
\begin{minipage}[H]{0.26\textwidth}
\vspace{-.3in}
\begin{figure}[H]
    \centering
    \begin{tikzpicture}
        \node (img) {\includegraphics[scale=0.23]{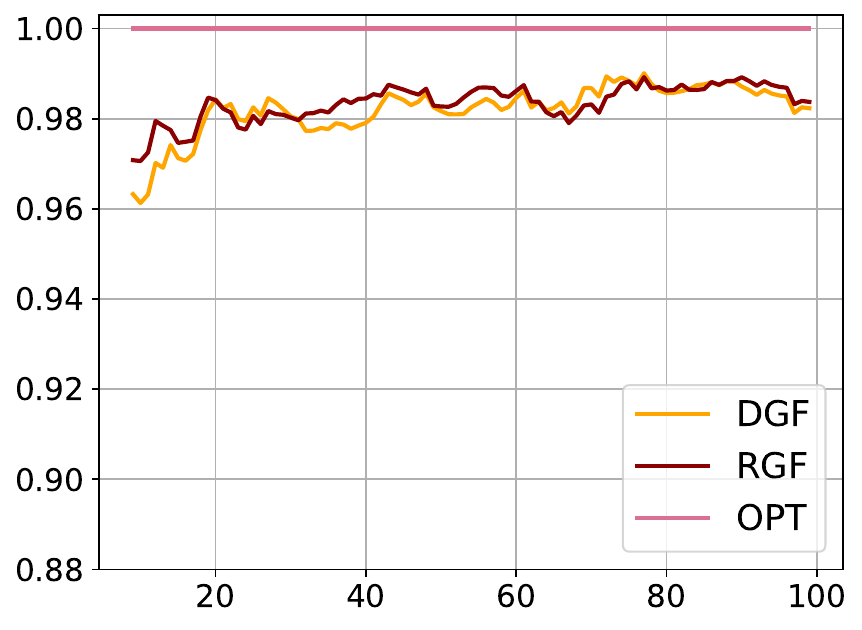}};
        \node[below=of img, node distance=0cm, rotate=0cm, anchor=center,yshift=3.7cm] {\tiny CIFAR-10};
        \node[left=of img, node distance=0cm, rotate=90, anchor=center,yshift=-1cm] {\tiny Approximation Ratio};
        \node[below=of img, node distance=0cm, rotate=0cm, anchor=center,yshift=1cm] {\tiny Round};
    \end{tikzpicture}
    \caption{ Approximation ratios of the filtering objective solution.}
    \label{fig:approximations}  
\end{figure}
\vspace{.01 in}
\end{minipage}
than 0.96, i.e., $\mathcal{R}(\mathcal{S}^f) \geq 0.96 \mathcal{R}(OPT)$ over the multiple rounds. This indicates that greedy filtering identifies near-optimal combinations of clients.
Finally, the filtering performance can be measured by the improved FL performance and the higher approximation ratios. Since both versions of $\chi$GF show similarly high ratios and improved FL performance, both can be considered effective for filtering.

\section{Conclusion}

We proposed \textit{client filtering} as a promising technique to optimize client participation and training in FL. Our proposed FL algorithm, FilFL, which incorporates the greedy filtering algorithm $\chi$GF, has proven theoretical convergence guarantees and empirically shows better learning efficiency, accelerated convergence, and higher test accuracy across different vision and language tasks.

\newpage
\bibliography{main}


\newpage
\appendix
\onecolumn

\section{Tables of Notations and Abbreviations}
\label{notation} 
\begin{table}[H]
\begin{center}
\begin{tabular}{ |c|l| } 
 \hline
 & \\
 $\Omega$ & set of all clients, $|\Omega| =N$  \\ 
 $\mathcal{S}_t$ & set of active clients in round t \\ 
  $\mathcal{S}^f$ & set of filtered clients in round t, $\mathcal{S}^f \subseteq \mathcal{S}_t $  \\
   $\mathcal{A}_t$ & set of selected clients in round t, $|\mathcal{A}_t| = K$ \\ 
    $\mathcal{V}$ & filtering dataset, $|\mathcal{P}|= m$  \\ 
    $\mathcal{D}$ & union of private datasets \\ 
    $F_k$ & loss of client $k$  \\ 
    $x_{k,i}$ & data point from the client $k$ \\
    $F$ & average loss of all clients \\ 
      $F_\mathcal{V}$ & loss on filtering dataset \\ 
      $\ell$ & some loss function \\     
      $\mathcal{R}$ & objective function (reward) for RGF \\ 
      $m_k$ & number of data points for client $k$ \\ 
      $E$ & number of local steps \\ 
      $T$ & number of communication rounds\\ 
      $t$ & time step\\ 
      $\eta$ & learning rate\\ 
      $\mathbf{w}^k_t$ & parameters of client $k$ in round t\\
      $\mathbf{w}_t$ & global model parameters in round t\\
      $h$ & periodicity of RGF\\
      $p$ & acceptance probability of RGF\\
      $p_k$ & weight of the $k^{th}$ client\\
      $X$ & set of clients\\
      $Y$ & set of clients\\
      & \\
 \hline
\end{tabular}
\end{center}
\caption {Table of notations}
\label{tab:notation} 
\end{table}

\begin{table}[H]
\begin{center}
\begin{tabular}{ |c|l| } 
 \hline
 & \\
 $\chi$GF & greedy filtering \\
 RGF & randomized greedy filtering \\ 
 DGF & determinsitic greedy filtering \\ 
 OPT & optimal filtering (grid search) \\
 SGD & stochastic gradient descent \\
 RS & random sampling \cite{li2019convergence} \\ 
 PoC & power-of-choice \cite{cho2020client}\\ 
 FedAvg & federated averaging \cite{mcmahan2017communication}\\ 
 DivFL & diverse client selection \cite{balakrishnan2021diverse} \\ 
 FedProx & FL algorithm in \cite{li2020federated} \\
 w/o & without \\
 IID & independent and identically distributed \\
 & \\
 \hline
\end{tabular}
\end{center}
\caption {Table of abbreviations}
\label{tab:abbrev} 
\end{table}

\newpage
\section{Extended Related Work}
Generalization aims to find a global model that performs well for most FL clients \cite{kairouz2021advances, meng2021cross, yu2021fed2,  mothukuri2021survey}.
In FL, one of the most significant challenges is data heterogeneity, which causes training to slow down \cite{li2019convergence,karimireddy2019scaffold}. This heterogeneity leads to a learning trend that becomes noisy and unstable \cite{caldarola2022improving}, while the global model suffers from catastrophic forgetting of the knowledge acquired by previously involved clients \cite{shoham2019overcoming, kirkpatrick2017overcoming}.

Several methods have been proposed to enhance generalization in FL. Some of these methods focus on the client side, incorporating a regularization term \cite{karimireddy2019scaffold, li2020federated, acar2021federated, ozfatura2021fedadc, varno2022adabest} into the local objective to mitigate client drift. Others utilize momentum to integrate knowledge from previous updates, guiding local optimization along the trajectory defined by the global models across rounds \cite{karimireddy2019scaffold, kim2022communication, wang2019federated, xu2021fedcm, liu2023enhance}. Some other methods have been proposed on the server side by modifying the aggregation procedure \cite{fednova, duan2021feddna, caldarola2022improving, izmailov2018averaging} or by post-aggregation refinement \cite{lin2020ensemble, chen2020fedbe, sattler2021fedaux}. 

Another line of research focused on mitigating heterogeneity by adding conditions on the client selected in a specific round, including, sampling clients with more significant update norms with higher probability \cite{chen2020optimal}, using power-of-choice (PoC), a biased client selection method that selects clients with higher local losses \cite{cho2020client}, and diverse selection (DivFL), which selects a diverse subset of clients that carry representative gradient information \cite{balakrishnan2021diverse}. Nevertheless, the approaches above select participants from the entire available pool without considering whether they are all appropriate for collaboration at the current stage of the training process.

\section{Experimental Details}
\label{exp-appendix}

\subsection{CIFAR-based benchmarks}

\textit{Distribution.} We first split CIFAR-10 train datasets into private and server datasets, where the server partition fraction is $0.01$, and it is used by the filtering algorithm. The private dataset is distributed non-IID among all the clients and split into a train ($0.9$) and validation ($0.1$) datasets. Similar to existing works \cite{acar2021federated, he2020fedml, yurochkin2019bayesian}, to simulate the non-IID data distribution among clients, we use the Dirichlet distribution Dir($\alpha$)  where a smaller $\alpha$ indicates higher data heterogeneity. We report results with $\alpha = 0.5$. Finally, we use the existing CIFAR-10 test sets as global test sets. 

\textit{Model.} We employ ResNet18 \cite{he2016deep} as the basic backbone. 

\textit{Hyperparameters.} We set the number of local training epoch $E=5$, communication rounds $T=500$, and the number of clients $N=200$. To make the simulation more realistic, we also simulate behaviour heterogeneity by considering a time-varying set of available clients $\mathcal{S}_t$ of size $n = 100$, $n=30$, and $n=10$, in the Appendix Subsection "FilFL (FedAvg with $\chi$GF and PoC) vs FedAvg (PoC)", for Section "FilFL Convergence Analysis", and for Subsection "$\chi$GF Behavior", respectively, randomly selected without replacement from the entire pool of clients every $5$ round. We set the filtering periodicity as $h=5$ for both the Appendix Subsection "FilFL (FedAvg with $\chi$GF and PoC) vs FedAvg (PoC)" and Section "FilFL Convergence Analysis" and $h=1$ for Subsection "$\chi$GF Behavior". Then, we conduct client selection with the fraction $C=0.1$ (e.g., $K = \left|\mathcal{A}_t\right|=10$ for $n=100$). For local training, the batch size is $16$, and the weight decay is $1 e-3$. The learning rate is $0.1$, with a decaying factor of $0.998$ every $10$ rounds.

\subsection{FEMNIST-based benchmarks}

\textit{Distribution.} We use the FEMNIST dataset from the LEAF framework \cite{caldas2018leaf}. The dataset comprises train and test datasets containing a client-data mapping file that splits the data in a non-IID manner among the clients. It has natural heterogeneity stemming from the writing style of each person. Following \cite{caldas2018leaf}, we use only 5\% of the FEMNIST available dataset with 190 clients. We split the training data of each client into three parts; validation data ($0.2$), server data ($0.05$) and training data ($0.75$). We concatenate all the server datasets from all the clients to obtain a global server dataset representative of all clients. Finally, we use the test set as a global test set. 

\textit{Model.} Similar to \cite{caldas2018leaf}, we use a model with two convolutional layers followed by pooling and ReLU and a final dense layer with 2048 units. 

\textit{Hyperparameters.} We set the number of local training epoch $E=2$, communication rounds $T=500$, and the number of clients $N=190$. To make the simulation more realistic, we simulate behaviour heterogeneity by considering a time-varying set of available clients $\mathcal{S}_t$ of size $n = 50$, randomly selected without replacement from the full pool of clients every $5$ rounds, except for Fig.~\ref{fig:periodsensitivityfemnist2} where we it is done every 20 rounds. Moreover, we choose the filtering periodicity $h=5$. Furthermore, we study the impact of periodcity in Sec.~\ref{appendix:periodicity} by setting $h=1$, $h=3$, and $h=5$ in Fig.~\ref{fig:app:periodsensitivityfemnist}, and $h=10$ and $h=20$ in Fig.~\ref{fig:periodsensitivityfemnist2}. Then we conduct client selection with the fraction $C=0.1$ (i.e., $K = \left|\mathcal{A}_t\right|=5$ ). For local training, the batch size is $50$. The learning rate is $0.003$.

\subsection{Shakespeare-based benchmarks}
\label{appendix:shakespearesetup}

\textit{Distribution.} 
We use the Shakespeare dataset from the LEAF framework \cite{caldas2018leaf}. The dataset comprises train and test datasets containing a client-data mapping file that splits the data in a non-IID manner among the clients. It is built from The Complete Works of William Shakespeare, where
each speaking role in each play is considered a different device. Following \cite{caldas2018leaf}, we use only 5\% of the Shakespeare available dataset with 143 clients. We split the training data of each client into two parts; validation data ($0.2$) and training data ($0.8$). We use some text from our own work to build the filtering dataset. Finally, we use the test set as a global test set. 

\textit{Model.} We use a two-layer LSTM classifier containing 256 hidden units with an 8D embedding
layer. The task is a next-character prediction with 80 classes of characters in total. The model takes as input a sequence of 80 characters, embeds the characters
into a learned 8-dimensional space, and outputs one character per training sample after 2 LSTM layers
and a densely-connected layer.

\textit{Hyperparameters.} 
We set the number of local training epoch $E=1$, communication rounds $T=250$, and the number of clients $N=143$. To make the simulation more realistic, we simulate behavior heterogeneity by considering a time-varying set of available clients $\mathcal{S}_t$ of size $n = 100$, randomly selected without replacement from the full pool of clients every $5$ rounds. Moreover, we choose the filtering periodicity $h=5$. We conduct client selection with the fraction $C=0.1$ (i.e., $K = \left|\mathcal{A}_t\right|=10$ ). For local training, the batch size is $64$. The learning rate is $0.8$.

\textit{Server dataset.}
We use a small filtering dataset from a different distribution, specifically consisting of parts of our own text, as shown in Table~\ref{tab:filtering}. The first column represents the index of the data point, the middle column shows the phrase $x$, which consists of 80 characters (features), and the last column represents the next character to predict (label), denoted as $y$.

\begin{table}[H]
\begin{center}
\begin{tabular}{ |c|c|l| } 
 \hline
\# & X &  y\\
 \hline

& & \\
1 & Federated learning has emerged as a promising machine learning paradigm that al & l \\ 
2 & ows collaborative training across distributed clients while keeping their data & l\\
3 &  ocal. However, the success of federated learning heavily relies on overcoming t, & h\\
4 &  e challenges of training with a large number of clients and non-iid data, which, & \\
5 &  often leads to unstable and slow convergence and suboptimal model performance. , & T\\
6 &  o address these challenges, many client selection methods have been proposed to & \\
7 &  optimize partial client participation and mitigate the impact of heterogeneous , & c \\
8 &  lients. However, these methods only select participants from the pool of availa, & b\\
9 &  le clients without considering whether the cohort of clients selected at each r, & o\\
10 &  und contains the most suitable ones. In this context, we introduce a novel appr, & o\\
11 &  ach called FilFL, which proposes a \textit{client filtering} procedure to identify the c, & l\\
12 &  ients that should be considered at each stage of the training process. FilFL di, & s\\
13 &  cards clients that are likely to have only marginal improvements in the trained, &   \\
14 &  model compared to other more promising clients. The assessment of client improv, & e\\
15 & ment uses a filtering dataset held by the FL server to gauge the representativenes, & s\\
16 &  of different local client data towards global model performance. The main cont, & r\\
 17 & ibution of our work lies in proposing a yet unexplored approach to optimize cli, & e\\
 18 & nt participation in federated learning, based on joint representativeness of th, & e\\
19 & overall data. This approach identifies a subset of collaborative clients that , & a\\
20 &  re filtered based on their suitability as an addition to the other available cl, & i\\
21 & ents. The proposed filtering algorithm discards a client when it is not suitabl, & e\\
22 &  for the given stage of the training process but keeps it available for later r, & o\\
23 & unds. To filter clients, we define a non-monotone combinatorial maximization pr, & o\\
24 &  blem, and propose a randomized greedy filtering algorithm that adapts the best , & t\\
25 &  heoretical guarantees for offline and online submodular maximization. Our appro, & a\\
26 &  ch not only promises to improve the convergence and performance of federated le, & a\\
27 & rning, but it also ensures the privacy and security of the client data. Overall, & ,\\
28 & our work presents a novel and promising solution for optimizing client partici, & p\\
29 &  ation in federated learning and contributes to advancing the state-of-the-art i, & n\\
30 &  this important research direction. We introduce \textit{client filtering} in FL (or Fil, & F\\
31 & L), which incorporates \textit{client filtering} into the most widely studied FL scheme,, &  \\
32 &  federated averaging (FedAvg). We first present a combinatorial objective for cl, & i\\
33 & ent filtering. We then present the randomized greedy algorithm that periodicall, & y\\
34 &  optimizes the objective by selecting a filtered subset of clients to be used f  & o\\
 & & \\
 \hline
\end{tabular}
\end{center}
\caption{Server Dataset for Shakespeare Experiments}
\label{tab:filtering} 
\end{table}

\subsection{Compute and resources}
In our experiments we simulate different FL benchmarks. We use a
cluster of NVIDIA Tesla V100 GPUs, all having 32GB memory, to sequentially train $K$ clients. We implement using PyTorch v1.10.2. The code is provided in the supplementary material (will be made open source).

\newpage
\section{Main Lemmas with Proofs}
\label{proofs-appendix}

\begin{lemma}
\label{proof-lemma0-appendix}
Under assumptions \ref{ass1}, \ref{ass2}, \ref{ass3}, \ref{ass4}, and \ref{ass5}, for the gap $\delta_t$ defined in \ref{epsilont}, we have
\begin{equation}
\mathbb{E}\left[\delta_t\right] \geq \delta
\end{equation}
for some constant $\delta$.
\end{lemma}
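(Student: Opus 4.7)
The plan is to decompose the gap as
\begin{equation*}
\delta_t = \bigl(F^{\mathcal{D}}(\bar{\mathbf{v}}_t) - F^{\mathcal{P}}(\bar{\mathbf{v}}_t)\bigr) + \bigl(F^{\mathcal{P}}(\bar{\mathbf{v}}_t) - F^{\mathcal{P}}(\bar{\mathbf{z}}_t)\bigr) + \bigl(F^{\mathcal{P}}(\bar{\mathbf{z}}_t) - F^{\mathcal{D}}(\bar{\mathbf{z}}_t)\bigr),
\end{equation*}
and to lower-bound each of the three brackets by a (possibly negative) constant that is uniform in $t$, combining the approximation guarantee of $\chi$GF on the (weakly) submodular objective $\mathcal{R}$ with Lipschitz-type estimates that bridge the public loss $F^{\mathcal{P}}$ and the private loss $F^{\mathcal{D}}$.

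First I would establish uniform boundedness of the iterates: starting from a bounded initialization and using Assumption~\ref{ass4} together with the decaying step size $\eta_t = \beta/t$, a standard induction yields $\|\mathbf{w}_t^k\|,\|\mathbf{v}_t^k\|\leq R$, so in particular both $\bar{\mathbf{v}}_t$ and $\bar{\mathbf{z}}_t$ lie in a fixed compact region $\mathcal{B}$. Since the per-client gradients are uniformly bounded by $G$ on $\mathcal{B}$, both $F^{\mathcal{D}}$ and $F^{\mathcal{P}}$ inherit $G$-Lipschitzness on $\mathcal{B}$, and the outer two brackets are bounded in absolute value by $\tau := \sup_{\mathbf{w}\in\mathcal{B}}|F^{\mathcal{D}}(\mathbf{w}) - F^{\mathcal{P}}(\mathbf{w})|$. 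For the middle bracket I would invoke the approximation property of $\chi$GF, which, in combination with the weak submodularity of $\mathcal{R}$ established in Appendix~\ref{appendix:weak_submodular}, yields $\mathbb{E}[\mathcal{R}(\mathcal{S}_t^f)] \geq \alpha \cdot \mathcal{R}(\mathcal{S}^\star_t) \geq \alpha \cdot \mathcal{R}(\mathcal{S}_t)$ for some $\alpha > 0$, using the full available pool $\mathcal{S}_t$ as a baseline feasible subset. Unfolding the definition of $\mathcal{R}$ and again using $G$-Lipschitzness together with the distance bound $\|\bar{\mathbf{v}}_t - \bar{\mathbf{u}}_t\| \leq 2R$ between the weighted average $\bar{\mathbf{v}}_t$ and the simple average $\bar{\mathbf{u}}_t$ over $\mathcal{S}_t$, this implies $\mathbb{E}[F^{\mathcal{P}}(\bar{\mathbf{v}}_t) - F^{\mathcal{P}}(\bar{\mathbf{z}}_t)] \geq -c$ for a constant $c$. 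Summing the three estimates gives $\mathbb{E}[\delta_t] \geq -2\tau - c =: \delta$.

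The main obstacle is the public--private bridge, because the listed assumptions do not explicitly control how close $\mathcal{P}$ and $\mathcal{D}$ are; the quantity $\tau$ above therefore requires either an additional distributional-closeness assumption or a Hoeffding-style concentration when $\mathcal{P}$ is an i.i.d.\ sample from $\mathcal{D}$. A safer, fully assumption-compatible fallback is to skip the approximation step entirely and use only $G$-Lipschitzness together with compactness of $\mathcal{B}$ to write $\delta_t \geq -G\,\|\bar{\mathbf{v}}_t - \bar{\mathbf{z}}_t\| \geq -2GR$, which already gives a $t$-independent constant and suffices for the statement of Lemma~\ref{proof-lemma0-appendix}, even though it does not explicitly exploit the filtering guarantee. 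I expect the authors' proof to proceed along this cheaper route, since the lemma only asserts existence of some constant $\delta$; the motivation for why $\delta$ is in practice much larger, and often positive, is already supplied informally by the paragraph preceding the lemma about optimality of $\chi$GF on $\mathcal{R}$.
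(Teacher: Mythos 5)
Your fallback route is the one the paper actually takes: the authors make no use of the filtering guarantee or the public loss in this lemma, and simply show that $F^{\mathcal{D}}(\bar{\mathbf{z}}_t)-F^{\mathcal{D}}(\bar{\mathbf{v}}_t)$ is bounded above by a $t$-independent constant, yielding a (negative) $\delta$. Your primary route, via the decomposition through $F^{\mathcal{P}}$ and the $\chi$GF approximation ratio, is not what the paper does, and you correctly diagnose why it cannot be carried out under the stated assumptions: nothing in Assumptions~\ref{ass1}--\ref{samplingassumptio} controls the discrepancy between $\mathcal{P}$ and $\mathcal{D}$, so the bridge term $\tau$ is not available. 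Your instinct to fall back on the cheap bound is exactly right.

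One concrete repair is needed in the fallback, though. You obtain compactness of the iterates by ``a standard induction'' using the bounded gradients and the step size $\eta_t=\beta/t$; but the telescoped bound is $\|\mathbf{w}_t^k\|\leq\|\mathbf{w}_0^k\|+G\sum_{s\leq t}\eta_s$, and $\sum_s \beta/s$ diverges, so this induction does not give a uniform radius $R$. The paper sidesteps this entirely by using $\mu$-strong convexity (Assumption~\ref{ass2}) together with the gradient bound (Assumption~\ref{ass4}) to get $\|\mathbf{v}_t^k-\mathbf{v}_k^*\|\leq\frac{1}{\mu}\|\nabla F_k(\mathbf{v}_t^k)\|\leq\frac{G}{\mu}$ directly, whence $\|\bar{\mathbf{z}}_t-\bar{\mathbf{v}}_t\|\leq 2\sum_{k}\left[\frac{G}{\mu}+\|\mathbf{v}_k^*\|\right]$ by the triangle inequality, and then combines this with a smoothness/strong-convexity and Cauchy--Schwarz estimate of the function gap. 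If you replace your step-size induction with this strong-convexity bound, your $\delta_t\geq -G\,\|\bar{\mathbf{v}}_t-\bar{\mathbf{z}}_t\|$ argument closes and is, if anything, slightly cleaner than the paper's own chain of inequalities.
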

\begin{proof}
By $\mu$-strong convexity, Assumption \ref{ass1}, and L-smoothness, Assumption \ref{ass2}, we have
\begin{equation}
\begin{aligned}
    F(\bar{\mathbf{z}}_t) - F(\bar{\mathbf{v}}_t) &\leq \frac{1}{2\mu}\|\nabla F(\bar{\mathbf{z}}_t) - \nabla F(\bar{\mathbf{v}}_t)\|^2 + \frac{1}{2} \langle \nabla F(\bar{\mathbf{v}}_t), \bar{\mathbf{z}}_t - \bar{\mathbf{v}}_t\rangle\\
\end{aligned}
\end{equation}
By the Cauchy–Schwarz inequality, we have
\begin{equation}
\label{eq16}
\begin{aligned}
\mathbb{E}\left[F(\bar{\mathbf{z}}_t) - F(\bar{\mathbf{v}}_t)\right] &\leq \frac{1}{2 \mu} \mathbb{E} \left[\|\nabla F(\bar{\mathbf{z}}_t) - \nabla F(\bar{\mathbf{v}}_t)\|^2 \right] + \frac{1}{2}  \mathbb{E} \left[\| \nabla F(\bar{\mathbf{v}}_t)\| \|\bar{\mathbf{z}}_t - \bar{\mathbf{v}}_t\|\right]\\
&\leq \frac{1}{2 \mu} \sum_k \mathbb{E}\left[\|\nabla F_k^{\mathcal{D}}(\bar{\mathbf{z}}_t) - \nabla F_k^{\mathcal{D}}(\bar{\mathbf{v}}_t)\|^2 \right] + \frac{1}{2}  \mathbb{E} \left[\|\nabla F(\bar{\mathbf{v}}_t)\| \|\bar{\mathbf{z}}_t - \bar{\mathbf{v}}_t\|\right]\\ 
&\leq \frac{1}{2 \mu} \sum_k \sigma_k^2 + \frac{1}{2}  \mathbb{E}\left[G \|\bar{\mathbf{z}}_t - \bar{\mathbf{v}}_t\|\right], 
\end{aligned}
\end{equation}
where the last inequality follows from Assumption \ref{ass3} and Assumption \ref{ass4}.

Moreover,
\begin{equation}
\label{eq17}
\begin{aligned}
\|\bar{\mathbf{z}}_t - \bar{\mathbf{v}}_t\| &= \|\sum_{k \in[N]} p_k \mathbf{v}_t^k - \frac{1}{|\mathcal{S}_t^{f}|}\sum_{k \in \mathcal{S}_t^{f}} \mathbf{v}_t^k\| \\
&\leq \|\sum_{k \in[N]} p_k \mathbf{v}_t^k\| + \|\frac{1}{|\mathcal{S}_t^{f}|}\sum_{k \in \mathcal{S}_t^{*}} \mathbf{v}_t^k\| \\ 
&\leq \sum_{k \in[N]} p_k \| \mathbf{v}_t^k\| + \frac{1}{|\mathcal{S}_t^{f}|} \sum_{k \in \mathcal{S}_t^{f}} \| \mathbf{v}_t^k\| \\ 
&\leq 2 \sum_{k \in[N]} \| \mathbf{v}_t^k\| \\ 
&\leq 2 \sum_{k \in[N]} \left[\| \mathbf{v}_t^k- \mathbf{v}_k^*\| +\|\mathbf{v}_k^*\|  \right]
\end{aligned}
\end{equation}

Furthermore, by $\mu$-strong convexity, Assumption \ref{ass2}, and Assumption \ref{ass4}, we have 
\begin{equation}
\label{eq18}
    \|\bar{\mathbf{v}}^k_{t}-\mathbf{v}_k^*\| \leq \frac{1}{\mu} \|\nabla F_k\left(\mathbf{v}^k_t\right)\| \leq \frac{G}{\mu}
\end{equation}

Thus, by Eq. (\ref{eq17}) and Eq. (\ref{eq18}), we have
\begin{equation}
\label{eq19}
\begin{aligned}
\|\bar{\mathbf{z}}_t - \bar{\mathbf{v}}_t\| \leq  \sum_{k \in[N]} 2 \left[ \frac{G}{\mu} +\|\mathbf{v}_k^*\|  \right] 
\end{aligned}
\end{equation}

Using Eq. (\ref{eq16}) and Eq. (\ref{eq19}), we have
\begin{equation}
\begin{aligned}
\mathbb{E}\left[F(\bar{\mathbf{z}}_t) - F(\bar{\mathbf{v}}_t)\right] \leq  \frac{1}{2 \mu} \sum_k \sigma_k^2 + G \sum_{k \in[N]}  \left[ \frac{G}{\mu} +\|\mathbf{v}_k^*\|  \right] \leq -\delta\\ 
\end{aligned}
\end{equation}
for $\delta = -\frac{1}{2 \mu} \sum_k \sigma_k^2 - G \sum_{k \in[N]}  \left[ \frac{G}{\mu} +\|\mathbf{v}_k^*\|  \right]$, which does not depend on $T$ and only on the problem parameters.

Therefore, we obtain
\begin{equation}
\begin{aligned}
\mathbb{E}\left[\delta_t\right]=\mathbb{E}\left[F(\bar{\mathbf{v}}_t) - F(\bar{\mathbf{z}}_t)\right] \geq  \delta\\ 
\end{aligned}
\end{equation}
\end{proof}

\begin{lemma}
\label{proof-lemma1-appendix}
Under assumptions \ref{ass1}, \ref{ass2}, and \ref{ass4} for the sequences, $\bar{\mathbf{z}}_t$ and $\bar{\mathbf{v}}_t$, we have
\begin{equation}
\begin{aligned}
\label{lemma1}
\mathbb{E} \left[\|\bar{\mathbf{z}}_t - \bar{\mathbf{v}}_t\|^2\right] \leq \frac{G^2}{\mu^2}  - \frac{2\delta}{\mu}
\end{aligned}
\end{equation}
\end{lemma}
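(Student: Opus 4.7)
The plan is to exploit the $\mu$-strong convexity of $F^{\mathcal{D}}$, which is inherited from Assumption~\ref{ass2} since $F^{\mathcal{D}}=\sum_k p_k F_k$ is a positive convex combination of $\mu$-strongly convex functions, to relate $\|\bar{\mathbf{z}}_t-\bar{\mathbf{v}}_t\|^2$ directly to the scalar gap $\delta_t$ defined in Eq.~(\ref{epsilont}), and then invoke Lemma~\ref{proof-lemma0-appendix} to pass from the random quantity $\delta_t$ to its deterministic lower bound $\delta$.

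First, I would write the strong-convexity inequality around the anchor $\bar{\mathbf{v}}_t$ evaluated at $\bar{\mathbf{z}}_t$:
\[
F^{\mathcal{D}}(\bar{\mathbf{z}}_t) \,\geq\, F^{\mathcal{D}}(\bar{\mathbf{v}}_t) + \langle \nabla F^{\mathcal{D}}(\bar{\mathbf{v}}_t),\,\bar{\mathbf{z}}_t-\bar{\mathbf{v}}_t\rangle + \tfrac{\mu}{2}\|\bar{\mathbf{z}}_t-\bar{\mathbf{v}}_t\|^2,
\]
which rearranges to $\tfrac{\mu}{2}\|\bar{\mathbf{z}}_t-\bar{\mathbf{v}}_t\|^2 \,\leq\, -\delta_t \,-\, \langle \nabla F^{\mathcal{D}}(\bar{\mathbf{v}}_t),\,\bar{\mathbf{z}}_t-\bar{\mathbf{v}}_t\rangle$. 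Next, I would bound the cross term by Cauchy--Schwarz, using $\|\nabla F^{\mathcal{D}}(\bar{\mathbf{v}}_t)\|\leq G$, which follows by Jensen's inequality applied to $\nabla F^{\mathcal{D}}=\sum_k p_k \nabla F_k$ together with Assumption~\ref{ass4}. Then I would apply Young's inequality of the form $G a \leq \tfrac{G^2}{2c}+\tfrac{c}{2}a^2$ (with $a=\|\bar{\mathbf{z}}_t-\bar{\mathbf{v}}_t\|$) for a carefully tuned constant $c>0$, separating $\|\bar{\mathbf{z}}_t-\bar{\mathbf{v}}_t\|^2$ from $G^2$ without triggering a trivial cancellation on the left-hand side.

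The final step is to take expectations on both sides and apply Lemma~\ref{proof-lemma0-appendix} to replace $-\mathbb{E}[\delta_t]$ by $-\delta$, then isolate $\mathbb{E}[\|\bar{\mathbf{z}}_t-\bar{\mathbf{v}}_t\|^2]$ and match the constants to obtain the target $\tfrac{G^2}{\mu^2}-\tfrac{2\delta}{\mu}$.

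The main obstacle will be the bookkeeping of constants in Young's inequality: a naive parameter choice either cancels the $\|\bar{\mathbf{z}}_t-\bar{\mathbf{v}}_t\|^2$ term on both sides (yielding a vacuous inequality) or produces the bound with larger absolute constants than those stated. An alternative and possibly cleaner route is to treat $\tfrac{\mu}{2}a^2 - G a + \delta_t \leq 0$ as a quadratic inequality in $a=\|\bar{\mathbf{z}}_t-\bar{\mathbf{v}}_t\|$ --- its discriminant $G^2-2\mu\delta_t$ is nonnegative because $\delta_t\leq 0$ in view of Lemma~\ref{proof-lemma0-appendix} --- extract the upper root $a \leq (G+\sqrt{G^2-2\mu\delta_t})/\mu$, square it, take expectations with Jensen's inequality $\mathbb{E}[a]\leq \sqrt{\mathbb{E}[a^2]}$, and simplify. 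Either route will deliver the bound, though the exact stated coefficient $\tfrac{G^2}{\mu^2}$ requires the tuned Young variant rather than a generic $(x+y)^2\leq 2x^2+2y^2$ relaxation.
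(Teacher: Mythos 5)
Your skeleton is the same as the paper's: write $\mu$-strong convexity of $F^{\mathcal{D}}$ around $\bar{\mathbf{v}}_t$ at $\bar{\mathbf{z}}_t$, recognize $F^{\mathcal{D}}(\bar{\mathbf{z}}_t)-F^{\mathcal{D}}(\bar{\mathbf{v}}_t)=-\delta_t$, bound the cross term by Cauchy--Schwarz with $\|\nabla F^{\mathcal{D}}(\bar{\mathbf{v}}_t)\|\le G$, resolve the resulting scalar inequality $\tfrac{\mu}{2}a^2\le Ga-\delta_t$ in $a=\|\bar{\mathbf{z}}_t-\bar{\mathbf{v}}_t\|$, take expectations, and invoke Lemma~\ref{proof-lemma0-appendix}. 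The only divergence is the last analytic step: the paper completes the square, while you propose Young's inequality or explicit root extraction.

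The concrete problem is that neither of your routes can reach the stated constant, and you have correctly sensed this. From $\tfrac{\mu}{2}a^2\le Ga-\delta_t$, Young's inequality $Ga\le \tfrac{c}{2}a^2+\tfrac{G^2}{2c}$ gives $a^2\le \tfrac{G^2}{c(\mu-c)}-\tfrac{2\delta_t}{\mu-c}$ for $0<c<\mu$; since $c(\mu-c)\le \mu^2/4$, the first coefficient is at least $\tfrac{4}{\mu^2}$, and the second coefficient equals $\tfrac{2}{\mu}$ only in the degenerate limit $c\to 0$. There is no ``carefully tuned'' $c$: the best simultaneous bound is of the form $\tfrac{4G^2}{\mu^2}-\tfrac{4\delta_t}{\mu}$, and the root-extraction route gives the same after squaring. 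Indeed the target inequality $a^2\le \tfrac{G^2}{\mu^2}-\tfrac{2\delta_t}{\mu}$ is simply not a consequence of $\tfrac{\mu}{2}a^2\le Ga-\delta_t$ (take $\delta_t=0$ and $a=2G/\mu$, which satisfies the hypothesis but violates the conclusion). The paper's own derivation obtains the stated constant only through a completing-the-square display that drops the cross term $2ab$ when passing from $(a-b)^2\le c$ to $a^2\le c$ (the displayed $(a+b)^2$ is inconsistent with the identity $a^2-2ab+b^2=(a-b)^2$), so your honest version with the larger constants $\tfrac{4G^2}{\mu^2}-\tfrac{4\delta}{\mu}$ is actually the defensible statement; it suffices for the downstream use, where only ``some constant $\xi$'' matters. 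Two minor corrections: the discriminant argument needs no appeal to the sign of $\delta_t$ (real solvability of the quadratic inequality forces $\delta_t\le G^2/(2\mu)$ automatically), and Lemma~\ref{proof-lemma0-appendix} gives a \emph{lower} bound $\mathbb{E}[\delta_t]\ge\delta$ with $\delta<0$, not $\delta_t\le 0$; the expectation step should use concavity of the square root together with this lower bound.
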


\begin{proof}
By $\mu$-strong convexity, Assumption \ref{ass1}, and L-smoothness, Assumption \ref{ass2}, we have

\begin{equation}
\begin{aligned}
\|\bar{\mathbf{z}}_t - \bar{\mathbf{v}}_t\|^2 &\stackrel{}{\leq} \frac{2}{\mu} \left( F(\bar{\mathbf{z}}_t) - F(\bar{\mathbf{v}}_t) - \langle \nabla F(\bar{\mathbf{v}}_t), \bar{\mathbf{z}}_t - \bar{\mathbf{v}}_t\rangle \right) \\
&\stackrel{(\ref{epsilont})}{\leq} \frac{2}{\mu} \left( -\delta_t + \langle \nabla F(\bar{\mathbf{v}}_t), \bar{\mathbf{v}}_t - \bar{\mathbf{z}}_t\rangle \right)\\
&\stackrel{}{\leq} \frac{2}{\mu} \left( \|\nabla F(\bar{\mathbf{v}}_t)\| \|\bar{\mathbf{z}}_t - \bar{\mathbf{v}}_t\| - \delta_t\right)
\end{aligned}
\end{equation}
where the last inequality follows from the Cauchy–Schwarz inequality.

Therefore, 
\begin{equation}
\begin{aligned}
\|\bar{\mathbf{z}}_t - \bar{\mathbf{v}}_t\|^2 - 2 \frac{\|\nabla F(\bar{\mathbf{v}}_t)\|}{\mu} \|\bar{\mathbf{z}}_t  - \bar{\mathbf{v}}_t\| &\leq -\frac{2}{\mu}  \delta_t
\end{aligned}
\end{equation}
Thus, 
\begin{equation}
\begin{aligned}
\|\bar{\mathbf{z}}_t - \bar{\mathbf{v}}_t\|^2 - 2 \frac{\|\nabla F(\bar{\mathbf{v}}_t)\|}{\mu} \|\bar{\mathbf{z}}_t  - \bar{\mathbf{v}}_t\| + \frac{\|\nabla F(\bar{\mathbf{v}}_t)\|^2}{\mu^2} \leq -\frac{2}{\mu}  \delta_t + \frac{\|\nabla F(\bar{\mathbf{v}}_t)\|^2}{\mu^2}
\end{aligned}
\end{equation}
Hence, 
\begin{equation}
\begin{aligned}
\left(\|\bar{\mathbf{z}}_t - \bar{\mathbf{v}}_t\| + \frac{\|\nabla F(\bar{\mathbf{v}}_t)\|}{\mu} \right)^2 \leq -\frac{2}{\mu}  \delta_t + \frac{\|\nabla F(\bar{\mathbf{v}}_t)\|^2}{\mu^2}
\end{aligned}
\end{equation}
Hence, 
\begin{equation}
\begin{aligned}
\|\bar{\mathbf{z}}_t - \bar{\mathbf{v}}_t\|^2 &\leq   \frac{\|\nabla F(\bar{\mathbf{v}}_t)\|^2}{\mu^2} -\frac{2}{\mu}  \delta_t\\
& \stackrel{(\ref{FD})}{\leq} \frac{\| \sum_{k=1}^N p_k \nabla F_k(\bar{\mathbf{v}}_t)\|^2}{\mu^2} -\frac{2}{\mu}  \delta_t \\
&\leq  \frac{ (\sum_{k=1}^N p_k\|  \nabla F_k(\bar{\mathbf{v}}_t)\|)^2}{\mu^2} -\frac{2}{\mu}  \delta_t \\
&\stackrel{(\ref{ass4})}{\leq} \frac{ (\sum_{k=1}^N p_k G)^2}{\mu^2} -\frac{2}{\mu}  \delta_t \\
\end{aligned}
\end{equation}
Therefore, 
\begin{equation}
\begin{aligned}
\mathbb{E} \left[\|\bar{\mathbf{z}}_t - \bar{\mathbf{v}}_t\|^2\right] \leq \frac{G^2}{\mu^2}  - \frac{2\mathbb{E} \left[\delta_t\right]}{\mu}
\end{aligned}
\end{equation}
Therefore, by Lemma \ref{proof-lemma0-appendix}, we have
\begin{equation}
\begin{aligned}
\mathbb{E} \left[\|\bar{\mathbf{z}}_t - \bar{\mathbf{v}}_t\|^2\right] \leq \frac{G^2}{\mu^2}  - \frac{2 \delta}{\mu}
\end{aligned}
\end{equation}
\end{proof}

\begin{lemma} 
\label{proof-lemma2-appendix}
Under assumptions \ref{ass1}, \ref{ass2}, \ref{ass3},  \ref{ass4}, \ref{ass5}, and \ref{samplingassumptio}, we have
\begin{equation}
\label{lemma2}
\left\|\bar{\mathbf{v}}_{t}-\mathbf{w}^*\right\|  \leq \rho .
\end{equation}
for some constant $\rho$.
\end{lemma}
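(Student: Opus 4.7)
The plan is to reduce the quantity $\|\bar{\mathbf{v}}_t - \mathbf{w}^*\|$ to terms that are uniformly bounded over $t$ by constants depending only on the problem parameters ($\mu$, $G$, and the geometry of the local and global minimizers), mirroring the bookkeeping already used in the proof of Lemma \ref{proof-lemma1-appendix} (specifically Eq. (\ref{eq18})).

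First, I would expand $\bar{\mathbf{v}}_t = \sum_{k \in [N]} p_k \mathbf{v}_t^k$ and apply the triangle inequality twice to obtain
\begin{equation}
\|\bar{\mathbf{v}}_t - \mathbf{w}^*\| \le \sum_{k \in [N]} p_k \|\mathbf{v}_t^k - \mathbf{w}^*\| \le \sum_{k \in [N]} p_k \bigl(\|\mathbf{v}_t^k - \mathbf{v}_k^*\| + \|\mathbf{v}_k^* - \mathbf{w}^*\|\bigr).
\end{equation}
Since $\mathbf{v}_k^*$ and $\mathbf{w}^*$ are fixed points of the problem, the second summand $\|\mathbf{v}_k^* - \mathbf{w}^*\|$ is a problem-dependent constant; under Assumption \ref{ass5} (bounded statistical heterogeneity) together with $\mu$-strong convexity, it is controlled by $\sqrt{2(F^* - \sum_k p_k F_k^*)/\mu}$ or a similar constant $\Gamma$.

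Next, to bound the iterate-dependent term $\|\mathbf{v}_t^k - \mathbf{v}_k^*\|$, I would re-use exactly the argument behind Eq. (\ref{eq18}): by $\mu$-strong convexity of $F_k$ together with $\nabla F_k(\mathbf{v}_k^*) = 0$ and Assumption \ref{ass4} (bounded stochastic gradient norms, which also yields $\|\nabla F_k(\mathbf{v}_t^k)\| \le G$ by Jensen's inequality applied to the conditional expectation over $\psi_t^k$), one gets
\begin{equation}
\|\mathbf{v}_t^k - \mathbf{v}_k^*\| \le \tfrac{1}{\mu}\|\nabla F_k(\mathbf{v}_t^k)\| \le \tfrac{G}{\mu}.
\end{equation}
Substituting back and using $\sum_k p_k = 1$ gives $\|\bar{\mathbf{v}}_t - \mathbf{w}^*\| \le G/\mu + \Gamma =: \rho$, which is independent of $t$ and of the filtering procedure.

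The main obstacle I anticipate is making the second summand rigorous without extraneous assumptions: Assumption \ref{ass4} bounds stochastic gradients along the trajectory but not necessarily at $\mathbf{w}^*$ or at the $\mathbf{v}_k^*$. I would resolve this by invoking Assumption \ref{ass5} to express $\|\mathbf{v}_k^* - \mathbf{w}^*\|$ directly through the heterogeneity gap rather than through a gradient bound at $\mathbf{w}^*$, thereby keeping the constant $\rho$ clean and matching the style of the analysis in \cite{li2019convergence}.
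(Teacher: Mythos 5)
Your proposal is correct and follows essentially the same route as the paper's proof: split $\|\bar{\mathbf{v}}_t-\mathbf{w}^*\|$ by the triangle inequality into the deviation of the iterates from the local minimizers (bounded by $G/\mu$ via $\mu$-strong convexity and the gradient bound of Assumption \ref{ass4}) plus a time-independent heterogeneity constant. The only cosmetic difference is that you bound each $\|\mathbf{v}_k^*-\mathbf{w}^*\|$ individually via Assumption \ref{ass5}, whereas the paper keeps the aggregate $\|\sum_k p_k\mathbf{v}_k^*-\mathbf{w}^*\|\le M$ as a single asserted constant; your version is in fact slightly more explicit about where that constant comes from.
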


\begin{proof}
Note that under Assumption \ref{ass1} and Assumption \ref{ass5}, we have $\left\|\sum_{k \in[N]} p_k \mathbf{v}_k^*-\mathbf{w}^*\right\|$ is also bounded by a constant $M$.
\begin{equation}
\begin{aligned}
\left\|\bar{\mathbf{v}}_{t}-\mathbf{w}^*\right\| & \leq \left\|\bar{\mathbf{v}}_{t}-\sum_{k \in[N]} p_k \mathbf{v}_k^*\right\| +\left\|\sum_{k \in[N]} p_k \mathbf{v}_k^*-\mathbf{w}^*\right\| \\
& \leq \left\|\bar{\mathbf{v}}_{t}-\sum_{k \in[N]} p_k \mathbf{v}_k^*\right\|+M \\
& \leq \sum_{k \in[N]} \left\|p_k\left(\bar{\mathbf{v}}^k_{t}-\mathbf{v}_k^*\right)\right\|+M \\
&\leq \sum_{k \in[N]} p_k \left\|\bar{\mathbf{v}}^k_{t}-\mathbf{v}_k^*\right\|+M
\end{aligned}
\end{equation}

By $\mu$-strong convexity, Assumption \ref{ass2}, we have 
\begin{equation}
    \|\bar{\mathbf{v}}^k_{t}-\mathbf{v}_k^*\| \leq \frac{1}{\mu} \|\nabla F_k\left(\bar{\mathbf{v}}^k_t\right)\|
\end{equation}

Therefore,
\begin{equation}
\begin{aligned}
\left\|\bar{\mathbf{v}}_{t}-\mathbf{w}^*\right\| & \leq \sum_{k \in[N]} \frac{p_k}{\mu} \left\|\nabla F_k\left(\bar{\mathbf{v}}_t^k\right)\right\|+M \\
& \stackrel{(\ref{ass4})}{\leq} \frac{G}{\mu}+M \\
& \leq \rho .
\end{aligned}
\end{equation}
where $\rho = \frac{G}{\mu}+M$.
\end{proof}

\begin{lemma}
\label{proof-lemma3-appendix}
\label{lemma-statement}
Under assumptions \ref{ass1}, \ref{ass2}, \ref{ass3},  \ref{ass4}, \ref{ass5}, and \ref{samplingassumptio}, for any virtual iteration t, for the above defined sequences, $\bar{\mathbf{z}}_t$ and $\bar{\mathbf{v}}_t$, we have
\begin{equation}
\begin{aligned}
\label{lemma3}
\mathbb{E} \left[\| \bar{\mathbf{w}}_{t} - \mathbf{\bar{v}}_{t}\|^2\right]
&\stackrel{}{\leq} \xi
\end{aligned}
\end{equation}
for some constant $\xi$. 
\end{lemma}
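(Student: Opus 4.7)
The plan is to insert $\bar{\mathbf{z}}_{t+1}$ as an intermediate and split the error via $\|a+b\|^2 \leq 2\|a\|^2 + 2\|b\|^2$, giving
\begin{equation}
\mathbb{E}\left[\|\bar{\mathbf{w}}_{t+1} - \bar{\mathbf{v}}_{t+1}\|^2\right] \leq 2\mathbb{E}\left[\|\bar{\mathbf{w}}_{t+1} - \bar{\mathbf{z}}_{t+1}\|^2\right] + 2\mathbb{E}\left[\|\bar{\mathbf{z}}_{t+1} - \bar{\mathbf{v}}_{t+1}\|^2\right].
\end{equation}
The second piece is precisely what Lemma \ref{proof-lemma1-appendix} controls, contributing a term of the form $\frac{G^2}{\mu^2}-\frac{2\delta}{\mu}$. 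This is where the filtering quality enters the final constant $\xi$ through the $-2\delta/\mu$ correction, so no further work is needed there.

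For the first piece, I would first note that at non-communication steps the update rule forces $\bar{\mathbf{w}}_{t+1}=\bar{\mathbf{v}}_{t+1}=\bar{\mathbf{z}}_{t+1}$, so the inequality is trivial. At a synchronization step $t+1\in\mathcal{I}_E$, the update rule gives $\bar{\mathbf{w}}_{t+1}=\frac{1}{K}\sum_{k\in\mathcal{A}_{t+1}}\mathbf{v}_{t+1}^k$ with $\mathcal{A}_{t+1}$ drawn IID with replacement from $\mathcal{S}_{t+1}^f$ by Assumption \ref{samplingassumptio}. Conditioning on $\mathcal{S}_{t+1}^f$ and on the $\mathbf{v}_{t+1}^k$, we have $\mathbb{E}[\bar{\mathbf{w}}_{t+1}\mid\mathcal{S}_{t+1}^f]=\bar{\mathbf{z}}_{t+1}$, reducing the first term to the variance of a sample mean of IID draws, which the standard with-replacement identity bounds by $\frac{1}{K}$ times the largest deviation $\max_{k\in\mathcal{S}_{t+1}^f}\|\mathbf{v}_{t+1}^k-\bar{\mathbf{z}}_{t+1}\|^2$.

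The remaining step is to convert this into a universal constant. I would reuse the deterministic argument already appearing in Lemma \ref{proof-lemma1-appendix}: write $\mathbf{v}_{t+1}^k-\bar{\mathbf{z}}_{t+1}$ by telescoping through $\mathbf{v}_k^*$, apply $\mu$-strong convexity (Assumption \ref{ass2}) together with bounded gradients (Assumption \ref{ass4}) to obtain $\|\mathbf{v}_{t+1}^k-\mathbf{v}_k^*\|\leq G/\mu$, and use the bounded statistical heterogeneity (Assumption \ref{ass5}) to bound the spread of the $\mathbf{v}_k^*$. The resulting per-client bound is a time-independent constant $\zeta_0$ depending only on $G,\mu,N$ and the $\mathbf{v}_k^*$. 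Combining everything yields $\mathbb{E}\left[\|\bar{\mathbf{w}}_{t+1}-\bar{\mathbf{v}}_{t+1}\|^2\right]\leq \zeta - \frac{2\delta}{\mu} \triangleq \xi$, in the stated form.

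The main obstacle will be handling the coupling between the randomness of $\mathcal{S}_{t+1}^f$ (produced by the biased $\chi$GF procedure and depending on the current iterates) and the fresh randomness of $\mathcal{A}_{t+1}$: one must condition carefully on $\mathcal{S}_{t+1}^f$ before invoking the sample-mean identity, and then the per-client deviation bound must hold uniformly in $k$ so that it survives the outer expectation. The uniform bound afforded by Assumption \ref{ass4} is exactly what decouples these two sources of randomness and makes the argument go through.
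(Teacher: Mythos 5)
Your proposal follows essentially the same route as the paper: split through the intermediate point $\bar{\mathbf{z}}_{t+1}$, handle non-communication steps trivially, bound the sampling-variance term $\mathbb{E}\left[\|\bar{\mathbf{w}}_{t+1}-\bar{\mathbf{z}}_{t+1}\|^2\right]$ by a constant (the paper cites Lemma 5 of the FedAvg convergence analysis where you re-derive it via the with-replacement sample-mean identity), and invoke Lemma \ref{proof-lemma1-appendix} for $\mathbb{E}\left[\|\bar{\mathbf{z}}_{t+1}-\bar{\mathbf{v}}_{t+1}\|^2\right]$. The only cosmetic difference is that the paper expands the square exactly and uses the unbiasedness $\mathbb{E}[\bar{\mathbf{w}}_{t+1}]=\bar{\mathbf{z}}_{t+1}$ to make the cross term vanish, whereas your Young-type inequality costs a harmless factor of $2$ in the constant $\xi$.
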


\begin{proof}
If not aggregating,
$$
\bar{\mathbf{w}}_{t+1}=\bar{\mathbf{v}}_{t+1} .
$$
Hence, 
\begin{equation}
    \mathbb{E} \left[\| \bar{\mathbf{w}}_{t+1} - \mathbf{\bar{v}}_{t+1}\|^2\right] = 0
\end{equation}

If aggregating, using Lemma 4 in \cite{li2019convergence}, we know that if $t+1 \in \mathcal{I}_E$, for sampling scheme in Assumption \ref{samplingassumptio}, we have
\begin{equation}
\label{unbiasedness}
\mathbb{E}\left(\overline{\mathbf{w}}_{t+1}\right)=\overline{\mathbf{z}}_{t+1}
\end{equation}
$$
\begin{aligned}
\| \bar{\mathbf{w}}_{t+1} - \mathbf{\bar{v}}_{t+1}\|^2 
&=  \|\bar{\mathbf{w}}_{t+1} - \bar{\mathbf{z}}_{t+1} + \bar{\mathbf{z}}_{t+1} -\mathbf{\bar{v}}_{t+1} \|^2  \\
&=  \|\bar{\mathbf{w}}_{t+1} - \bar{\mathbf{z}}_{t+1}\|^2 +  \| \bar{\mathbf{z}}_{t+1} -\mathbf{\bar{v}}_{t+1} \|^2  + 2 <\bar{\mathbf{w}}_{t+1} - \bar{\mathbf{z}}_{t+1}, \bar{\mathbf{z}}_{t+1} - \mathbf{\bar{v}}_{t+1}>
\end{aligned}
$$
When expectation is taken over $\mathcal{S}_{t+1}$, the last term vanishes due to the unbiasedness of $\overline{\mathbf{w}}_{t+1}$.

Therefore, 
$$
\begin{aligned}
\mathbb{E} \left[\| \bar{\mathbf{w}}_{t+1} - \mathbf{\bar{v}}_{t+1}\|^2\right] 
&=  \mathbb{E} \left[\|\bar{\mathbf{w}}_{t+1} - \bar{\mathbf{z}}_{t+1}\|^2\right] +
\mathbb{E} \left[\| \bar{\mathbf{z}}_{t+1} -\mathbf{\bar{v}}_{t+1} \|^2\right] 
\end{aligned}
$$
Moreover, using Lemma 5 in \cite{li2019convergence}, we know that if $t+1 \in \mathcal{I}_E$, for sampling scheme in assumption \ref{samplingassumptio}, the expected difference between $\overline{\mathbf{z}}_{t+1}$ and $\overline{\mathbf{w}}_{t+1}$ is bounded by
\begin{equation}
\label{variancebound}
\mathbb{E} \left[\left\|\overline{\mathbf{w}}_{t+1}-\overline{\mathbf{z}}_{t+1}\right\|^2\right] \leq J .
\end{equation}
where $J$ is a constant.

Therefore, using Lemma \ref{proof-lemma1-appendix}, we have
\begin{equation}
\label{eq:26}
\begin{aligned}
\mathbb{E} \left[\| \bar{\mathbf{w}}_{t+1} - \mathbf{\bar{v}}_{t+1}\|^2\right]
&\leq J +
\mathbb{E} \left[\| \bar{\mathbf{z}}_{t+1} -\mathbf{\bar{v}}_{t+1} \|^2\right] \\
&\stackrel{(\ref{lemma1})}{\leq} J + \frac{G^2}{\mu^2}  - \frac{2\delta}{\mu} \\
&\leq \xi
\end{aligned}
\end{equation}
for $\xi = J + \frac{G^2}{\mu^2}  - \frac{2\delta}{\mu}$.

\end{proof}

\begin{corollary}
\label{proof-lemma4-appendix}
\label{lemma4-statement}
Under assumptions \ref{ass1}, \ref{ass2}, \ref{ass3},  \ref{ass4},  \ref{ass5}, and \ref{samplingassumptio}, for any virtual iteration t, for the above defined sequences, $\bar{\mathbf{z}}_t$ and $\bar{\mathbf{v}}_t$, we have
\begin{equation}
\begin{aligned}
\mathbb{E} \left[ \left\langle \overline{\mathbf{w}}_{t} -\overline{\mathbf{v}}_{t} , \overline{\mathbf{v}}_{t}-\mathbf{w}^*\right\rangle \right] \leq \rho \sqrt{\xi}
\end{aligned}
\end{equation}
\end{corollary}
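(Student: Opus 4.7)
The plan is to derive the bound as a straightforward consequence of the two preceding results, namely Lemma \ref{proof-lemma3-appendix} which controls the second moment of $\bar{\mathbf{w}}_t - \bar{\mathbf{v}}_t$ by $\xi$, and Lemma \ref{proof-lemma2-appendix} which gives the deterministic bound $\|\bar{\mathbf{v}}_t - \mathbf{w}^*\| \leq \rho$. The inner product has a natural product structure that invites Cauchy--Schwarz, after which each factor is handled by one of these lemmas.

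First I would apply the Cauchy--Schwarz inequality inside the expectation to obtain
\begin{equation*}
\mathbb{E}\left[\left\langle \overline{\mathbf{w}}_t - \overline{\mathbf{v}}_t,\, \overline{\mathbf{v}}_t - \mathbf{w}^*\right\rangle\right]
\leq \mathbb{E}\left[\|\overline{\mathbf{w}}_t - \overline{\mathbf{v}}_t\| \cdot \|\overline{\mathbf{v}}_t - \mathbf{w}^*\|\right].
\end{equation*}
Since Lemma \ref{proof-lemma2-appendix} provides the pathwise bound $\|\overline{\mathbf{v}}_t - \mathbf{w}^*\| \leq \rho$, I can pull that factor out of the expectation to get $\rho \cdot \mathbb{E}[\|\bar{\mathbf{w}}_t - \bar{\mathbf{v}}_t\|]$.

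Next, I would apply Jensen's inequality to the concave square root, yielding
\begin{equation*}
\mathbb{E}\left[\|\overline{\mathbf{w}}_t - \overline{\mathbf{v}}_t\|\right] \leq \sqrt{\mathbb{E}\left[\|\overline{\mathbf{w}}_t - \overline{\mathbf{v}}_t\|^2\right]} \leq \sqrt{\xi},
\end{equation*}
where the second inequality invokes Lemma \ref{proof-lemma3-appendix}. Combining the two steps gives the claimed bound $\rho\sqrt{\xi}$.

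I do not expect any substantial obstacle here, since both of the building-block estimates are already established earlier in the appendix. The only subtlety worth being careful about is the order of operations: the Cauchy--Schwarz step must be applied inside the expectation (not conditional on the sampling), and the use of Jensen's inequality is what transforms the first-moment bound on $\|\bar{\mathbf{w}}_t - \bar{\mathbf{v}}_t\|$ into the square root of the second-moment bound from Lemma \ref{proof-lemma3-appendix}. This is what produces the $\sqrt{\xi}$ factor rather than $\xi$ itself, matching the statement and ensuring the final convergence analysis in Theorem \ref{theorem-statement} exhibits the neighborhood term $\varphi = \xi + 2\rho\sqrt{\xi}$.
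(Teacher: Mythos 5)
Your argument matches the paper's proof exactly: Cauchy--Schwarz on the inner product, the pathwise bound $\|\bar{\mathbf{v}}_t - \mathbf{w}^*\| \leq \rho$ from Lemma~\ref{proof-lemma2-appendix}, and Jensen's inequality combined with Lemma~\ref{proof-lemma3-appendix} to obtain the $\sqrt{\xi}$ factor. The proposal is correct and requires no changes.
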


\begin{proof}
By Cauchy-Schwarz inequality we have
\begin{equation}
\label{cauchy_}
      \left\langle \overline{\mathbf{w}}_{t} -\overline{\mathbf{v}}_{t} , \overline{\mathbf{v}}_{t}-\mathbf{w}^*\right\rangle  \leq  \| \overline{\mathbf{w}}_{t} -\overline{\mathbf{v}}_{t} \|\| \overline{\mathbf{v}}_{t}-\mathbf{w}^*\| 
\end{equation}

Moreover, by Lemma \ref{proof-lemma2-appendix}, we have
\begin{equation}
\label{rho_}
\left\|\bar{\mathbf{v}}^{t}-\mathbf{w}^*\right\| \leq \rho .
\end{equation}

Therefore, 
\begin{equation}
\label{cauchy_simplified}
      \left\langle \overline{\mathbf{w}}_{t} -\overline{\mathbf{v}}_{t} , \overline{\mathbf{v}}_{t}-\mathbf{w}^*\right\rangle  \leq \rho  \| \overline{\mathbf{w}}_{t} -\overline{\mathbf{v}}_{t} \|
\end{equation}

Using Jensen inequality \cite{peajcariaac1992convex} and Lemma \ref{proof-lemma3-appendix}, it follows that
\begin{equation}
\begin{aligned}
\label{sqrt_xi_}
\mathbb{E} \left[\| \bar{\mathbf{w}}_{t} - \mathbf{\bar{v}}_{t}\|\right] \leq \mathbb{E} \left[\| \bar{\mathbf{w}}_{t} - \mathbf{\bar{v}}_{t}\|^2\right]^{\frac{1}{2}} 
&\stackrel{(\ref{lemma3})}{\leq} \sqrt{\xi}
\end{aligned}
\end{equation}

Combine equations (\ref{cauchy_simplified}) and (\ref{sqrt_xi_}), we have
\begin{equation}
\begin{aligned}
    \mathbb{E} \left[ \left\langle \overline{\mathbf{w}}_{t} -\overline{\mathbf{v}}_{t} , \overline{\mathbf{v}}_{t}-\mathbf{w}^*\right\rangle \right] &\stackrel{(\ref{cauchy_simplified})}{\leq} \mathbb{E} \left[ \rho \| \overline{\mathbf{w}}_{t} -\overline{\mathbf{v}}_{t} \|  \right]\\
    &\leq  \rho \mathbb{E} \left[\| \overline{\mathbf{w}}_{t} -\overline{\mathbf{v}}_{t} \|  \right] \\
    &\stackrel{(\ref{sqrt_xi_})}{\leq} \rho \sqrt{\xi}
\end{aligned}
\end{equation}
\end{proof}

\newpage
\section{Additional Experiments}

\subsection{FilFL (FedAvg with $\chi$GF and PoC) vs FedAvg (PoC)}
\label{appendix:fedavg_poc}

We compare the performance of FilFL (FedAvg with $\chi$GF) against FedAvg, both using PoC for client selection on CIFAR-10, FEMNIST, and Shakespeare. Fig.~\ref{fig:poc_test_acc}, Fig.~\ref{fig:poc_train_loss}, Fig.~\ref{fig:poc_test_loss}, and Fig.~\ref{fig:selectedclients1} illustrate the test accuracy, training loss, test loss, and number of accepeted clients respectively. The results on the Shakespeare dataset, with a small filtering dataset from a different distribution; specifically consisting of parts of this paper's introduction (see the filtering dataset in Appendix \ref{appendix:shakespearesetup}). 

Our results demonstrate that FilFL using either DGF or RGF achieves significantly better performance than FedAvg. In particular, as depicted in Fig.~\ref{fig:poc_test_acc}, FilFL with both filtering methods accomplishes accelerated training and attains approximately 5, 7, 10 percentage points higher test accuracy than FedAvg, for CIFAR-10, FEMNIST, and Shakespeare, respectively. After 100 to 200 training rounds, Fig.~\ref{fig:poc_train_loss}  displays a lower training loss for FedAvg, while Fig.~\ref{fig:poc_test_loss} shows an increasing test loss for it but a significantly reduced test loss for FilFL. This discrepancy can be attributed to the overfitting of FedAvg and the superior generalization ability of our approach. Finally, Fig.~\ref{fig:selectedclients1}, confirms the same observation that DGF accepts less clients than RGF.

\begin{figure}[H]
\begin{minipage}{0.33\textwidth}
\begin{tikzpicture}
  \node (img)  {\includegraphics[scale=0.24]{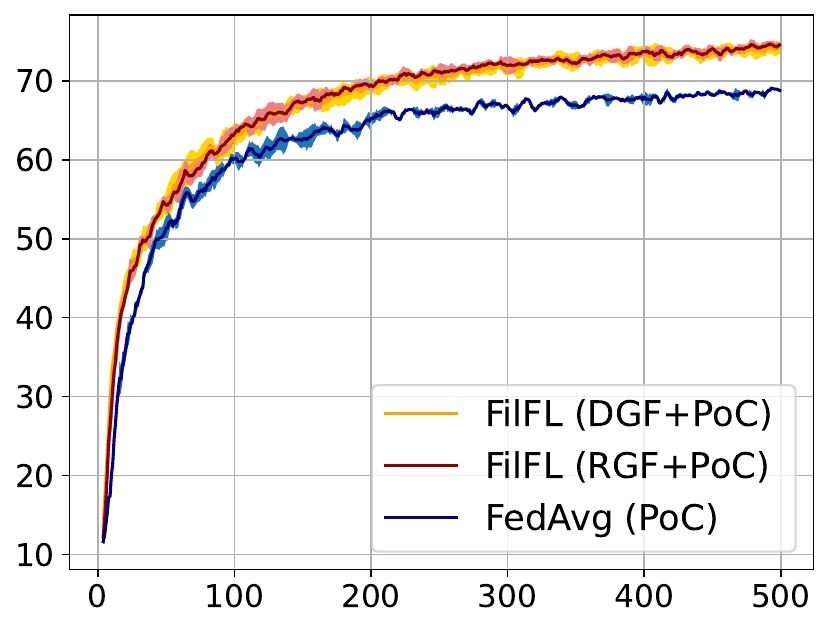}};
  \node[below=of img, node distance=0cm, rotate=0cm, anchor=center,yshift=3.8cm] {\tiny CIFAR-10};
  \node[below=of img, node distance=0cm, rotate=0cm, anchor=center,yshift=1.0cm] {\tiny Round};
  \node[left=of img, node distance=0cm, rotate=90, anchor=center,yshift=-1.0cm] {\tiny Test Accuracy};
 \end{tikzpicture}
\end{minipage}%
\begin{minipage}{0.33\textwidth}
\begin{tikzpicture}
  \node (img)  {\includegraphics[scale=0.24]{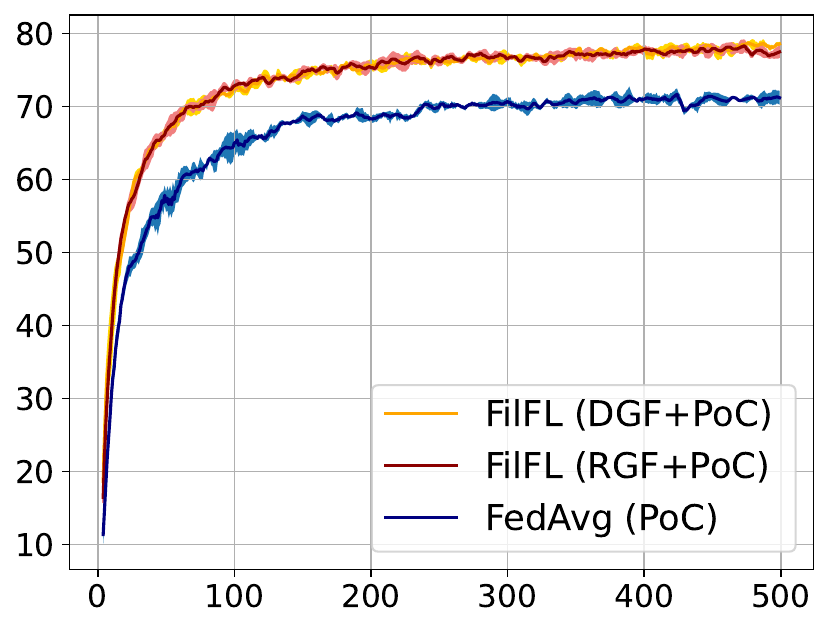}};
  \node[left=of img, node distance=0cm, rotate=90, anchor=center,yshift=-1.0cm] {\tiny Test Accuracy};
    \node[below=of img, node distance=0cm, rotate=0cm, anchor=center,yshift=3.8cm] {\tiny FEMNIST};
  \node[below=of img, node distance=0cm, rotate=0cm, anchor=center,yshift=1.0cm] {\tiny Round};
\end{tikzpicture}
\end{minipage}%
\begin{minipage}{0.33\textwidth}
\begin{tikzpicture}
  \node (img)  {\includegraphics[scale=0.24]{figures/Shakespeare_acc.pdf}};
  \node[left=of img, node distance=0cm, rotate=90, anchor=center,yshift=-1.0cm] {\tiny Test Accuracy};
    \node[below=of img, node distance=0cm, rotate=0cm, anchor=center,yshift=3.8cm] {\tiny Shakespeare};
  \node[below=of img, node distance=0cm, rotate=0cm, anchor=center,yshift=1.0cm] {\tiny Round};
\end{tikzpicture}
\end{minipage}%
\caption{FilFL vs FedAvg test accuracies both using PoC as a client selection method.}
\label{fig:poc_test_acc}
\end{figure}

\begin{figure}[H]
\begin{minipage}{0.33\textwidth}
\begin{tikzpicture}
  \node (img)  {\includegraphics[scale=0.24]{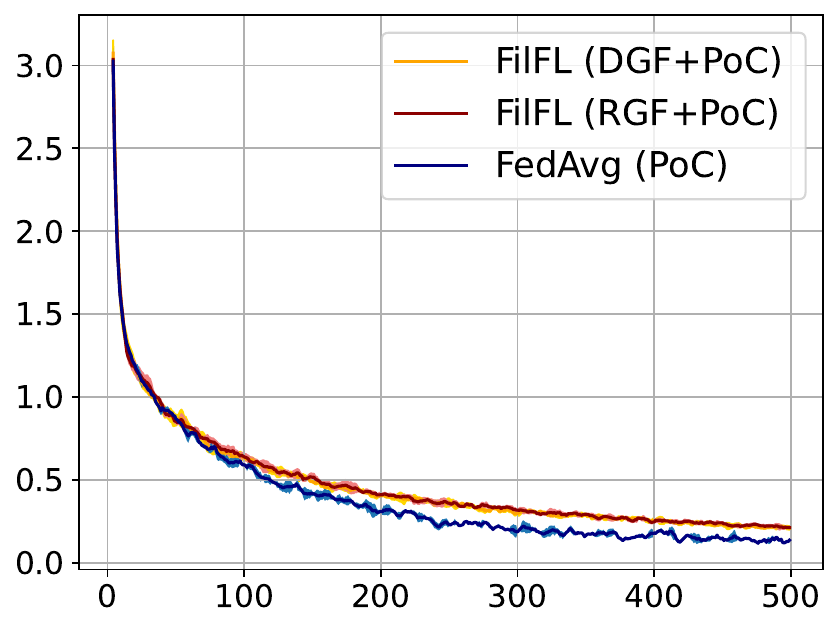}};
  \node[left=of img, node distance=0cm, rotate=90, anchor=center,yshift=-1.0cm] {\tiny Training Loss};
   \node[below=of img, node distance=0cm, rotate=0cm, anchor=center,yshift=3.8cm] {\tiny CIFAR-10};
  \node[below=of img, node distance=0cm, rotate=0cm, anchor=center,yshift=1.0cm] {\tiny Round};
 \end{tikzpicture}
\end{minipage}%
\begin{minipage}{0.33\textwidth}
\begin{tikzpicture}
  \node (img)  {\includegraphics[scale=0.24]{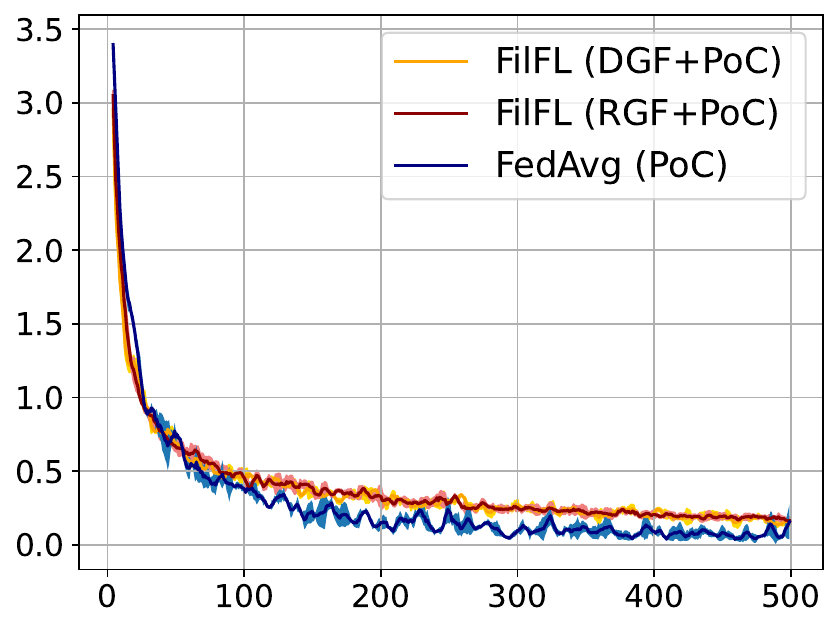}};
  \node[left=of img, node distance=0cm, rotate=90, anchor=center,yshift=-1.0cm] {\tiny Training Loss};
  \node[below=of img, node distance=0cm, rotate=0cm, anchor=center,yshift=3.8cm] {\tiny FEMNIST};
  \node[below=of img, node distance=0cm, rotate=0cm, anchor=center,yshift=1.0cm] {\tiny Round};
\end{tikzpicture}
\end{minipage}%
\begin{minipage}{0.33\textwidth}
\begin{tikzpicture}
  \node (img)  {\includegraphics[scale=0.24]{figures/loss_shakespeare.pdf}};
  \node[left=of img, node distance=0cm, rotate=90, anchor=center,yshift=-1.0cm] {\tiny Training Loss};
  \node[below=of img, node distance=0cm, rotate=0cm, anchor=center,yshift=3.8cm] {\tiny Shakespeare};
  \node[below=of img, node distance=0cm, rotate=0cm, anchor=center,yshift=1.0cm] {\tiny Round};
\end{tikzpicture}
\end{minipage}%
\caption{FilFL vs FedAvg training losses both using PoC as a client selection method.}
\label{fig:poc_train_loss}
\end{figure}

\begin{figure}[H]
\begin{minipage}{0.33\textwidth}
\begin{tikzpicture}
  \node (img)  {\includegraphics[scale=0.24]{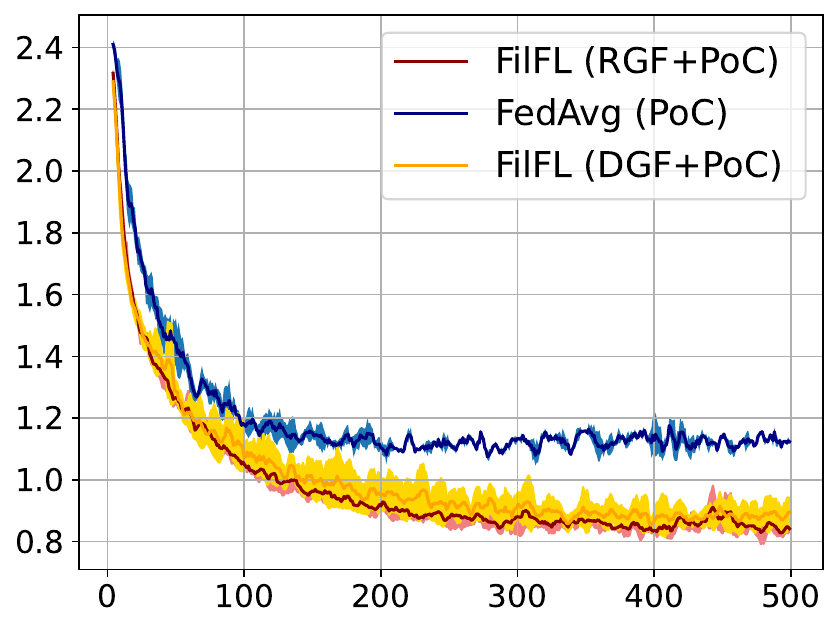}};
  \node[left=of img, node distance=0cm, rotate=90, anchor=center,yshift=-1.0cm] {\tiny Test Loss};
   \node[below=of img, node distance=0cm, rotate=0cm, anchor=center,yshift=3.8cm] {\tiny CIFAR-10};
  \node[below=of img, node distance=0cm, rotate=0cm, anchor=center,yshift=1.0cm] {\tiny Round};
 \end{tikzpicture}
\end{minipage}%
\begin{minipage}{0.33\textwidth}
\begin{tikzpicture}
  \node (img)  {\includegraphics[scale=0.24]{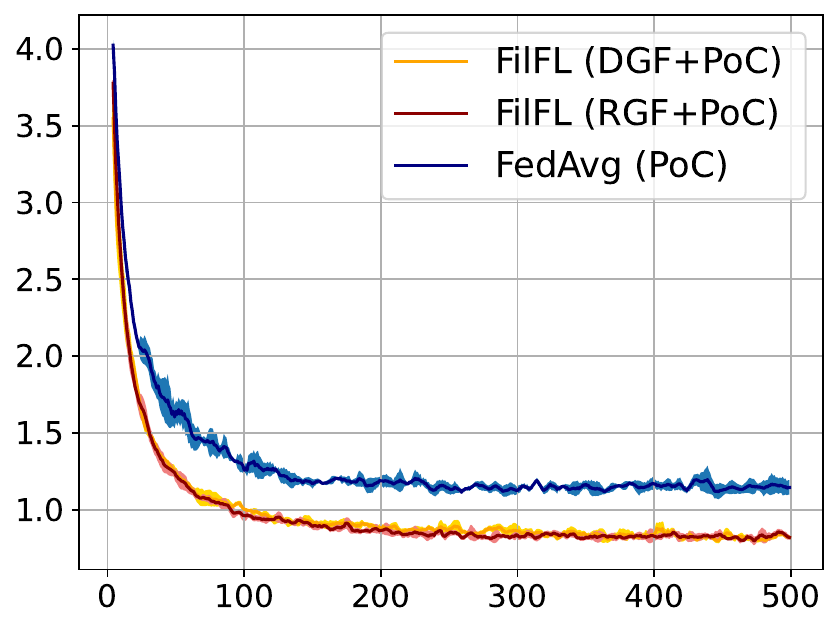}};
  \node[left=of img, node distance=0cm, rotate=90, anchor=center,yshift=-1.0cm] {\tiny Test Loss};
  \node[below=of img, node distance=0cm, rotate=0cm, anchor=center,yshift=3.8cm] {\tiny FEMNIST};
  \node[below=of img, node distance=0cm, rotate=0cm, anchor=center,yshift=1.0cm] {\tiny Round};
\end{tikzpicture}
\end{minipage}%
\begin{minipage}{0.33\textwidth}
\begin{tikzpicture}
  \node (img)  {\includegraphics[scale=0.24]{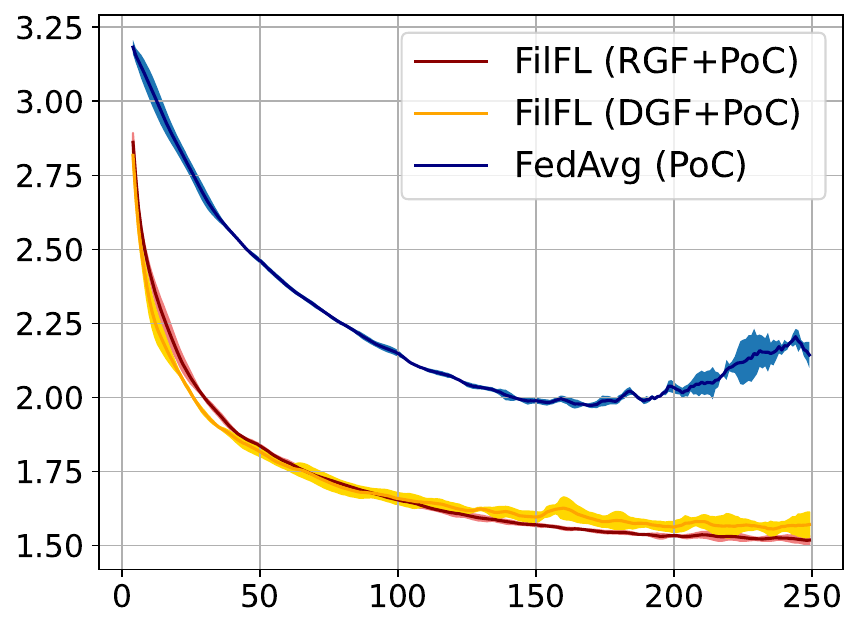}};
  \node[left=of img, node distance=0cm, rotate=90, anchor=center,yshift=-1.0cm] {\tiny Test Loss};
  \node[below=of img, node distance=0cm, rotate=0cm, anchor=center,yshift=3.8cm] {\tiny Shakespeare};
  \node[below=of img, node distance=0cm, rotate=0cm, anchor=center,yshift=1.0cm] {\tiny Round};
\end{tikzpicture}
\end{minipage}%
\caption{FilFL vs FedAvg test losses both using PoC as a client selection method.}
\label{fig:poc_test_loss}
\end{figure}

\begin{figure}[H]
\begin{minipage}{0.33\textwidth}
\begin{tikzpicture}
  \node (img)  {\includegraphics[scale=0.24]{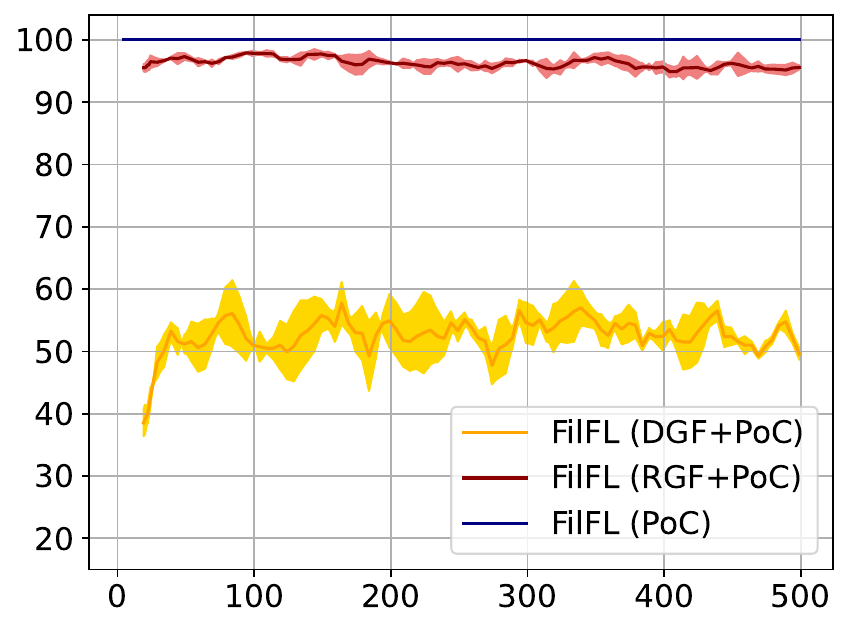}};
  \node[left=of img, node distance=0cm, rotate=90, anchor=center,yshift=-1.0cm] {\tiny $|\mathcal{S}^f|$};
   \node[below=of img, node distance=0cm, rotate=0cm, anchor=center,yshift=3.8cm] {\tiny CIFAR-10};
  \node[below=of img, node distance=0cm, rotate=0cm, anchor=center,yshift=1.0cm] {\tiny Round};
 \end{tikzpicture}
\end{minipage}%
\begin{minipage}{0.33\textwidth}
\begin{tikzpicture}
  \node (img)  {\includegraphics[scale=0.24]{figures/selected_clients_femnist.pdf}};
  \node[left=of img, node distance=0cm, rotate=90, anchor=center,yshift=-1.0cm] {\tiny $|\mathcal{S}^f|$};
  \node[below=of img, node distance=0cm, rotate=0cm, anchor=center,yshift=3.8cm] {\tiny FEMNIST};
  \node[below=of img, node distance=0cm, rotate=0cm, anchor=center,yshift=1.0cm] {\tiny Round};
\end{tikzpicture}
\end{minipage}%
\begin{minipage}{0.33\textwidth}
\begin{tikzpicture}
  \node (img)  {\includegraphics[scale=0.24]{figures/selected_clients_shakespeare.pdf}};
  \node[left=of img, node distance=0cm, rotate=90, anchor=center,yshift=-1.0cm] {\tiny $|\mathcal{S}^f|$};
  \node[below=of img, node distance=0cm, rotate=0cm, anchor=center,yshift=3.8cm] {\tiny Shakespeare};
  \node[below=of img, node distance=0cm, rotate=0cm, anchor=center,yshift=1.0cm] {\tiny Round};
\end{tikzpicture}
\end{minipage}%
\caption{FilFL vs FedAvg number of filtered-in clients with PoC as a client selection method.}
\label{fig:selectedclients1}
\end{figure}

\subsection{FilFL (FedAvg with $\chi$GF and DivFL) vs FedAvg (DivFL)}
\label{appendix:divfl}
As shown in \cite{balakrishnan2021diverse}, FedAvg with DivFL performs better than FedAvg with RS or PoC. However, it remains computationally more expensive than both selection methods. We compare FilFL using DivFL against FedAvg (DivFL). Fig. \ref{fig:divfl_cifar10} shows that on the CIFAR-10 dataset, DGF achieves 3 percentage points higher accuracy than FedAvg (DivFL) (left plot). While FedAvg (DivFL) exhibits slightly lower training loss than FilFL (middle plot), it suffers from a larger test loss (right plot), which can be due to the overfitting of FedAvg (DivFL) and the better generalization capabilities of FilFL. Therefore, FilFL with DivFL empirically outperforms FedAvg (DivFL). 

\begin{figure}[H]
\begin{minipage}{0.33\textwidth}
\begin{tikzpicture}
  \node (img)  {\includegraphics[scale=0.24]{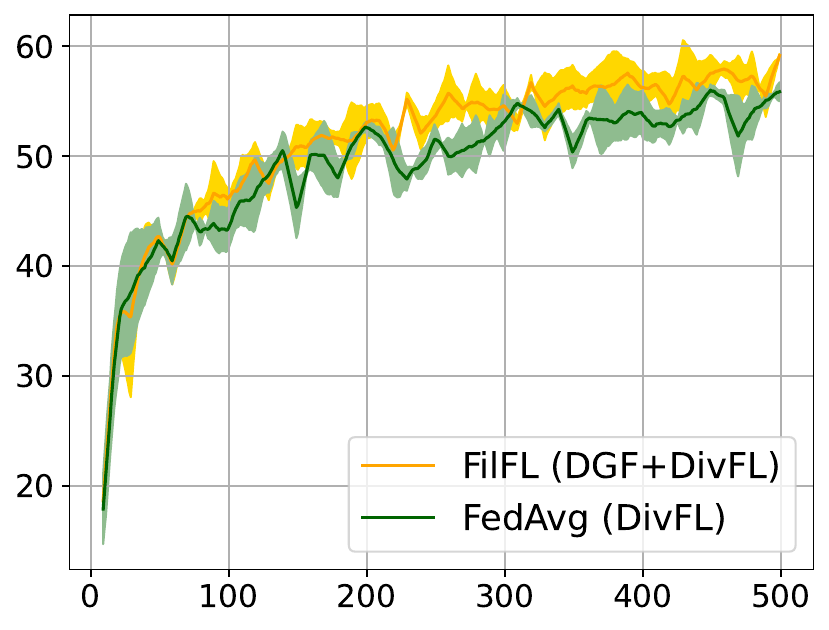}};
  \node[below=of img, node distance=0cm, rotate=0cm, anchor=center,yshift=3.8cm] {\tiny CIFAR-10};
  \node[below=of img, node distance=0cm, rotate=0cm, anchor=center,yshift=1.0cm] {\tiny Round};
  \node[left=of img, node distance=0cm, rotate=90, anchor=center,yshift=-1.0cm] {\tiny Test Accuracy};
 \end{tikzpicture}
\end{minipage}%
\begin{minipage}{0.33\textwidth}
\begin{tikzpicture}
  \node (img)  {\includegraphics[scale=0.24]{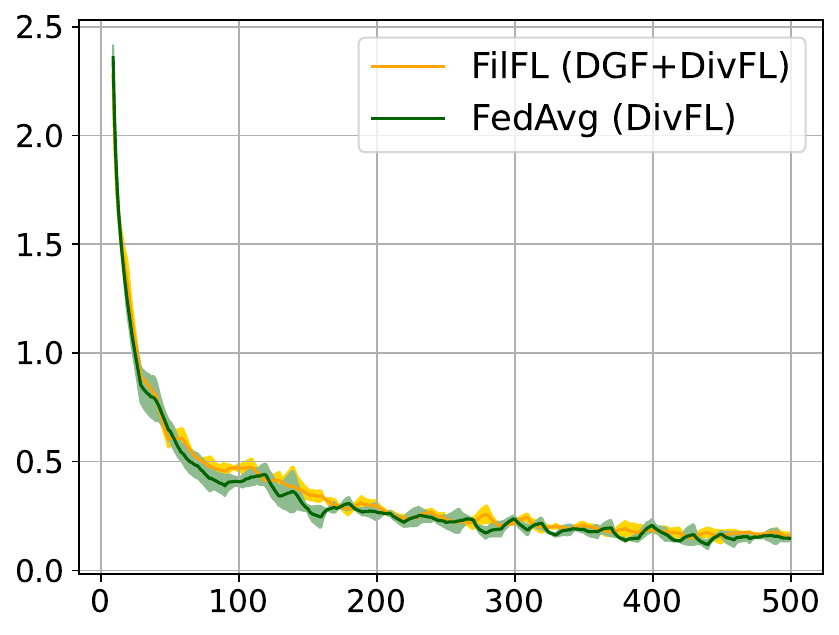}};
  \node[left=of img, node distance=0cm, rotate=90, anchor=center,yshift=-1.0cm] {\tiny Training Loss};
    \node[below=of img, node distance=0cm, rotate=0cm, anchor=center,yshift=3.8cm] {\tiny CIFAR-10};
  \node[below=of img, node distance=0cm, rotate=0cm, anchor=center,yshift=1.0cm] {\tiny Round};
\end{tikzpicture}
\end{minipage}%
\begin{minipage}{0.33\textwidth}
\begin{tikzpicture}
  \node (img)  {\includegraphics[scale=0.24]{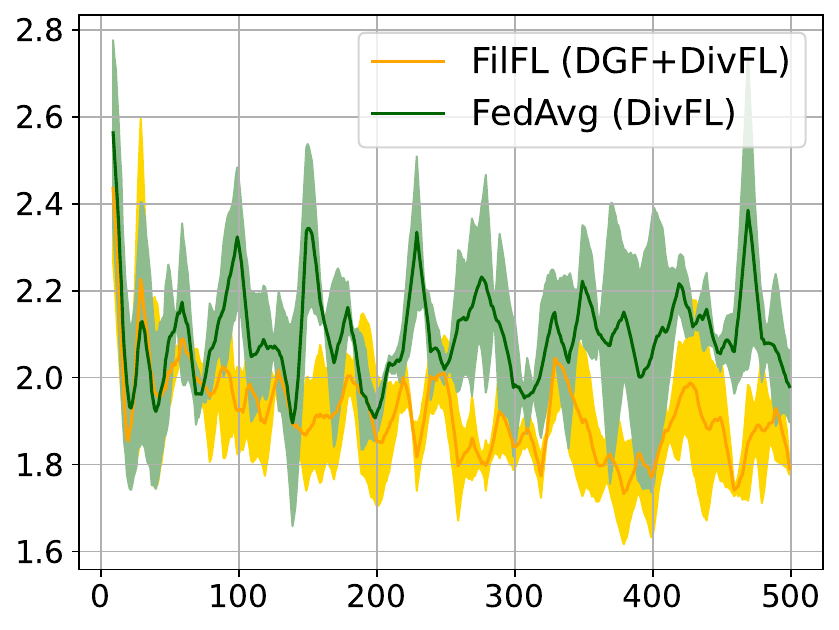}};
  \node[left=of img, node distance=0cm, rotate=90, anchor=center,yshift=-1.0cm] {\tiny Test Loss};
    \node[below=of img, node distance=0cm, rotate=0cm, anchor=center,yshift=3.8cm] {\tiny CIFAR-10};
  \node[below=of img, node distance=0cm, rotate=0cm, anchor=center,yshift=1.0cm] {\tiny Round};
\end{tikzpicture}
\end{minipage}%
\caption{FilFL (FedAvg + $\chi$GF + RS) vs FedAvg (DivFL) without filtering on CIFAR-10 dataset.}
\label{fig:divfl_cifar10}
\end{figure}

\subsection{FilFL (FedProx with $\chi$GF and RS) vs FedProx (RS)}
\label{appendix:fedprox_rs}
We compare the performance of FilFL (FedProx with $\chi$GF) against FedProx, both using RS for selection. Fig. \ref{fig:fedprox_shakespeare} demonstrates that FilFL using $\chi$GF achieves significantly superior performance compared to FedProx on the Shakespeare dataset. Specifically, the left plot illustrates that FilFL with DGF and RGF achieves approximately 3 and 6 percentage points higher test accuracy, respectively than FedProx. The middle plot reveal lower training loss for FilFL than FedProx. Finally, the right plot confirms the same observation, that DGF accepts less clients than RGF.

\begin{figure}[H]
\begin{minipage}{0.33\textwidth}
\begin{tikzpicture}
  \node (img)  {\includegraphics[scale=0.24]{figures/FedProx_Shakespeare_val_acc_20LocalSteps.pdf}};
  \node[below=of img, node distance=0cm, rotate=0cm, anchor=center,yshift=3.8cm] {\tiny Shakespeare};
  \node[below=of img, node distance=0cm, rotate=0cm, anchor=center,yshift=1.0cm] {\tiny Round};
  \node[left=of img, node distance=0cm, rotate=90, anchor=center,yshift=-1.0cm] {\tiny Test Accuracy};
 \end{tikzpicture}
\end{minipage}%
\begin{minipage}{0.33\textwidth}
\begin{tikzpicture}
  \node (img)  {\includegraphics[scale=0.24]{figures/FedProx_Shakespeare_train_loss_20LocalSteps.pdf}};
  \node[left=of img, node distance=0cm, rotate=90, anchor=center,yshift=-1.0cm] {\tiny Training Loss};
    \node[below=of img, node distance=0cm, rotate=0cm, anchor=center,yshift=3.8cm] {\tiny Shakespeare};
  \node[below=of img, node distance=0cm, rotate=0cm, anchor=center,yshift=1.0cm] {\tiny Round};
\end{tikzpicture}
\end{minipage}%
\begin{minipage}{0.33\textwidth}
\begin{tikzpicture}
  \node (img)  {\includegraphics[scale=0.24]{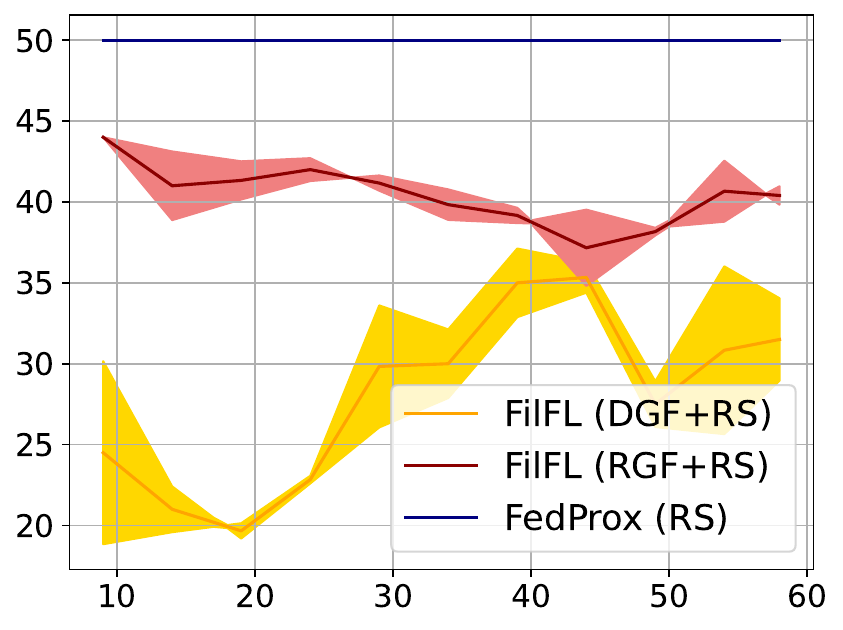}};
  \node[left=of img, node distance=0cm, rotate=90, anchor=center,yshift=-1.0cm] {\tiny $|\mathcal{S}^f|$};
    \node[below=of img, node distance=0cm, rotate=0cm, anchor=center,yshift=3.8cm] {\tiny Shakespeare};
  \node[below=of img, node distance=0cm, rotate=0cm, anchor=center,yshift=1.0cm] {\tiny Round};
\end{tikzpicture}
\end{minipage}%
\caption{FilFL (FedProx + $\chi$GF + RS) vs FedProx (RS) without filtering on Shakespeare dataset.}
\label{fig:fedprox_shakespeare}
\end{figure}

\subsection{FilFL Sensitivity to Filtering Periodicity $h$}
\label{appendix:periodicity}
We simulate two behavior heterogeneity settings on the FEMNIST dataset. In the first setting (Fig.~\ref{fig:app:periodsensitivityfemnist}), the environment changes every $5$ rounds. In the second setting (Fig.~\ref{fig:periodsensitivityfemnist2}), the environment changes every $20$ rounds. For the first setting, we experiment with different periodicities $h \in \{1,3,5\}$. Moreover, for the second setting, we experiment with different periodicities $h \in \{10,20\}$. We find that FilFL's performance in both settings is similar for the different values of $h$. However, from a computational perspective, our approach is more efficient for larger periodicities $h$.

\begin{figure}[H]
\begin{minipage}{0.25\textwidth}
\begin{tikzpicture}
  \node (img)  {\includegraphics[scale=0.2]{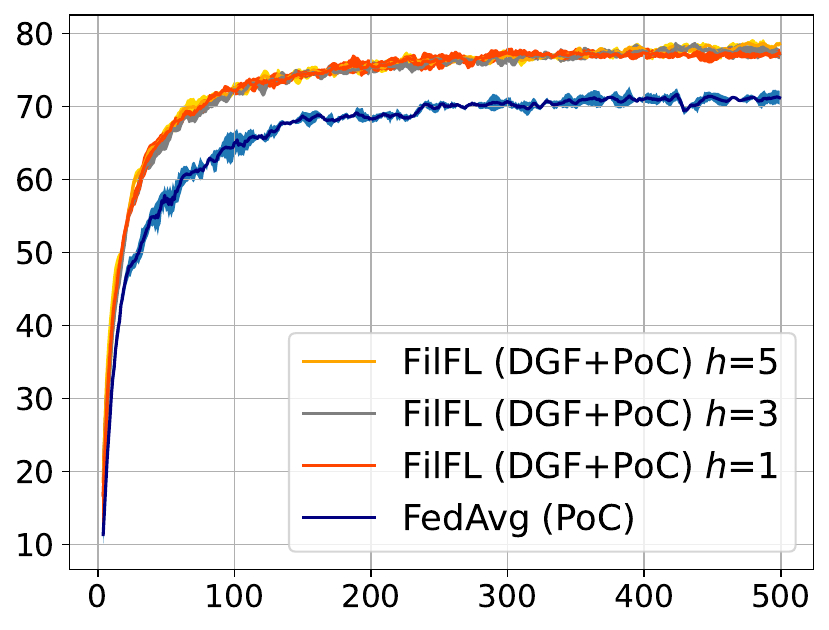}};
  \node[below=of img, node distance=0cm, rotate=0cm, anchor=center,yshift=3.4cm] {\tiny FEMNIST};
  \node[below=of img, node distance=0cm, rotate=0cm, anchor=center,yshift=1.0cm] {\tiny Round};
  \node[left=of img, node distance=0cm, rotate=90, anchor=center,yshift=-1.0cm] {\tiny Test Accuracy};
 \end{tikzpicture}
\end{minipage}%
\begin{minipage}{0.25\textwidth}
\begin{tikzpicture}
  \node (img)  {\includegraphics[scale=0.2]{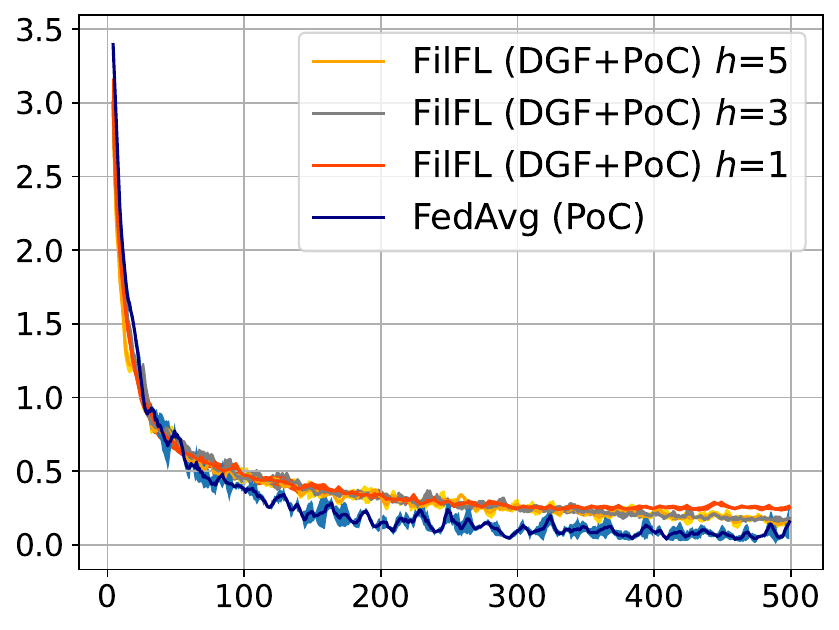}};
  \node[left=of img, node distance=0cm, rotate=90, anchor=center,yshift=-1.0cm] {\tiny Training Loss};
    \node[below=of img, node distance=0cm, rotate=0cm, anchor=center,yshift=3.4cm] {\tiny FEMNIST};
  \node[below=of img, node distance=0cm, rotate=0cm, anchor=center,yshift=1.0cm] {\tiny Round};
\end{tikzpicture}
\end{minipage}%
\begin{minipage}{0.25\textwidth}
\begin{tikzpicture}
  \node (img)  {\includegraphics[scale=0.2]{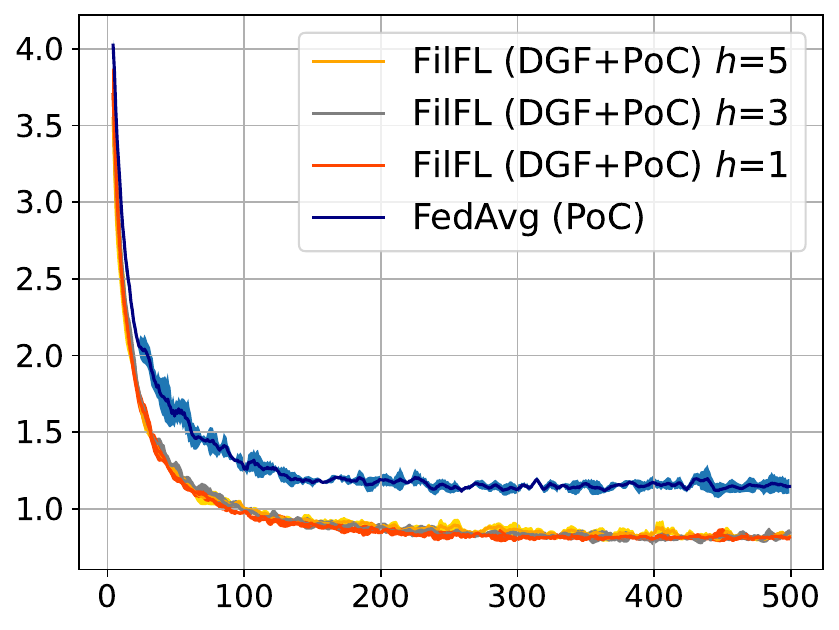}};
  \node[left=of img, node distance=0cm, rotate=90, anchor=center,yshift=-1.0cm] {\tiny Test Loss};
    \node[below=of img, node distance=0cm, rotate=0cm, anchor=center,yshift=3.4cm] {\tiny FEMNIST};
  \node[below=of img, node distance=0cm, rotate=0cm, anchor=center,yshift=1.0cm] {\tiny Round};
\end{tikzpicture}
\end{minipage}%
\begin{minipage}{0.25\textwidth}
\begin{tikzpicture}
  \node (img)  {\includegraphics[scale=0.2]{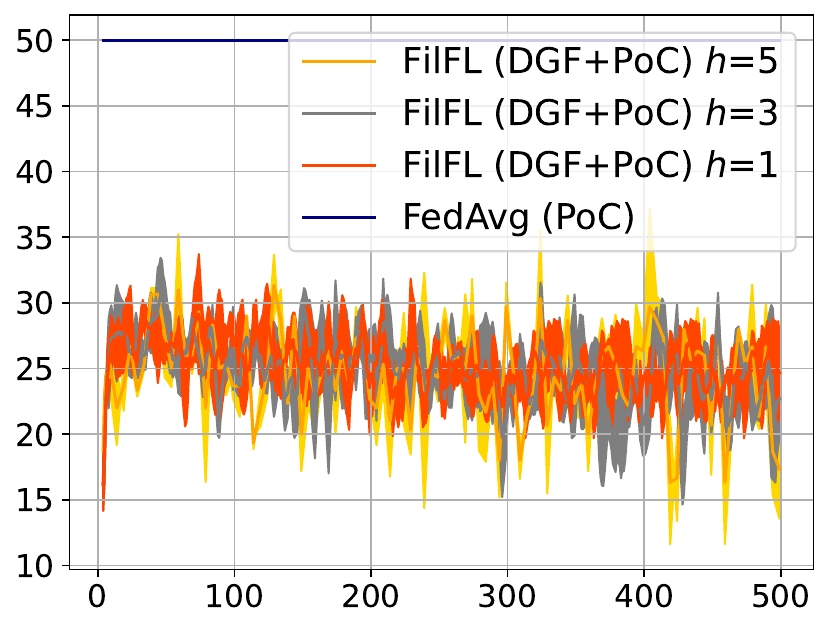}};
  \node[left=of img, node distance=0cm, rotate=90, anchor=center,yshift=-1.0cm] {\tiny $|\mathcal{S}^f|$};
    \node[below=of img, node distance=0cm, rotate=0cm, anchor=center,yshift=3.4cm] {\tiny FEMNIST};
  \node[below=of img, node distance=0cm, rotate=0cm, anchor=center,yshift=1.0cm] {\tiny Round};
\end{tikzpicture}
\end{minipage}%
\caption{FilFL (FedAvg + $\chi$GF + PoC) sensitivity to periodicity $h$ on FEMNIST dataset.}
\label{fig:app:periodsensitivityfemnist}
\end{figure}

\begin{figure}[H]
\begin{minipage}{0.25\textwidth}
\begin{tikzpicture}
  \node (img)  {\includegraphics[scale=0.2]{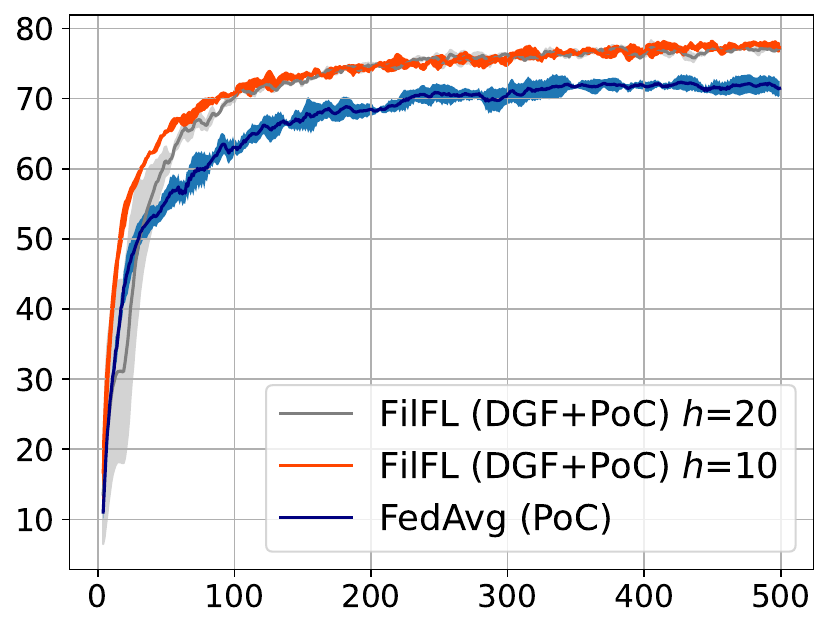}};
  \node[below=of img, node distance=0cm, rotate=0cm, anchor=center,yshift=3.4cm] {\tiny FEMNIST};
  \node[below=of img, node distance=0cm, rotate=0cm, anchor=center,yshift=1.0cm] {\tiny Round};
  \node[left=of img, node distance=0cm, rotate=90, anchor=center,yshift=-1.0cm] {\tiny Test Accuracy};
 \end{tikzpicture}
\end{minipage}%
\begin{minipage}{0.25\textwidth}
\begin{tikzpicture}
  \node (img)  {\includegraphics[scale=0.2]{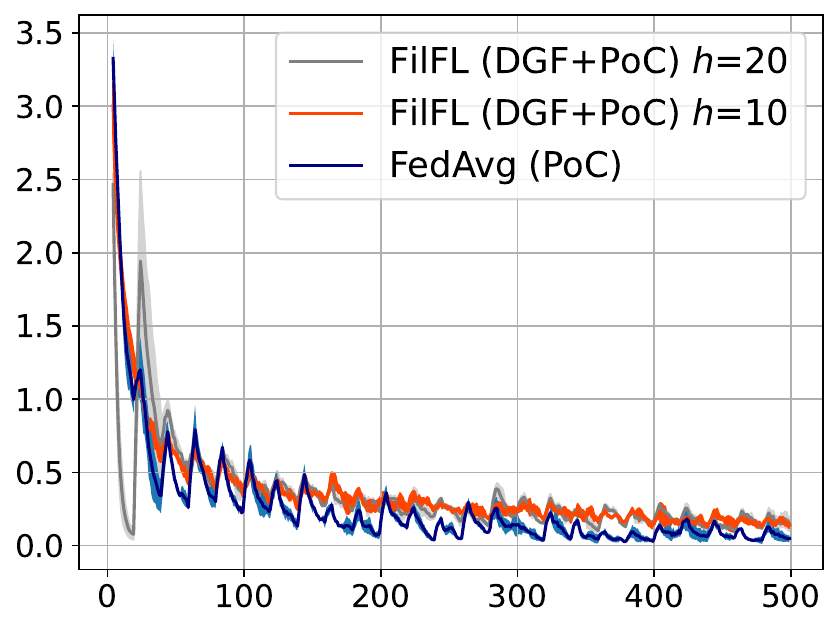}};
  \node[left=of img, node distance=0cm, rotate=90, anchor=center,yshift=-1.0cm] {\tiny Training Loss};
    \node[below=of img, node distance=0cm, rotate=0cm, anchor=center,yshift=3.4cm] {\tiny FEMNIST};
  \node[below=of img, node distance=0cm, rotate=0cm, anchor=center,yshift=1.0cm] {\tiny Round};
\end{tikzpicture}
\end{minipage}%
\begin{minipage}{0.25\textwidth}
\begin{tikzpicture}
  \node (img)  {\includegraphics[scale=0.2]{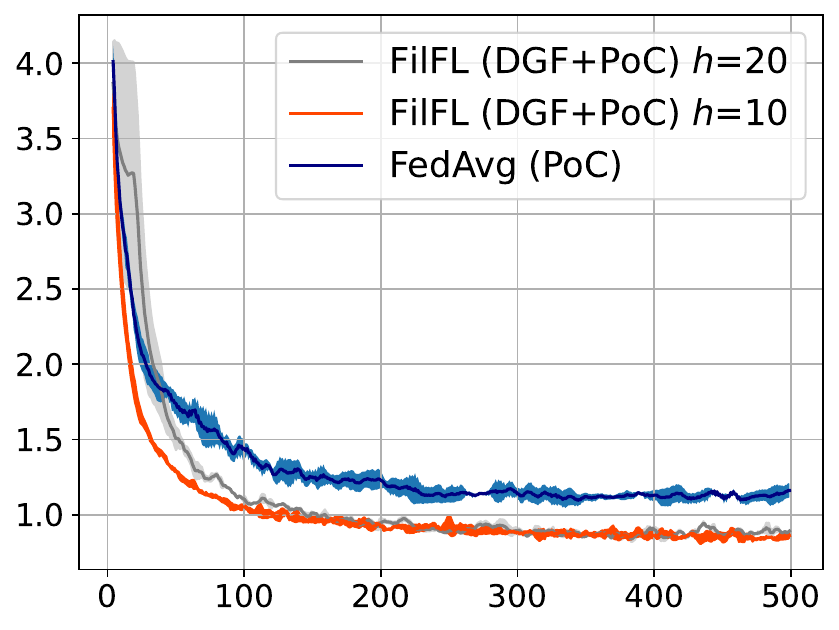}};
  \node[left=of img, node distance=0cm, rotate=90, anchor=center,yshift=-1.0cm] {\tiny Test Loss};
    \node[below=of img, node distance=0cm, rotate=0cm, anchor=center,yshift=3.4cm] {\tiny FEMNIST};
  \node[below=of img, node distance=0cm, rotate=0cm, anchor=center,yshift=1.0cm] {\tiny Round};
\end{tikzpicture}
\end{minipage}%
\begin{minipage}{0.25\textwidth}
\begin{tikzpicture}
  \node (img)  {\includegraphics[scale=0.2]{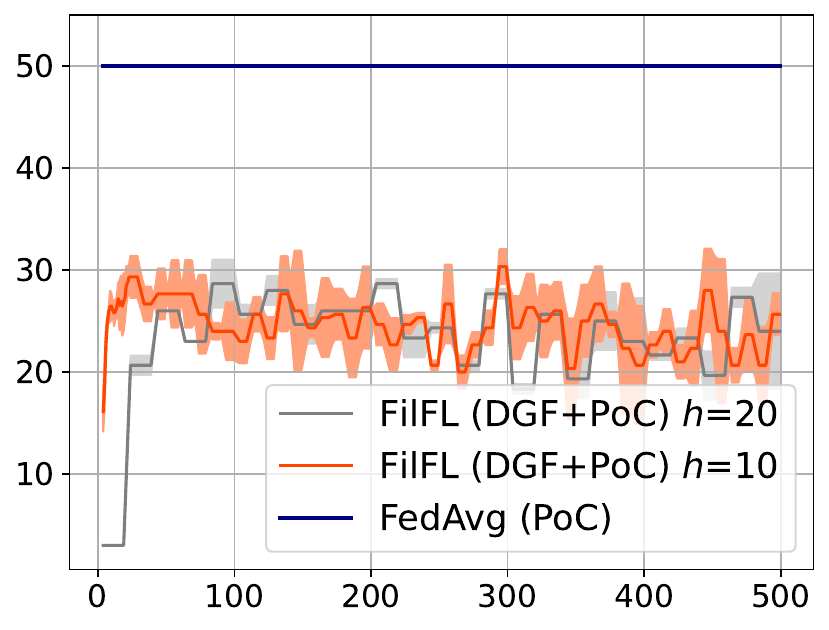}};
  \node[left=of img, node distance=0cm, rotate=90, anchor=center,yshift=-1.0cm] {\tiny $|\mathcal{S}^f|$};
    \node[below=of img, node distance=0cm, rotate=0cm, anchor=center,yshift=3.4cm] {\tiny FEMNIST};
  \node[below=of img, node distance=0cm, rotate=0cm, anchor=center,yshift=1.0cm] {\tiny Round};
\end{tikzpicture}
\end{minipage}%
\caption{FilFL (FedAvg + $\chi$GF + PoC) sensitivity to periodicity $h$ on FEMNIST dataset.}
\label{fig:periodsensitivityfemnist2}
\end{figure}

\subsection{FilFL Sensitivity to filtering Dataset Size $m$}
\label{appendix:pubdataset}
We evaluate the effectiveness of FilFL under different filtering datasets scenarios, showing its robustness across various sizes and distributions. In the Shakespeare experiment, we use small filtering datasets consisting of parts of this paper's introduction, containing only 34, 17, and 8 samples. Fig. \ref{fig:app:sizesensitivityshakespeare}, shows that FilFL remains effective even with tiny filtering datasets with different distributions than the clients' datasets. The left plot demonstrates higher test accuracy for FilFL than FedAvg, with a slight advantage for larger values of $m$.
The middle and right plots also reveal lower training loss for smaller $m$ and lower test loss for larger $m$, indicating that larger $m$ leads to better generalization. Hence, FilFL can perform well even with a few data points in the filtering dataset, even in distribution shifts, making our approach a versatile and robust method.

\begin{figure}[H]
\begin{minipage}{0.25\textwidth}
\begin{tikzpicture}
  \node (img)  {\includegraphics[scale=0.2]{figures/acc_shakespeare_dataset_size.pdf}};
  \node[below=of img, node distance=0cm, rotate=0cm, anchor=center,yshift=3.4cm] {\tiny Shakespeare};
  \node[below=of img, node distance=0cm, rotate=0cm, anchor=center,yshift=1.0cm] {\tiny Round};
  \node[left=of img, node distance=0cm, rotate=90, anchor=center,yshift=-1.0cm] {\tiny Test Accuracy};
 \end{tikzpicture}
\end{minipage}%
\begin{minipage}{0.25\textwidth}
\begin{tikzpicture}
  \node (img)  {\includegraphics[scale=0.2]{figures/train_loss_shakespeare_dataset_size.pdf}};
  \node[left=of img, node distance=0cm, rotate=90, anchor=center,yshift=-1.0cm] {\tiny Training Loss};
    \node[below=of img, node distance=0cm, rotate=0cm, anchor=center,yshift=3.4cm] {\tiny Shakespeare};
  \node[below=of img, node distance=0cm, rotate=0cm, anchor=center,yshift=1.0cm] {\tiny Round};
\end{tikzpicture}
\end{minipage}%
\begin{minipage}{0.25\textwidth}
\begin{tikzpicture}
  \node (img)  {\includegraphics[scale=0.2]{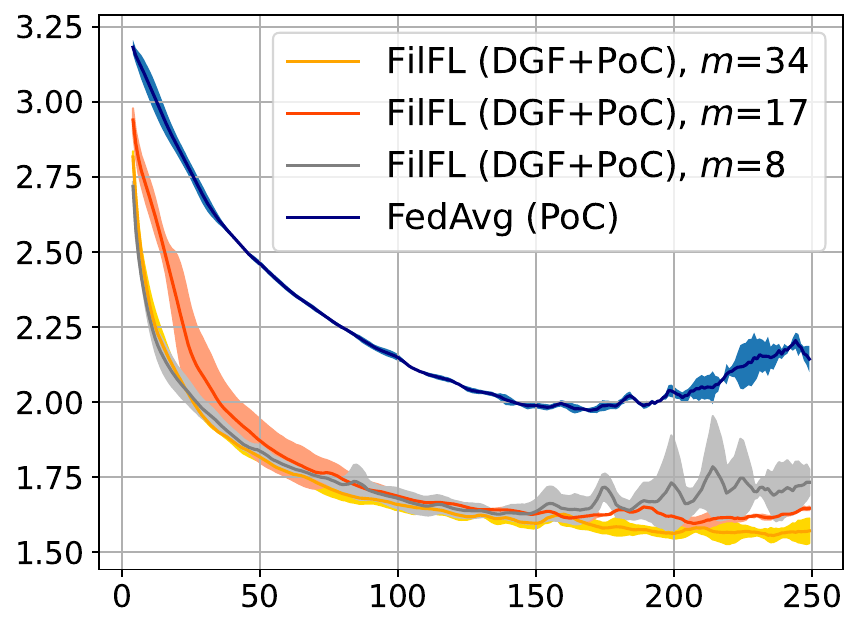}};
  \node[left=of img, node distance=0cm, rotate=90, anchor=center,yshift=-1.0cm] {\tiny Test Loss};
    \node[below=of img, node distance=0cm, rotate=0cm, anchor=center,yshift=3.4cm] {\tiny Shakespeare};
  \node[below=of img, node distance=0cm, rotate=0cm, anchor=center,yshift=1.0cm] {\tiny Round};
\end{tikzpicture}
\end{minipage}%
\begin{minipage}{0.25\textwidth}
\begin{tikzpicture}
  \node (img)  {\includegraphics[scale=0.2]{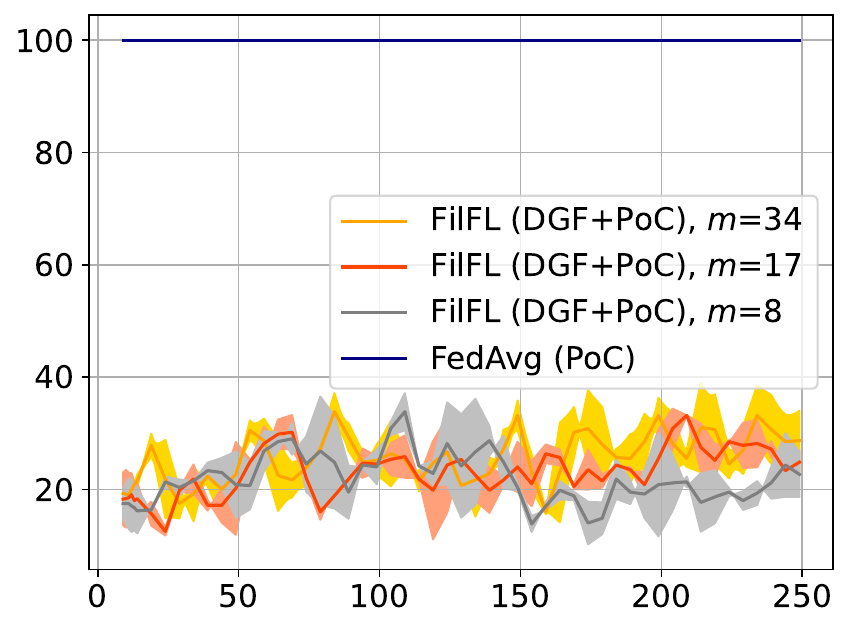}};
  \node[left=of img, node distance=0cm, rotate=90, anchor=center,yshift=-1.0cm] {\tiny $\mathcal{S}^f$};
    \node[below=of img, node distance=0cm, rotate=0cm, anchor=center,yshift=3.4cm] {\tiny Shakespeare};
  \node[below=of img, node distance=0cm, rotate=0cm, anchor=center,yshift=1.0cm] {\tiny Round};
\end{tikzpicture}
\end{minipage}%
\caption{FilFL (FedAvg with DGF) sensitivity to filtering dataset size $m$ on Shakespeare dataset.}
\label{fig:app:sizesensitivityshakespeare}
\end{figure}

In the FEMNIST experiment, we used filtering datasets with similar distributions to the clients, containing 500, 1000, and 2000 samples. Fig.~\ref{fig:sizesensitivityfemnist} shows that FilFL remains effective with the different sizes of the filtering dataset. All the plots demonstrate the effectiveness of FilFL compared to FedAvg across different values of $m$. Therefore, it is more efficient to use a small filtering dataset to reduce the computation cost of the oracle function while still preserving similar performance.

\begin{figure}[H]
\begin{minipage}{0.25\textwidth}
\begin{tikzpicture}
  \node (img)  {\includegraphics[scale=0.2]{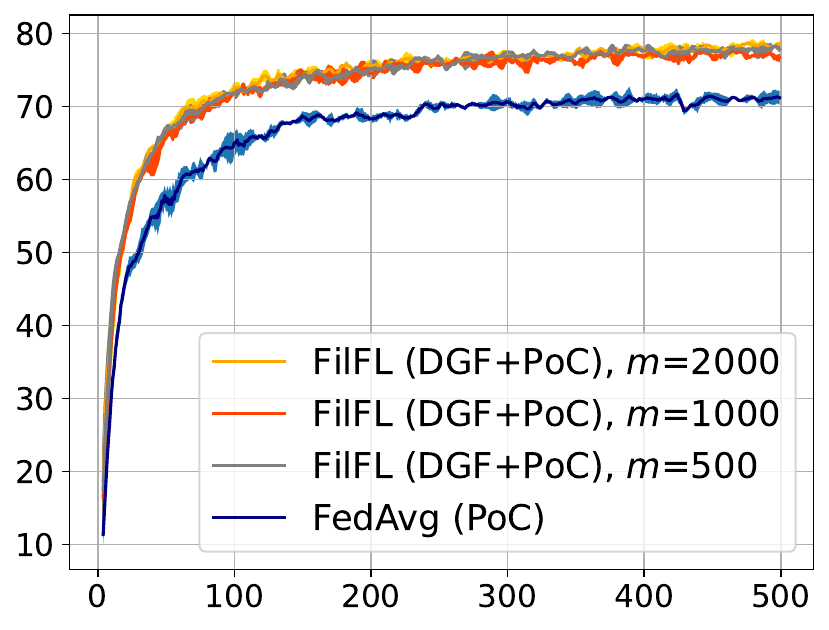}};
  \node[below=of img, node distance=0cm, rotate=0cm, anchor=center,yshift=3.4cm] {\tiny FEMNIST};
  \node[below=of img, node distance=0cm, rotate=0cm, anchor=center,yshift=1.0cm] {\tiny Round};
  \node[left=of img, node distance=0cm, rotate=90, anchor=center,yshift=-1.0cm] {\tiny Test Accuracy};
 \end{tikzpicture}
\end{minipage}%
\begin{minipage}{0.25\textwidth}
\begin{tikzpicture}
  \node (img)  {\includegraphics[scale=0.2]{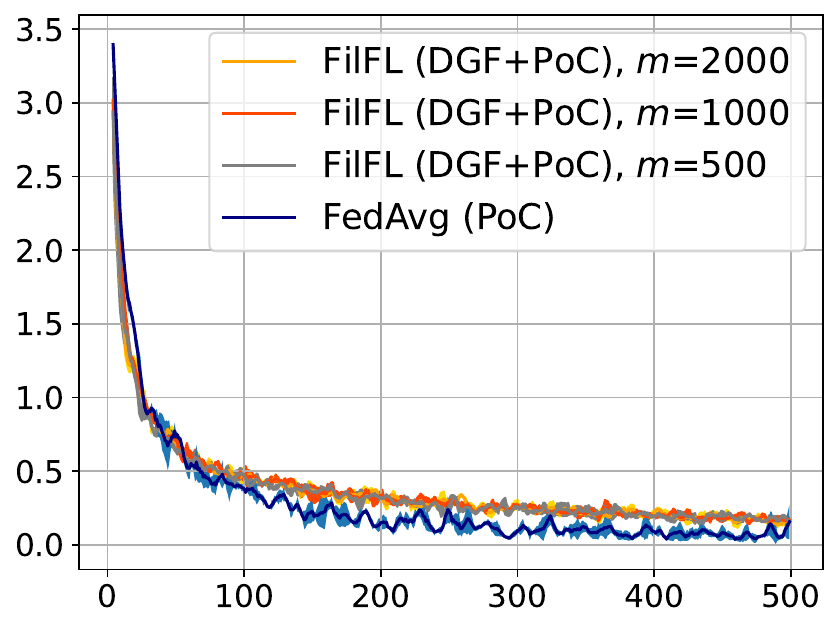}};
  \node[left=of img, node distance=0cm, rotate=90, anchor=center,yshift=-1.0cm] {\tiny Training Loss};
    \node[below=of img, node distance=0cm, rotate=0cm, anchor=center,yshift=3.4cm] {\tiny FEMNIST};
  \node[below=of img, node distance=0cm, rotate=0cm, anchor=center,yshift=1.0cm] {\tiny Round};
\end{tikzpicture}
\end{minipage}%
\begin{minipage}{0.25\textwidth}
\begin{tikzpicture}
  \node (img)  {\includegraphics[scale=0.2]{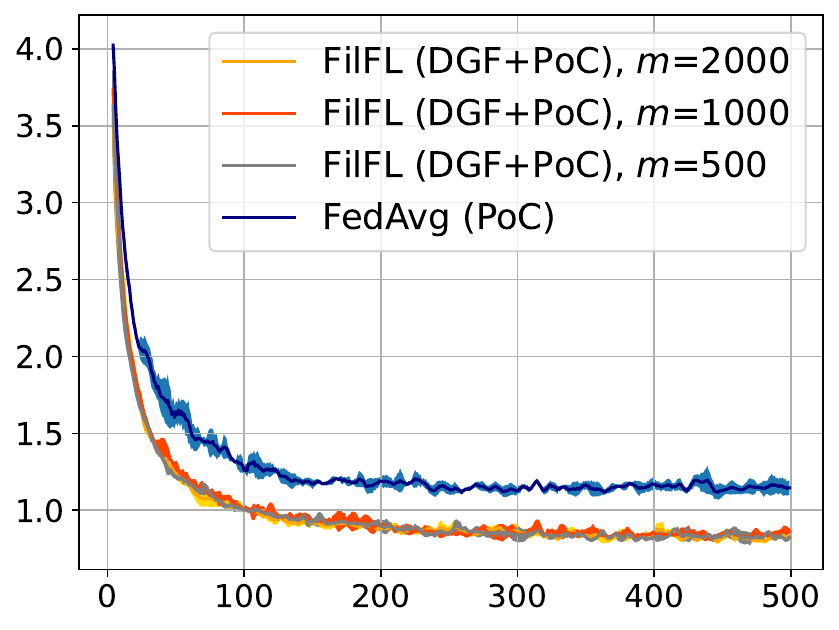}};
  \node[left=of img, node distance=0cm, rotate=90, anchor=center,yshift=-1.0cm] {\tiny Test Loss};
    \node[below=of img, node distance=0cm, rotate=0cm, anchor=center,yshift=3.4cm] {\tiny FEMNIST};
  \node[below=of img, node distance=0cm, rotate=0cm, anchor=center,yshift=1.0cm] {\tiny Round};
\end{tikzpicture}
\end{minipage}%
\begin{minipage}{0.25\textwidth}
\begin{tikzpicture}
  \node (img)  {\includegraphics[scale=0.2]{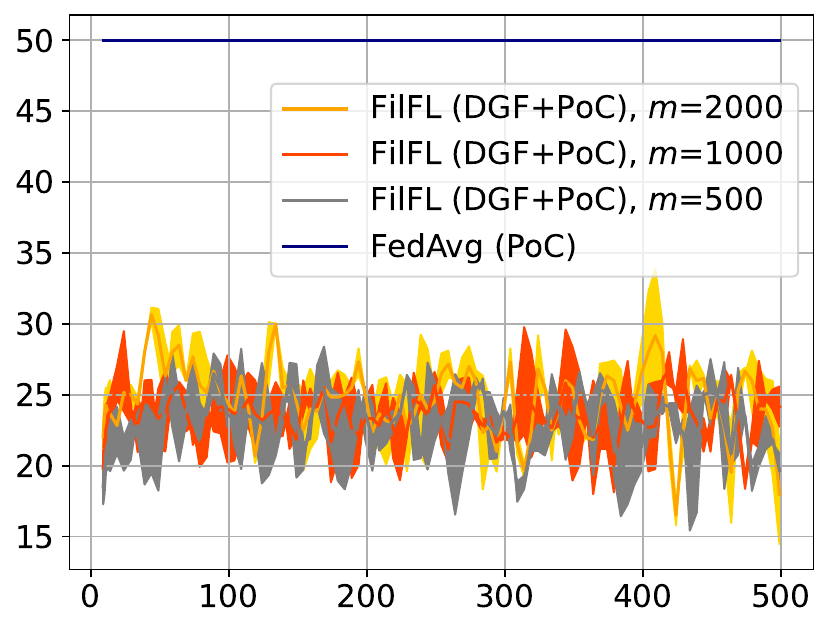}};
  \node[left=of img, node distance=0cm, rotate=90, anchor=center,yshift=-1.0cm] {\tiny $|\mathcal{S}^f|$};
    \node[below=of img, node distance=0cm, rotate=0cm, anchor=center,yshift=3.4cm] {\tiny FEMNIST};
  \node[below=of img, node distance=0cm, rotate=0cm, anchor=center,yshift=1.0cm] {\tiny Round};
\end{tikzpicture}
\end{minipage}%
\caption{FilFL (FedAvg with DGF) sensitivity to filtering dataset size $m$ on FEMNIST dataset.}
\label{fig:sizesensitivityfemnist}
\end{figure}

Hence, FilFL can perform well even with a few data points in the filtering dataset, even in distribution shifts, making our approach a versatile and robust method.

\subsection{FilFL (FedAvg with RGF and PoC) vs FedAvg (PoC) with a Variable Filtering Dataset}
\label{appendix:stoch}

We evaluate the effect of using a stochastic variable dataset for client filtering. Instead of solving the filtering objective on a central dataset, possibly on a subset of the server validation dataset or one single client throughout the training, we consider the case of randomly selecting a client from the available clients to perform the \textit{client filtering} task. The chosen client performs \textit{client filtering} on its own validation dataset. Therefore, the filtering dataset becomes variable depending on the chosen client in that round. Our results demonstrate that FilFL, using RGF, even in such a stochastic scenario, achieves significantly better performance than FedAvg. In particular, as depicted in Fig. \ref{fig:stoch_cifar10}, FilFL accomplishes accelerated training and attains approximately 10 percentage points higher test accuracy than FedAvg.

\begin{figure}[H]
\centering
\begin{minipage}{0.32\textwidth}
\begin{tikzpicture}
  \node (img)  {\includegraphics[scale=0.24]{figures/cifar10_PoC_acc_client.pdf}};
  \node[below=of img, node distance=0cm, rotate=0cm, anchor=center,yshift=3.8cm] {\tiny CIFAR-10};
  \node[below=of img, node distance=0cm, rotate=0cm, anchor=center,yshift=1.0cm] {\tiny Round};
  \node[left=of img, node distance=0cm, rotate=90, anchor=center,yshift=-1.0cm] {\tiny Test Accuracy};
 \end{tikzpicture}
\end{minipage}%
\begin{minipage}{0.32\textwidth}
\begin{tikzpicture}
  \node (img)  {\includegraphics[scale=0.24]{figures/cifar10_PoC_train_loss_client.pdf}};
  \node[left=of img, node distance=0cm, rotate=90, anchor=center,yshift=-1.0cm] {\tiny Training Loss};
    \node[below=of img, node distance=0cm, rotate=0cm, anchor=center,yshift=3.8cm] {\tiny CIFAR-10};
  \node[below=of img, node distance=0cm, rotate=0cm, anchor=center,yshift=1.0cm] {\tiny Round};
\end{tikzpicture}
\end{minipage}%
\begin{minipage}{0.32\textwidth}
\begin{tikzpicture}
  \node (img)  {\includegraphics[scale=0.24]{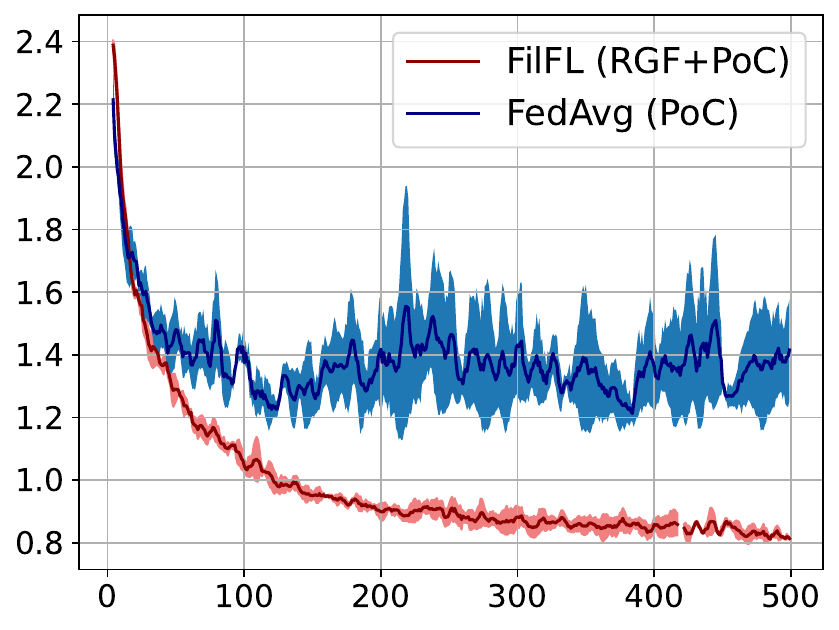}};
  \node[left=of img, node distance=0cm, rotate=90, anchor=center,yshift=-1.0cm] {\tiny Test Loss};
    \node[below=of img, node distance=0cm, rotate=0cm, anchor=center,yshift=3.8cm] {\tiny CIFAR-10};
  \node[below=of img, node distance=0cm, rotate=0cm, anchor=center,yshift=1.0cm] {\tiny Round};
\end{tikzpicture}
\end{minipage}%
\caption{FilFL (FedAvg + RGF + PoC) vs FedAvg (PoC) without filtering on CIFAR-10 dataset.}
\label{fig:stoch_cifar10}
\end{figure}





\end{document}